\pdfoutput=1
\documentclass[11pt]{article}

\usepackage[letterpaper,margin=1in]{geometry}
\usepackage[round]{natbib}
\usepackage[utf8]{inputenc}
\usepackage[T1]{fontenc}
\usepackage{amsmath,amssymb,amsthm,mathtools}
\usepackage{bm}
\usepackage{enumitem}
\usepackage{array,booktabs,longtable}
\usepackage{nicefrac}
\usepackage{microtype}
\usepackage{xcolor}
\usepackage{graphicx}
\usepackage{float}
\usepackage{hyperref}
\usepackage{url}
\usepackage{cleveref}
\hypersetup{
  colorlinks=true,
  linkcolor=blue,
  citecolor=blue,
  urlcolor=blue
}

\theoremstyle{plain}
\newtheorem{theorem}{Theorem}[section]
\newtheorem{proposition}[theorem]{Proposition}
\newtheorem{lemma}[theorem]{Lemma}

\theoremstyle{definition}

\newtheorem{remark}[theorem]{Remark}

\crefname{assumption}{assumption}{assumptions}
\Crefname{assumption}{Assumption}{Assumptions}

\DeclareMathOperator{\tr}{tr}
\DeclareMathOperator{\Cov}{Cov}
\DeclareMathOperator{\Var}{Var}

\newcommand{\E}{\mathbb{E}}
\newcommand{\R}{\mathbb{R}}
\newcommand{\N}{\mathcal{N}}
\newcommand{\Prob}{\mathbb{P}}
\newcommand{\dx}{d_x}
\newcommand{\dy}{d_y}
\newcommand{\dop}{d_{\oplus}}
\newcommand{\Id}[1]{I_{#1}}
\newcommand{\T}{^{\!\top}}
\newcommand{\op}{\mathrm{op}}
\newcommand{\Wa}{\frac{W_1}{\sqrt{2\dx}}}
\newcommand{\WaT}{\frac{W_1\T}{\sqrt{2\dx}}}
\newcommand{\Wcat}{W^{\oplus}}
\newcommand{\Wop}{\frac{W^{\oplus}}{\sqrt{2\dx}}}
\newcommand{\WopT}{\frac{(W^{\oplus})\T}{\sqrt{2\dx}}}
\newcommand{\Zcat}{Z^{\oplus}}
\newcommand{\Zop}{\frac{Z^{\oplus}}{\sqrt n}}
\newcommand{\ZopT}{\frac{(Z^{\oplus})\T}{\sqrt n}}
\newcommand{\Eop}{\frac{v_0\Omega}{\sqrt n}}
\newcommand{\EopT}{\frac{v_0\Omega\T}{\sqrt n}}
\newcommand{\Cone}{\mathcal{C}_1}
\newcommand{\Ctwo}{\mu_1\Tdet}
\newcommand{\CtwoT}{\mu_1\Tdet\T}
\newcommand{\Tdet}{T}

\newcommand{\Kop}{\mathcal{K}^{\oplus}}
\newcommand{\Aop}{\mathcal{A}}
\newcommand{\Gt}{\mathcal{G}}
\newcommand{\Kbar}{\overline{K}}

\newcommand{\Ltest}{\mathcal{L}_{\mathrm{test}}^{t}}
\newcommand{\Ltrain}{\mathcal{L}_{\mathrm{train}}^{t}}
\newcommand{\Anorm}{\|\bar A\|_F^2}%
\newcommand{\Lridge}{\mathcal{L}_{\mathrm{ridge}}^{t}}

\newcommand{\ME}{\mathrm{ME}}
\newcommand{\PV}{\mathrm{PV}}
\newcommand{\RA}{\mathrm{RA}}
\newcommand{\RE}{\mathrm{RE}}
\newcommand{\Utilde}{\widetilde{U}}
\newcommand{\Vtilde}{\widetilde{V}}
\newcommand{\Ahat}{\widehat{A}_{\lambda}}
\newcommand{\act}{\varrho}
\newcommand{\Lpen}{L^{\mathrm{B}}}
\newcommand{\Bpen}{\mathcal{B}^{\mathrm{B}}}
\newcommand{\etapen}{\eta^{\mathrm{B}}}

\title{Memorization and Malign Generalization in Conditional Diffusion Models with Random Features}
\author{
  Gwangho Kim\\
  Department of Computer Science\\
  Hanyang University, Seoul\\
  \texttt{ggh1999@hanyang.ac.kr}
  \and
  Sungyoon Lee\\
  Department of Computer Science\\
  Hanyang University, Seoul\\
  \texttt{sungyoonlee@hanyang.ac.kr}
}
\date{}

\begin{document}
\maketitle

\begin{abstract}
Conditional diffusion models generate diverse, novel, and high-quality samples under prescribed conditions. 
However, theoretical understanding of their memorization and generalization remains limited, while recent works have characterized these behaviors primarily in unconditional settings. 
In this work, we analyze a random-feature conditional score model in the high-dimensional proportional limit, deriving asymptotic expressions for training and test losses. 
By decomposing the test loss, we show that in the overparameterized regime, increasing model width improves prediction of the condition-dependent mean while reducing within-condition prediction variance, a phenomenon we term ``malign generalization.'' 
Furthermore, analyzing the training loss reveals that more informative conditions lead to memorization of training samples at smaller widths. 
These theoretical findings are supported by experiments with U-Net architectures on realistic data.
\end{abstract}

\section{Introduction}
\label{sec:introduction}
Diffusion models \citep{sohl2015deep,ho2020denoising,song2021scorebased}
have become a widely adopted paradigm for generative modeling across multiple
modalities \citep{rombach2022high,zhang2023survey,ho2022video,hu2026elf}.
Conditional diffusion models further enable generation under prescribed conditions, supporting applications in text-to-image synthesis \citep{saharia2022photorealistic}, molecule generation \citep{hoogeboom2022equivariant}, protein design \citep{watson2023novo}, PDE forecasting and data assimilation \citep{shysheya2024conditional}, and uncertainty quantification \citep{han2022card}.

\begin{figure}[!t]
  \centering
  \includegraphics[width=\textwidth]{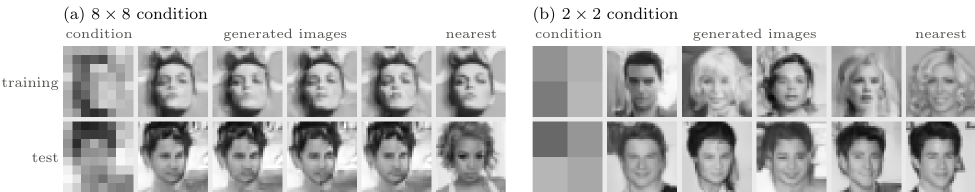}
  \caption{CelebA super-resolution to $32\times32$; each row shows a condition, four generated samples, and the training image nearest to any of them. Models in (a) and (b) are trained identically except for condition resolution. (a) With the informative $8\times8$ condition, samples for a training condition copy its training image, while samples for a test condition are novel and aligned with the condition but nearly identical (malign generalization). (b) With the weaker $2\times2$ condition, samples are diverse under both training and test conditions.}
  \label{fig:celeba-teaser}
\end{figure}

Understanding how diffusion models generalize beyond their training data is
of considerable interest from both practical and theoretical perspectives. From a practical
perspective, characterizing when memorization occurs and how to mitigate it is
crucial for addressing privacy and copyright concerns
\citep{carlini2023extracting,somepalli2023diffusion,
somepalli2023understanding,webster2023reproducible,
wen2024detecting,kim2026localizing}. From a theoretical perspective, explaining such generalization is nontrivial because the minimizer of the empirical denoising score matching (DSM) objective over unrestricted score functions is the \textit{empirical score function (ESF)}:
\begin{equation*}
    s_t^{\mathrm{emp}}(x)
    := \nabla_x \log p_t^{\mathrm{emp}}(x),
\end{equation*}
where $p_t^{\mathrm{emp}}$ is the forward-noised empirical distribution at diffusion time $t$, with $p_0^{\mathrm{emp}}(x):=\frac1n\sum_{i=1}^n\delta(x-x_i)$ for training samples $\{x_i\}_{i=1}^n$.
The ideal reverse process driven by this score recovers the empirical data distribution and therefore reproduces training samples rather than generating novel ones \citep{pidstrigach2022score}. Recent studies have investigated mechanisms by which practical score estimators deviate from the ESF and nevertheless generate novel, high-quality samples
\citep{scarvelis2025closedform,wu2025taking,bonnaire2025,
george2026,kadkhodaie2024generalization,kamb25a,
niedoba25a,lukoianov2026locality}.

Despite this progress, memorization and generalization in conditional diffusion
models remain largely unexplored. Consider a training set
$\mathcal{D}_n=\{(x_i,y_i)\}_{i=1}^n$ of pairs drawn i.i.d.\ from a joint distribution
$P_{xy}$, where $y_i$ is the condition paired with the sample $x_i$. Throughout this paper,
we use the term \textit{conditional diffusion model} to refer to settings satisfying the
following two properties:
\begin{itemize}
    \item Each training condition $y_i$ is distinct and paired with a single
    sample $x_i$.
    Accordingly, the empirical conditional distribution at $y_i$ is
    $p_0^{\mathrm{emp}}(x\mid y_i)=\delta(x-x_i)$. After forward noising, its
    conditional ESF at diffusion time $t$ is
    \begin{equation*}
        s_t^{\mathrm{emp}}(x,y_i)
            := \nabla_x \log p_t^{\mathrm{emp}}(x\mid y_i)
            = -\frac{x-a_t x_i}{h_t},
    \end{equation*}
    with $a_t,h_t$ defined in~\eqref{eq:ou}.
    Consequently, at the training conditions, the training DSM loss is, up to
    the factor $h_t$, exactly the mean-squared error between the learned score
    and the conditional ESF.

    \item At test time, the model is evaluated at the conditions $y$ of test pairs $(x,y)$ drawn i.i.d.\ from $P_{xy}$, so $y\notin\{y_i\}_{i=1}^n$ and it should generalize to them.
\end{itemize}
Both properties hold almost surely for continuously distributed conditions, and approximately in many practical applications such as text-to-image generation.
Standard class-conditional models satisfy neither property, since each class is shared by many training samples and only training conditions are used at test time. Consequently, prior work on unconditional diffusion models or class-conditional models may not directly apply to the conditional setting considered here.

This raises two questions: (i) At test conditions, do overparameterized
conditional models learn the conditional distribution, or only part of it?
(ii) How does the information carried by the condition affect memorization of training samples paired with training conditions?
Our contributions addressing them are as follows:
\begin{itemize}
    \item Section~\ref{sec:data-model} introduces a data model that is simple and
    tractable yet captures key phenomena of conditional diffusion models in
    realistic settings.
    \item Section~\ref{sec:test-decomposition} derives asymptotically precise
    expressions for the test loss of a random-feature
    network~\citep{rahimi2007random} in the high-dimensional proportional regime,
    and its decomposition into conditional-mean and residual errors. Answering (i),
    in the overparameterized regime wider models predict the conditional mean more
    accurately while their prediction variance falls below that of the true
    denoiser, which we term \emph{malign generalization}.
    \item Section~\ref{sec:train-replication} examines the training loss. Answering (ii), a more informative
    condition lowers it at every width, so memorization sets in at
    smaller widths.
    \item Section~\ref{sec:experiments} shows both phenomena in full reverse-process
    generation with the random-feature model and with U-Nets on CelebA.
\end{itemize}

\subsection{Related work}
\label{sec:related-work}
\paragraph{Distribution estimation.}
One line of theory bounds the sampling error given an $L^2$-accurate score
\citep{lee2022convergence, chen2022sampling, lee2023convergence, benton2024nearly};
another bounds the score-estimation error itself, giving end-to-end rates for smooth
densities and linear subspaces \citep{oko2023diffusion, chen2023score}, later extended to
nonlinear manifolds \citep{tang24a, azangulov2024convergence}.
Conditional counterparts cover reward-directed conditioning \citep{yuan2023reward}, classifier-free guidance \citep{fu2024unveil}, manifold-supported data \citep{tang2024conditional}, and conditional transformers \citep{hu2025statistical}.
These data models satisfy both properties in \Cref{sec:introduction}, but the results are non-asymptotic bounds or minimax rates at fixed dimension; they do not describe the proportional regime we study, nor the memorization that arises once capacity exceeds the sample size.

\paragraph{Memorization and generalization.}
A complementary line asks why trained networks depart from the ESF at all.
Proposed mechanisms include smoothing of the empirical score
\citep{scarvelis2025closedform, chen2026interpolation, farghly2026diffusion}, implicit
regularization from large learning rates \citep{wu2025taking},
and inductive biases of the network such as locality
\citep{kadkhodaie2024generalization, kamb25a, niedoba25a, lukoianov2026locality}.
A second strand asks when the departure occurs, identifying transitions between memorization and
generalization that are governed by sample size and model capacity
\citep{zhang2023emergence, buchanan2026edge, ye2026provable},
and bounding the generalization gap in terms of both \citep{li2023generalization}.
Closest to our setting, \citet{george2026} derive precise learning curves for denoising score matching with random features in the proportional limit; \citet{george2026asymptotic} extend them to manifold data and \citet{urfin2026double} to an arbitrary finite number of noise samples per data point.
\citet{merger2025generalization} analyze linear diffusion models on Gaussian data with power-law covariance spectra.
\citet{bonnaire2025} and \citet{latourelle2026generalization} analyze the training dynamics of random-feature and kernel models, respectively, and show that finite training time acts as an implicit regularization that delays memorization.
These results suggest that overfitting is not benign in diffusion models, and \citet{farghly2026benign} show that this holds in more general settings. 
All of these analyses are unconditional, so neither of the two properties in \Cref{sec:introduction} holds.
We extend the random-feature analysis of \citet{george2026} to conditional diffusion models satisfying both properties.
\section{Preliminaries}
\label{sec:preliminaries}
In this section, we briefly review diffusion models and fix the notation used throughout the paper.
\paragraph{Diffusion process.}
Let $P(\cdot\mid y)$ denote the conditional distribution of $x\in\mathbb{R}^{d_x}$ given the
condition $y\in\mathbb{R}^{d_y}$ under $P_{xy}$. Given $y$, the Ornstein--Uhlenbeck process
$dX_t=-X_t\,dt+\sqrt{2}\,dB_t$ with $X_0\sim P(\cdot\mid y)$ and a standard Wiener process
$B_t$ has conditional marginals $P_t(\cdot\mid y)$, with densities $p_t(\cdot\mid y)$, that
converge to $\N(0,\Id{\dx})$ as $t\to\infty$. For a clean sample $x\sim P(\cdot\mid y)$,
\begin{equation}
a_t := e^{-t},\qquad h_t := 1-e^{-2t},\qquad
x_t(\epsilon) := a_t x+\sqrt{h_t}\,\epsilon,\qquad
\epsilon\sim\N(0,\Id{\dx}),
\label{eq:ou}
\end{equation}
with $\epsilon$ independent of $(x,y)$. New samples are generated by running, up to a large
time $T$, the reverse-time process
\begin{equation}
d\overleftarrow{X}_t
=\big[\overleftarrow{X}_t
+2\nabla\log p_{T-t}(\overleftarrow{X}_t\mid y)\big]dt
+\sqrt{2}\,d\bar B_t
\label{eq:reverse-sde}
\end{equation}
from $\overleftarrow{X}_0\sim P_T(\cdot\mid y)$, where $\bar B_t$ is a standard Wiener process
and the unknown conditional score $\nabla\log p_t(\cdot\mid y)$ is replaced by a network
$s_\theta^t(x_t,y)$. By Tweedie's formula, $\E[x\mid x_t,y]=\{x_t+h_t\nabla\log p_t(x_t\mid y)\}/a_t$,
so $s_\theta^t$ induces the denoiser
\begin{equation}
x_\theta^t(x_t,y):=\frac{x_t+h_t\,s_\theta^t(x_t,y)}{a_t}.
\label{eq:denoiser}
\end{equation}

\paragraph{Conditional score matching.}
Following prior work \citep{cui2024analysis, george2026, bonnaire2025}, we study score learning at a fixed diffusion time $t$, with a specific focus on the conditional setting. We do not consider learning the unconditional score with a null condition, as used in classifier-free guidance \citep{ho2022classifier}.

Let $\mathcal{D}_n=\{(x_i,y_i)\}_{i=1}^n$ consist of $n$ i.i.d. data points $(x_i,y_i)\sim P_{xy}$.
Writing $x_{i,t}(\epsilon):=a_tx_i+\sqrt{h_t}\epsilon$, the training loss for conditional diffusion models is
\begin{equation*}
\Ltrain(\theta)
\coloneqq \frac{1}{\dx n}\sum_{i=1}^n\E_\epsilon
\left\|\sqrt{h_t}\,s_\theta^t(x_{i,t}(\epsilon),y_i)+\epsilon\right\|^2.
\end{equation*}

Unlike in unconditional models, where the training objective differs from the empirical score error by additive constants, here the training loss is, up to the factor $h_t$, the mean-squared error with respect to the conditional ESF, the score of
$P^{\mathrm{emp}}_t(\cdot\mid y_i)=\N(a_tx_i,h_t\Id{\dx})$, as noted in Section~\ref{sec:introduction}. Integrated over time, it
bounds how far the generated distribution at the training conditions is from this
empirical distribution. As in \citet{song2021maximum}, let $\widehat P_t(\cdot\mid y_i)$
be the marginals of the backward process~\eqref{eq:reverse-sde} with the learned score,
started from $\widehat P_T\approx P^{\mathrm{emp}}_T$. Then,
for $0<t_0<T$ and up to this mismatch at time $T$, Girsanov's theorem \citep{oksendal2013stochastic} gives
\begin{equation*}
\frac1n\sum_{i=1}^n\mathrm{KL}\bigl(P^{\mathrm{emp}}_{t_0}(\cdot\mid y_i)\,\big\|\,\widehat P_{t_0}(\cdot\mid y_i)\bigr)
\le\dx\int_{t_0}^T\frac{\Ltrain}{h_t}\,dt,
\end{equation*}
and the total variation distance is bounded through Pinsker's inequality. A small
training loss along the reverse chain thus forces the samples for $y_i$ towards $x_i$.
We therefore use the training loss as a measure of memorization at the training conditions,
and illustrate it at a fixed small $t$, since the weight $1/h_t$ makes small times dominate
this bound.

Moreover, the forward process adds noise only to $x_i$, while $y_i$ remains unchanged. 
This introduces a qualitative distinction from unconditional settings, especially given that diffusion noise plays a central role in preventing benign overfitting \citep{farghly2026benign}.

\section{Problem setup}
In this section, we describe our theoretical setting and define the quantities of interest analyzed in Section~\ref{sec:theorem}.
\label{sec:setting}
\subsection{Data model}
\label{sec:data-model}
Let $U_\star\in\R^{\dx\times\dy}$ have orthonormal columns, with
$0<\dy\le\dx$, and put $\Pi:=U_\star U_\star\T$ and
$\Pi_\perp:=\Id{\dx}-\Pi$. For independent standard Gaussian vectors
$y\in\R^{\dy}$ and $\xi\in\R^{\dx}$, consider
\begin{equation}
x=By+\Sigma^{1/2}\xi,\qquad B=cU_\star,\qquad
\Sigma=(1-c^2)\Pi+\Pi_\perp,\qquad 0\le c\le1.
\label{eq:data-model}
\end{equation}
Thus, $x \mid y \sim \mathcal{N}(cU_\star y,\Sigma)$, while the marginal
distribution $x \sim \mathcal{N}(0,I_{d_x})$ remains unchanged
regardless of $c$ and $d_y$. The condition explains variance $c^2$ along each
observed direction, in the range of $\Pi$, leaving residual variance $1-c^2$ there and
variance $1$ on each unobserved direction, in its orthogonal complement. The parameter $c$ controls the
accuracy of the condition, whereas $\psi_y:=\lim\dy/\dx\in(0,1]$ controls
the fraction of observed directions. These are distinct ways to strengthen
the condition at a fixed marginal of $x$.

Two special cases are standard inverse problems. At $\dy=\dx$ and
$U_\star=I$, the isotropic model $B=cI$, $\Sigma=(1-c^2)I$ is Gaussian denoising:
equivalently, $y=cx+\sqrt{1-c^2}\,\xi'$ with $\xi'\sim\N(0,\Id{\dx})$
independent of $x$.
At $c=1$, $y=U_\star\T x$ is a noiseless observation of $\dy$ orthonormal
directions of $x$, an idealized model of super-resolution. We define the
signal-to-noise ratio of the condition as the ratio of the variance of $x$ it
explains to the residual variance,
\begin{equation}
\mathrm{SNR}(\psi_y,c):=\lim\frac{\tr(BB\T)}{\tr\Sigma}=\frac{c^2\psi_y}{1-c^2\psi_y},
\label{eq:SNR}
\end{equation}
so that $\mathrm{SNR}(1,c)=c^2/(1-c^2)$ in Gaussian denoising and
$\mathrm{SNR}(\psi_y,1)=\psi_y/(1-\psi_y)$ in super-resolution. In the
isotropic case we write $\mathrm{SNR}$ for $\mathrm{SNR}(1,c)$.

With $\eta_t:=x_t-a_tBy$ and $\Sigma_t:=a_t^2\Sigma+h_tI$, the forward distribution is
\begin{equation*}
x_t\mid y\sim\N(a_tBy,\Sigma_t),\quad
\nabla\log p_t(x_t\mid y)=-\Sigma_t^{-1}\eta_t.
\label{eq:testdraw}
\end{equation*}
We fix $t>0$, so $\Sigma_t$ is invertible even at $c=1$.
Write $\upsilon_{\Sigma,d}^2:=\dx^{-1}\tr\Sigma=1-c^2\dy/\dx$ and
$\upsilon_\Sigma^2:=1-c^2\psi_y$ for its limit.

\subsection{Random-feature model}
\label{sec:rf-score-learning}

We approximate the conditional score with the two-layer random-feature model \citep{rahimi2007random}
\begin{equation}
s_A(x,y\mid W_1,W_2) := \frac{A}{\sqrt p}\,
\act\!\left(\frac{W_1x}{\sqrt{2\dx}}+\frac{W_2y}{\sqrt{2\dy}}\right).
\label{eq:model}
\end{equation}

Here $\act$ acts coordinatewise. The normalizations by $\sqrt{2d_x}$ and $\sqrt{2d_y}$ ensure that the preactivation has unit variance. The hidden weights
$W_1\in\R^{p\times \dx}$ and $W_2\in\R^{p\times \dy}$ have independent standard Gaussian
entries, are drawn independently of the training sample, and remain fixed; only
$A\in\R^{\dx\times p}$ is trained. We work in the proportional high-dimensional limit
$n/\dx\to\psi_n$, $p/\dx\to\psi_p$, and $\dy/\dx\to\psi_y$.
At fixed $t$, we consider the minimizer of the ridge-regularized DSM loss:
\begin{equation}
\Lridge(A)
:=\Ltrain(A)+\frac{h_t\lambda}{\dx p}\|A\|_F^2,
\quad \lambda>0 .
\label{eq:loss}
\end{equation}

\paragraph{Activation assumption.}
For $z\sim\N(0,1)$, assume that $\act(z)$ admits a Hermite expansion in
$L^2$. Following \citet{bonnaire2025,george2026}, we assume
$\E_z[\act(z)]=0$ and define
$\mu_1:=\E_z[z\act(z)]$.

\paragraph{Proof techniques.}
Following the overall strategy of \citet{george2026}, our analysis uses two main ingredients: the Gaussian equivalence principle (GEP) \citep{gerace2020generalisation, goldt2022gaussian, mei2022generalization} and the
theory of linear pencils \citep{bodin2024}. We use the GEP to replace the
nonlinear functions of the training preactivations by their Gaussian equivalent
(Appendix~\ref{sec:hermite}). To evaluate the resulting resolvent traces, we
construct a six-block linear pencil and apply \cite[Result~3.1]{bodin2024}.
\citet{george2026} use a specialized instance of the same framework, whereas our conditional setting requires its general block formulation.

\subsection{Quantities of interest}
\label{sec:quantities-of-interest}
Let $\Ahat$ denote the minimizer of $\Lridge$.
While optimization is performed on the regularized objective, we focus on the unregularized empirical DSM loss $\Ltrain(\Ahat)$ alongside the test DSM loss:
\begin{equation}
\Ltest(A) \coloneqq \frac{1}{\dx} \E \left\| \sqrt{h_t} \, s_A(x_t(\epsilon), y) + \epsilon \right\|^2,
\label{eq:testerr}
\end{equation}
where the expectation is taken over the diffusion noise $\epsilon$ and an independent test pair $(x, y) \sim P_{xy}$.

To provide further insights into $\Ltest(A)$, we express it in terms of the denoiser $x_A := (x_t + h_t s_A(x_t, y)) / a_t$ induced by $s_A$ via \eqref{eq:denoiser}. Specifically, the test loss admits the orthogonal decomposition:
\begin{equation*}
\Ltest(A) = \frac{a_t^2}{h_t \dx} \E \|x_A - x\|^2 = \frac{a_t^2}{h_t} \big\{ \ME(A) + \RE(A) \big\},
\end{equation*}
where the \emph{conditional-mean error} ($\ME$) and \emph{residual error} ($\RE$) are defined as
\begin{align}
\ME(A) &:= \frac{1}{\dx} \E \big\| \E[x_A \mid y] - By \big\|^2, \label{eq:me} \\
\RE(A) &:= \frac{1}{\dx} \E \big\| x_A - \E[x_A \mid y] - (x - By) \big\|^2. \label{eq:re}
\end{align}
Here, $\E[x_A \mid y]$ denotes the conditional expectation of $x_A$ over both $\epsilon$ and $x \sim P(\cdot \mid y)$ at a fixed condition $y$. Intuitively, $\ME$ quantifies how accurately the denoiser captures the true conditional mean when averaged over within-condition variation, whereas $\RE$ evaluates how faithfully the sample-wise fluctuations of $x_A$ match the true residual $x - By$.

To further understand the variability of the model under test conditions, we introduce the \emph{prediction variance}:
\begin{equation}
\PV(A) := \frac{1}{\dx} \E \big\| x_A - \E[x_A \mid y] \big\|^2.
\label{eq:pv}
\end{equation}
The residual error then expands as $\RE(A) = \upsilon_{\Sigma,d}^2 + \PV(A) - 2\RA(A)$, where $\RA(A) := \dx^{-1} \E \big[(x - By)\T (x_A - By)\big]$ represents the residual alignment.

\begin{remark}
Formally, $\ME$ and $\PV$ describe single-step denoising properties at a fixed time $t$ rather than the full multi-step trajectories of the reverse process. Nevertheless, they conceptually capture the accuracy and within-condition diversity of generated samples, and their trends in width also appear in samples from the full reverse process (Section~\ref{sec:exp-rfnn}).
\end{remark}  

\section{Main results}
\label{sec:theorem}
Section~\ref{sec:test-decomposition} derives the limits of the quantities in
Section~\ref{sec:quantities-of-interest} and investigates the model's behavior at test
conditions in the overparameterized regime. Section~\ref{sec:train-replication} derives the
training loss for large $\psi_p$ and $\psi_n$ to investigate the effect of the condition
($c,\psi_y$) on memorization.

\begin{figure}[t]
  \centering
  \includegraphics[width=\textwidth]{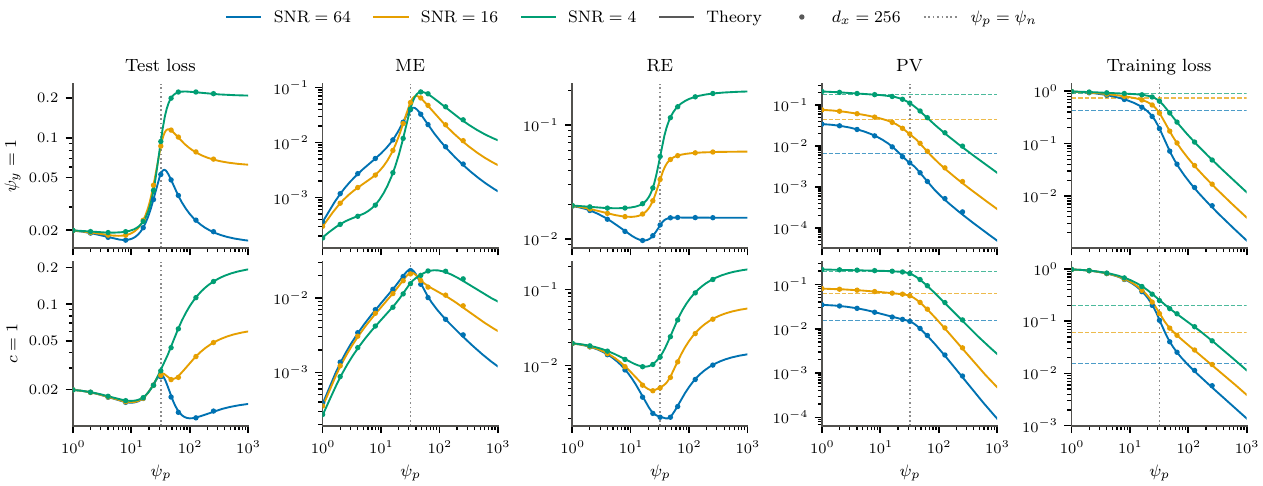}
  \caption{Test loss $(h_t/a_t^2) \Ltest = \ME + \RE$, ME~\eqref{eq:me}, RE~\eqref{eq:re}, PV~\eqref{eq:pv}, and training loss $\Ltrain$ from Theorem~\ref{thm:main} against width $\psi_p$ for different SNRs~\eqref{eq:SNR} at $t=0.01$, $\act = \tanh$, $\lambda=10^{-5}$, and $\psi_n=32$. 
\textbf{Top}: Gaussian denoising ($\psi_y=1$), varying $c$ so that $\mathrm{SNR}(1,c) \in \{4,16,64\}$. 
\textbf{Bottom}: super-resolution ($c=1$), varying $\psi_y$ so that $\mathrm{SNR}(\psi_y,1) \in \{4,16,64\}$. 
 Dashed lines mark the $\PV$ and the training loss of the true denoiser $\E[x \mid x_t,y]$.   Dots denote finite random-feature simulations at $d_x=256$, averaged over five realizations; their min--max bars are smaller than the dots. Appendix~\ref{sec:app-heatmaps} shows them as heatmaps over the full range of SNR and for other $\psi_n$.}
  \label{fig:width-dependent-errors}
\end{figure}
\subsection{Test loss and malign generalization}
\label{sec:test-decomposition}
We first describe the components that constitute the quantities in Section~\ref{sec:quantities-of-interest}. 
Write $g:=W_1x_t/\sqrt{2\dx}+W_2y/\sqrt{2\dy}\in\R^p$ for the preactivation in
\eqref{eq:model}, so that $s_{\Ahat}=\frac{\Ahat}{\sqrt p}\act(g)$. 
With $W_1$ and $W_2$ fixed, every entry of $g$ is asymptotically standard Gaussian over a test draw $(y,x_t)$.
Splitting the activation entrywise into its first Hermite component and a remaining nonlinear
term, $\act(z)=\mu_1z+r(z)$, splits the score into $s_{\Ahat}^{\mathrm{lin}}:=\mu_1\frac{\Ahat}{\sqrt p}g$ and
$s_{\Ahat}^{\mathrm{nl}}:=\frac{\Ahat}{\sqrt p}r(g)$, and the denoiser
$x_{\Ahat}=(x_t+h_ts_{\Ahat})/a_t$ into $x_{\Ahat}^{\mathrm{lin}}:=(x_t+h_ts_{\Ahat}^{\mathrm{lin}})/a_t
=G_xx_t+G_yy$ and $x_{\Ahat}^{\mathrm{nl}}:=(h_t/a_t)s_{\Ahat}^{\mathrm{nl}}$, where
\begin{equation}
G_x:=\nabla_{x_t}x_{\Ahat}^{\mathrm{lin}}=\frac1{a_t}\Big(I+h_t\mu_1\frac{\Ahat}{\sqrt p}\Wa\Big),\qquad
G_y:=\nabla_yx_{\Ahat}^{\mathrm{lin}}=\frac{h_t\mu_1}{a_t}\frac{\Ahat}{\sqrt p}\frac{W_2}{\sqrt{2\dy}} .
\label{eq:main-Gx}
\end{equation}
The linear component has conditional mean $\E[x_{\Ahat}^{\mathrm{lin}}\mid y]=\widehat By$ with
$\widehat B:=a_tG_xB+G_y$. Let $z=z_y+z_\eta$ with independent $z_y\sim\N(0,1-\tau/2)$ and
$z_\eta\sim\N(0,\tau/2)$, the asymptotic parts of an entry of $g$ determined by $y$ and by
$\eta_t=x_t-a_tBy$, where $\tau:=a_t^2\upsilon_\Sigma^2+h_t$. We set $v^2:=\E[r(z)^2]$,
$v_y^2:=\E_{z_y}\big[\E_{z_\eta}[r(z_y+z_\eta)]^2\big]$, and $\bar A:=\Ahat/\sqrt{\dx p}$, the normalized second-layer weights, so that the weight norm is $\Anorm=\|\Ahat\|_F^2/(\dx p)$.

\begin{lemma}
\label{lem:main-decomp}
With $\simeq$ denoting equivalence in high dimensions, defined in Appendix~\ref{sec:proofs}, the nonlinear component of the score satisfies
\begin{gather*}
\frac1\dx\E\|s_{\Ahat}^{\mathrm{nl}}\|^2\simeq v^2\Anorm,\qquad
\frac1\dx\E\big\|\E[s_{\Ahat}^{\mathrm{nl}}\mid y]\big\|^2\simeq v_y^2\Anorm,
\end{gather*}
and
\begin{align*}
\ME(\Ahat)&\simeq\frac{\|\widehat B-B\|_F^2}{\dx}
+\frac{h_t^2}{a_t^2}\,v_y^2\Anorm,\\
\PV(\Ahat)&\simeq\frac1\dx\tr\big(G_x\Sigma_tG_x\T\big)
+\frac{h_t^2}{a_t^2}\,(v^2-v_y^2)\Anorm,\\
\RA(\Ahat)&\simeq\frac{a_t}\dx\tr\big(\Sigma G_x\big),\\
\Ltrain(\Ahat)&\simeq\frac{a_t}\dx\tr G_x-h_t\lambda \Anorm.
\end{align*}
\end{lemma}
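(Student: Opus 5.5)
The plan is to treat $\Ahat$, $W_1$, $W_2$ as fixed (conditioning on the training sample and the features) and to compute every test-side expectation over a fresh draw $(y,\xi,\epsilon)$. The only structural facts I intend to use are Gaussian identities (Stein's lemma and orthogonality of Hermite components) together with the near-orthogonality of the rows of $W_1,W_2$. Each statement then splits into a linear part and a nonlinear part, and I will show that the cross terms between them vanish.

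\textbf{Preactivation decomposition.} I would first write $g=g_y+g_\eta$ with
\[
g_y:=\Big(\frac{a_tW_1B}{\sqrt{2\dx}}+\frac{W_2}{\sqrt{2\dy}}\Big)y,\qquad g_\eta:=\frac{W_1\eta_t}{\sqrt{2\dx}} .
\]
These two pieces are independent Gaussian vectors. Since $\|w_{1,j}\|^2/\dx\to1$ and $\|w_{2,j}\|^2/\dy\to1$, the coordinate variances concentrate at $\tau/2$ for $g_\eta$ (because $\tr\Sigma_t/(2\dx)\to\tau/2$) and at $1-\tau/2$ for $g_y$. Off-diagonal covariances are $O(\dx^{-1/2})$.

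\textbf{Nonlinear second moments.} For the two norms of $s^{\mathrm{nl}}$ I would expand
\[
\E\,\|s^{\mathrm{nl}}\|^2=p^{-1}\tr\big(\Ahat\,\E[r(g)r(g)\T]\,\Ahat\T\big).
\]
The diagonal of $\E[r(g)r(g)\T]$ tends to $v^2$. The off-diagonal entries involve only Hermite orders $k\ge2$, so they are $O(\rho_{jl}^2)=O(1/\dx)$. As a matrix this is a rank-light correction (mainly $\mu_2^2(WW\T/\dx)^{\circ2}$) whose contribution against $\Ahat\T\Ahat/(\dx p)$ I expect to be negligible. To justify that, I would invoke the GEP bound of Appendix~\ref{sec:hermite}, which controls operator norms of Hadamard powers, rather than redo it. The conditional version is identical, with $r$ replaced by $\bar r(z_y):=\E_{z_\eta}r(z_y+z_\eta)$, and gives $v_y^2$.

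\textbf{Conditional mean, PV and RA.} Here the key cancellation is Stein's lemma. We have $\E[z_y\bar r(z_y)]=\E[z_yr(z)]=(1-\tau/2)\,\E r'(z)=0$, because $r$ has no first Hermite coefficient. Likewise $\E[z_\eta(r(z)-\bar r(z_y))]=(\tau/2)\,\E r'(z)=0$. The argument then proceeds in three steps.
\begin{enumerate}[label=(\roman*)]
\item $\E[x_A\mid y]=\widehat By+(h_t/a_t)\Ahat\,\bar r(g_y)/\sqrt p$. The cross term with $(\widehat B-B)y$ vanishes coordinatewise, which gives the formula for $\ME$.
\item The within-condition fluctuation equals $G_x\eta_t$ plus the nonlinear fluctuation. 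The covariance of $G_x\eta_t$ is $G_x\Sigma_tG_x\T$; the nonlinear part contributes $(v^2-v_y^2)\Anorm$ by subtracting the two moment limits; and the cross term vanishes by the second Stein identity. This gives $\PV$.
\item $\RA$ pairs $x-By=\Sigma^{1/2}\xi$ with $G_x\eta_t=G_x(a_t\Sigma^{1/2}\xi+\sqrt{h_t}\epsilon)$, which yields $a_t\tr(\Sigma G_x)$. The nonlinear pairing again vanishes by Stein in $\xi$.
\end{enumerate}

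\textbf{Training loss and the main obstacle.} For $\Ltrain$ I would use first-order optimality. Writing $\Ltrain$ as a quadratic in $A$, stationarity of $\Lridge$ gives
\[
\text{quadratic term}=-\text{cross term}-\frac{h_t\lambda}{\dx p}\|\Ahat\|_F^2 .
\]
Hence
\[
\Ltrain(\Ahat)=1+\frac{\sqrt{h_t}}{\dx n}\sum_i\E_\epsilon\big[\epsilon\T s_{\Ahat}(x_{i,t},y_i)\big]-h_t\lambda\Anorm .
\]
By Stein's lemma in $\epsilon$, the middle term equals
\[
\frac{h_t}{\dx}\tr\Big(\frac{\Ahat}{\sqrt p}\,\overline{D}\,\Wa\Big),\qquad \overline{D}:=\frac1n\sum_i\E_\epsilon\operatorname{diag}\big(\act'(g_i)\big).
\]
Replacing $\overline D$ by $\mu_1I$ reproduces $\frac{a_t}{\dx}\tr G_x$. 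This replacement is the step I expect to be hardest, because $\Ahat$ depends on the very training preactivations $g_i$. I would handle it within the GEP framework already used for the resolvents: the fluctuation $\overline D-\mu_1 I$ comes from higher Hermite modes of $\act'$, which are asymptotically decorrelated from the Gaussian-equivalent features that determine $\Ahat$. Failing a direct argument, I would verify the identity at the level of the linear-pencil traces of \cite[Result~3.1]{bodin2024}, where $\tr G_x$ is one of the computed quantities.
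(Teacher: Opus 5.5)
Your treatment of the two nonlinear moments and of $\ME$, $\PV$, $\RA$ is correct, and it is the paper's argument in different bookkeeping. The paper writes these quantities as exact traces in $V\T Q=-\sqrt{h_t}\,\Ahat/\sqrt p$ (Lemma~\ref{lem:test-traceid}) and inserts the equivalents of Lemmas~\ref{lem:Vtilde}--\ref{lem:Uy}. Those equivalents encode exactly your facts:
\begin{itemize}
\item $r(g)$ is asymptotically uncorrelated with $g$, $\xi$ and $\eta_t$;
\item $\E[r(g)r(g)\T]\simeq v^2\Id{p}$ and $\E\big[\E[r(g)\mid y]\E[r(g)\mid y]\T\big]\simeq v_y^2\Id{p}$.
\end{itemize}
Those lemmas rest on Mehler's formula and weight concentration, not on the GEP. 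The appendix also contains no operator-norm bound on Hadamard powers for you to invoke. None is needed: the $k=2$ term does carry a rank-one spike of norm $O(1)$, but it is invisible in $\frac1{\dx p}\tr(\Ahat\,\cdot\,\Ahat\T)$ because $\|\Ahat\|_{\op}^2=O(p)$. Your stationarity-plus-Stein identity for $\Ltrain$ is the paper's Lemma~\ref{lem:stein-train}.

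The gap is the last step for $\Ltrain$. Your reduction gives, exactly,
\[
\Ltrain(\Ahat)-\frac{a_t}{\dx}\tr G_x+h_t\lambda\Anorm=\frac{h_t}{\dx}\tr\Big(\frac{\Ahat}{\sqrt p}\big(\overline D-\mu_1 I\big)\Wa\Big),
\]
and you leave this term open. Neither fallback closes it.
\begin{itemize}
\item \textbf{The pencil route is circular.} The pencil evaluates traces of the Gaussian-equivalent resolvent after $V$ has already been replaced by $\sqrt{h_t}\mu_1\Wa$, and after the GEP for $U$, which this lemma deliberately avoids. So $\overline D$ never enters.
\item \textbf{The decorrelation heuristic targets a non-issue.} The dependence of $\Ahat$ on the $g_i$ does not matter, because $\overline D$ is diagonal. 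Its $r$th entry averages, over all $n$ training pairs (i.i.d.\ given $W_1,W_2$) and over $\epsilon$, a function of the single preactivation $L_{ri}$.
\end{itemize}
That entry's conditional mean tends to $\mu_1$ uniformly in $r$ by row-norm concentration. Its fluctuation is of order $n^{-1/2}$ up to logarithms, uniformly in $r$. Hence $\|\overline D-\mu_1 I\|_{\op}\to0$, and Cauchy--Schwarz gives
\[
\Big|\frac{h_t}{\dx}\tr\Big(\frac{\Ahat}{\sqrt p}\big(\overline D-\mu_1 I\big)\Wa\Big)\Big|\le h_t\,\|\overline D-\mu_1 I\|_{\op}\,\sqrt{\Anorm}\,\frac{\|W_1\|_F}{\sqrt2\,\dx}.
\]
Here $\|W_1\|_F^2\approx p\dx$ and $h_t\lambda\Anorm\le\Lridge(\Ahat)\le\Lridge(0)=1$, so the term vanishes.

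This is exactly the paper's step. Lemma~\ref{lem:Vt} computes $V=\sqrt{h_t}\,\overline D\,\Wa$ entrywise and replaces the row averages by their mean $\mu_1$. The proof then writes your difference as $\dx^{-1}\tr\big(V\T Q(\sqrt{h_t}\mu_1\Wa-V)\big)$. Define $\overline D_{rr}$ through $\act_1(u)=\E_{z'}[\act(u+\sqrt{h_t/2}\,z')z']$ via Mehler, as that lemma does, rather than through $\act'$. Only an $L^2$ Hermite expansion of $\act$ is assumed, so Stein in $\epsilon$ is not available in general.
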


Both $\ME$ and $\PV$ thus consist of a
part carried by the linear component of the denoiser and a part carried by its nonlinear
component, proportional to $\Anorm$: the conditional-mean energy
$\dx^{-1}\E\|\E[x_{\Ahat}^{\mathrm{nl}}\mid y]\|^2$ in $\ME$ and the within-condition variance
$\dx^{-1}\E\|x_{\Ahat}^{\mathrm{nl}}-\E[x_{\Ahat}^{\mathrm{nl}}\mid y]\|^2$ in $\PV$.
The linear part of $\ME$ is the error of the conditional-mean map $\widehat B$.
Given $y$, $x_t\sim\N(a_tBy,\Sigma_t)$, so the linear component
$x_{\Ahat}^{\mathrm{lin}}=G_xx_t+G_yy$ has conditional covariance $G_x\Sigma_tG_x\T$; the
linear part of $\PV$ is its normalized trace. 
Lemma~\ref{lem:main-decomp} still involves the random quantities $G_x$, $\widehat B$, and $\Anorm$; Theorem~\ref{thm:main} gives their deterministic limits.

\begin{theorem}[Proportional limits]
\label{thm:main}
Under the data model~\eqref{eq:data-model}, the activation assumption
of Section~\ref{sec:rf-score-learning}, and the assumptions of Appendix~\ref{sec:proofs}, fix $t>0$ and $\lambda>0$. Let $\Kbar$ and $\Kbar_\Sigma$ be the deterministic
functions of $(\mathbf q,z)$ given by~\eqref{eq:appendix-readouts}, defined through the solution of the
fixed-point equations~\eqref{eq:appendix-fixed-m}--\eqref{eq:appendix-fixed-rho}.
Evaluate them and their derivatives at $(\mathbf q,z)=(\mathbf0,-\lambda)$. Then the terms of
Lemma~\ref{lem:main-decomp} have the limits
\begin{align*}
\lim\E[\Anorm]&=\mu_1^2\,\partial_z\Kbar,\\
\lim\E\Big[\frac{\|\widehat B-B\|_F^2}{\dx}\Big]&=-\frac{h_t^2\mu_1^4}{a_t^2}\,\partial_{q_{\mathrm m}}\Kbar,\\
\lim\E\Big[\frac1\dx\tr\big(G_x\Sigma_tG_x\T\big)\Big]&=\upsilon_\Sigma^2+\frac{h_t}{a_t^2}
-\frac{2h_t^2\mu_1^2}{a_t^2}\Kbar
-2h_t\mu_1^2\Kbar_\Sigma-\frac{h_t^2\mu_1^4}{a_t^2}\,\partial_{q_{\mathrm r}}\Kbar,\\
\lim\E\Big[\frac{a_t}{\dx}\tr\big(\Sigma G_x\big)\Big]&=\upsilon_\Sigma^2-h_t\mu_1^2\Kbar_\Sigma,\\
\lim\E\Big[\frac{a_t}{\dx}\tr G_x\Big]&=1-h_t\mu_1^2\Kbar .
\end{align*}
Consequently, the limits of $\ME$, $\PV$, and $\RA$ follow
from Lemma~\ref{lem:main-decomp}, that of
$\RE$ from these, and
$\lim\E[\Ltest(\Ahat)]=\frac{a_t^2}{h_t}\lim\E[\ME(\Ahat)+\RE(\Ahat)]$.
The limit of $\Ltrain$ likewise follows from Lemma~\ref{lem:main-decomp}.
\end{theorem}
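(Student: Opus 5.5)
The plan is to write $\Ahat$ in closed form, express every random quantity in Lemma~\ref{lem:main-decomp} as a trace of a resolvent or its derivative, and evaluate those traces through the fixed-point equations of the linear pencil.

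\textbf{Closed form for $\Ahat$.} The ridge objective~\eqref{eq:loss} is quadratic in $A$. Write $F_i(\epsilon):=\act(g_i(\epsilon))/\sqrt p$ for the training features, and define the feature second moment and cross moment
\[
\widehat K:=\frac{1}{n}\sum_i\E_\epsilon[F_iF_i\T], \qquad \widehat C:=-\frac{1}{n}\sum_i\E_\epsilon[\epsilon F_i\T]/\sqrt{h_t}.
\]
The minimizer is then
\[
\Ahat=\widehat C\,\big(\widehat K+\lambda I\big)^{-1},
\]
with the normalization chosen so that $z=-\lambda$ is the resolvent spectral parameter.

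\textbf{Gaussian equivalence.} Next I apply the GEP, as in Appendix~\ref{sec:hermite}. It replaces $\act(g_i)$ by $\mu_1 g_i + v\,\zeta_i$, where $\zeta_i$ is an independent Gaussian, except for the conditional-mean part. The $y$-dependent nonlinear mean, of variance $v_y^2$, must be kept as a separate Gaussian block shared across noise draws of the same $i$; this block is what distinguishes the conditional setting.

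After this replacement, $\widehat K$ becomes a block-structured random matrix built from $W_1$, $W_2$, $Z=(x_i)$, $Y=(y_i)$, the noise, and the independent Gaussians. Averaging over $\epsilon$ in closed form turns the noise into an additive $h_t$-shift on the $W_1W_1\T$ block.

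\textbf{Reduction to traces.} Each target quantity is a normalized trace of a rational function of $W_1$, $W_2$ and the resolvent $R(z)=(\widehat K-zI)^{-1}$:
\begin{itemize}[label={}, leftmargin=*]
\item $\Anorm=\tr(\widehat C\,R^2\,\widehat C\T)/(\dx p)$ is a $z$-derivative, which yields $\mu_1^2\partial_z\Kbar$.
\item $\frac{a_t}{\dx}\tr G_x$ and $\frac{a_t}{\dx}\tr(\Sigma G_x)$ are linear in $\Ahat$, so they are plain resolvent traces, $\Kbar$ and $\Kbar_\Sigma$ (the latter weighted by $\Sigma$).
\item The quadratic quantities $\|\widehat B-B\|_F^2$ and $\tr(G_x\Sigma_tG_x\T)$ involve $\Ahat M\Ahat\T$ for fixed matrices $M$. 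These are obtained by adding source terms $q_{\mathrm m}M_{\mathrm m}$ and $q_{\mathrm r}M_{\mathrm r}$ to the pencil and differentiating at $\mathbf q=\mathbf 0$.
\end{itemize}
Expanding $G_x\Sigma_tG_x\T$ with $G_x=a_t^{-1}(I+h_t\mu_1\Ahat W_1/\sqrt{2\dx p})$ gives three kinds of terms:
\begin{itemize}[label={}, leftmargin=*]
\item the constant $a_t^{-2}\tr\Sigma_t/\dx=\upsilon_\Sigma^2+h_t/a_t^2$;
\item cross terms, which, after using $\Sigma_t=a_t^2\Sigma+h_tI$, split into $\Kbar$ and $\Kbar_\Sigma$ contributions;
\item the quadratic term, $\partial_{q_{\mathrm r}}\Kbar$.
\end{itemize}
The same expansion of $\widehat B-B=a_t(G_x-a_t^{-1}I)B+G_y+(1-1)B$ should collapse to the single $\partial_{q_{\mathrm m}}$ term. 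I will verify these coefficients, which are routine algebra once the traces are identified.

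\textbf{Pencil and limits.} I will assemble the six-block linear pencil whose inverse contains $R(z)$, $\widehat C R$ and $R\widehat C\T$ as sub-blocks. Result~3.1 of \citet{bodin2024} then supplies deterministic equivalents of the blockwise normalized traces, solving~\eqref{eq:appendix-fixed-m}--\eqref{eq:appendix-fixed-rho}, and these define $\Kbar$ and $\Kbar_\Sigma$ via~\eqref{eq:appendix-readouts}.

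Convergence of expectations follows from concentration of the traces together with uniform boundedness. The latter holds because $\lambda>0$ bounds $\|R\|\le1/\lambda$, and it justifies exchanging limits with derivatives in $z$ and $\mathbf q$, since the maps are analytic on a neighborhood of $(\mathbf 0,-\lambda)$. Finally, substituting the limits into Lemma~\ref{lem:main-decomp} yields $\ME$, $\PV$, $\RA$, $\RE=\upsilon_{\Sigma,d}^2+\PV-2\RA$, the test loss, and $\Ltrain$.

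\textbf{Main obstacle.} The hardest step is the Gaussian-equivalence justification with the shared conditional nonlinear block. Unlike the unconditional case, the features $r(g_i(\epsilon))$ for different $\epsilon$ but the same $i$ are correlated through $y_i$ and $x_i$. I will try to handle this by conditioning on $(x_i,y_i)$: I decompose $r(g)=\E_\epsilon[r(g)\mid x_i,y_i]+\text{fluctuation}$, linearize the first term in $(x_i,y_i)$ via a Hermite expansion in the $z_y$ variable, and treat the remainder as an independent Gaussian of variance $v^2-v_y^2$. This step is assumed in Appendix~\ref{sec:proofs}.

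The second difficulty is bookkeeping: checking that each trace is exactly a block of the pencil inverse, or a derivative of one, so that Result~3.1 applies.
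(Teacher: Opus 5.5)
Your overall architecture is the paper's: a closed-form ridge minimizer, reduction of the five quantities to $\Kbar$, $\Kbar_\Sigma$, $\partial_z\Kbar$ and the source derivatives $\partial_{q_{\mathrm m}}\Kbar$, $\partial_{q_{\mathrm r}}\Kbar$, a six-block pencil, and Result~3.1 of \citet{bodin2024}. However, the Gaussian-equivalence step, which fixes the constants entering the pencil, is wrong as you state it.

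For a training pair, both $x_i$ and $y_i$ are frozen and only the DSM noise $\epsilon$ is redrawn. The component shared across noise draws is therefore $\E_\epsilon[\varrho(g_i(\epsilon))\mid x_i,y_i]$. Its $r$th entry is $\varrho_0(L_{ri})$, where $\varrho_0$ is the noise-averaged activation of Appendix~\ref{sec:hermite} and $L$ is the clean preactivation of~\eqref{eq:feature}. Its nonlinear part has variance $v_0^2$ from~\eqref{eq:v0t}, which averages over variance $h_t/2$ only, and the fluctuation around it has variance $s^2=v^2-v_0^2$.

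The constant $v_y^2$ instead conditions on $y$ alone and averages over the whole residual $\eta_t$, of variance $\tau/2$. By the tower property $v_y^2\le v_0^2$, strictly unless $c^2\psi_y=1$ or $\varrho$ is linear. It never enters the training resolvent. It appears only through the test moment $\widetilde U_y$, in the nonlinear part of $\ME$ in Lemma~\ref{lem:main-decomp}, which is not among the five limits of this theorem.

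With your split ($v_y^2$ shared, $v^2-v_y^2$ independent), the pencil carries $v_y^2$ instead of $v_0^2$ in its $(1,4)$ and $(4,1)$ blocks, and $v^2-v_y^2$ instead of $s^2$ in its $(1,1)$ block. Result~3.1 then returns fixed-point equations different from~\eqref{eq:appendix-fixed-m}--\eqref{eq:appendix-fixed-rho}, hence different $m$, $\rho$ and $\Kbar$. Your ``main obstacle'' paragraph does condition on $(x_i,y_i)$, which is the right move. But it still assigns the remainder variance $v^2-v_y^2$ and expands in $z_y$, so it is internally inconsistent.

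The route that works, and the one the paper takes, is to average over $\epsilon$ first. Mehler's formula in the noise correlation $w_{1r}^\top w_{1s}/\dx$ gives $U\simeq n^{-1}\varrho_0(L)\varrho_0(L)^\top+h_t\mu_1^2\Wa\WaT+s^2I_p$; the middle term is the $h_t$-shift you anticipated. Only then is the GEP applied to $\varrho_0(L)$, replacing it by $\mathcal G=\mu_1L+v_0\Omega$ with $\Omega$ independent across samples. The block $v_0\Omega$ must be kept rather than linearized away. It produces the fourth pencil block and the scalar $\rho$. Without it, as for a linear activation, the fitted fraction $1-\rho$ is capped at $(1+\psi_y)/\psi_n$ at every width.

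This block is also not what distinguishes the conditional setting, since the same block appears in the unconditional analysis of \citet{george2026}. What is new in the conditional case:
\begin{itemize}
\item[(i)] the noise shift acts only on the data block ($h_t\mu_1^2I_x$ in $\mathcal C_1$);
\item[(ii)] $T$ carries the $\kappa_d y$ block;
\item[(iii)] $\Gamma$ splits into $\Lambda$ and $\Gamma_{\mathrm{res}}$;
\item[(iv)] $\Kbar$ and $\Kbar_\Sigma$ are traces against $I_x$ and $\Sigma_x$ rather than full blocks. This requires splitting blocks $2$ and $3$ into $x/y$ and $\Pi/\Pi_\perp$ parts, and checking that $\eta^{\mathrm B}$ is unchanged, before Result~3.1 applies.
\end{itemize}

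Finally, the bound $\|R\|\le1/\lambda$ is available for $q_{\mathrm m},q_{\mathrm r}\ge0$, but not uniformly on a complex neighborhood of $\mathbf q=\mathbf 0$. It therefore does not by itself license exchanging $\lim\E$ with $\partial_{\mathbf q}$. The paper assumes this interchange, as it does the GEP.
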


\begin{remark}
We expect the above quantities to concentrate around their expectations in the proportional limit; as in \citet{george2026}, a rigorous proof is beyond the scope of this work.
Finite random-feature simulations at $\dx=256$ support this
(Figure~\ref{fig:width-dependent-errors}).
\end{remark}
\begin{figure}[t]
  \centering
  \includegraphics[width=0.55\textwidth]{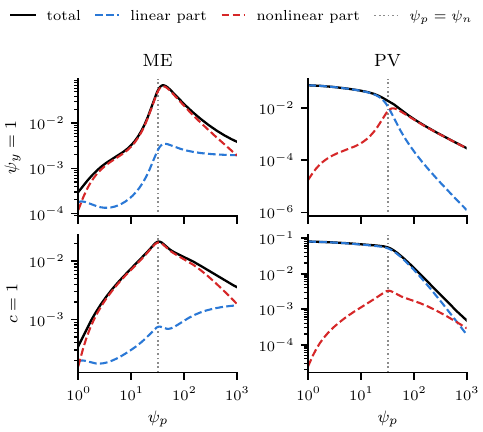}
  \caption{$\ME$ and $\PV$ from Theorem~\ref{thm:main} (black) split into the linear part
  (blue) and the nonlinear part (red) of Lemma~\ref{lem:main-decomp}, at $\mathrm{SNR}=16$ in
  the setting of Figure~\ref{fig:width-dependent-errors}.
  }
  \label{fig:me-pv-split}
\end{figure}
Figure~\ref{fig:width-dependent-errors} evaluates Theorem~\ref{thm:main} across the two specializations in Section~\ref{sec:data-model} at three values of $\mathrm{SNR}(\psi_y,c)$; the marginal distribution of $x$ remains unchanged throughout.
In terms of test loss, we observe peaks near $\psi_p = \psi_n$ at high SNR alongside double descent-like behavior analogous to standard regression~\citep{belkin2019reconciling, hastie2022surprises, mei2022generalization}, and it is tempting to conclude that benign overfitting~\citep{bartlett2020benign} can occur in conditional diffusion models at high SNR.
Indeed, the conditional-mean error (ME) appears to exhibit benign overfitting, decreasing even in the overparameterized regime.
However, the residual error (RE) and prediction variance (PV) reveal a different picture.
As $\psi_p$ grows, the RE increases or plateaus, while the PV continues to decline well below that of the true denoiser, indicating that the model fails to capture the variation of $x$ given $y$.
Thus, in the overparameterized regime, a wider model predicts the conditional mean more accurately while increasingly failing to capture the within-condition variance---a phenomenon we term \textit{malign generalization}.

\paragraph{Mechanisms behind ME and PV.}
As in random-feature ridge regression~\citep{mei2022generalization}, the weight norm
$\Anorm$ increases up to the interpolation threshold $\psi_p\approx\psi_n$ and decreases beyond
it. By Lemma~\ref{lem:main-decomp}, the nonlinear parts of $\ME$ and $\PV$ are constant
multiples of this norm and share its peak (Figure~\ref{fig:me-pv-split}). Near the peak, the nonlinear part of $\ME$ exceeds its linear part by more than an order of magnitude, so the peak of $\ME$ and its shape around the peak come mostly from the nonlinear part.
The linear part of $\ME$ does not vanish at large width: at fixed $\psi_n$, it converges to a positive limit of order $1/\psi_n$
(Proposition~\ref{prop:fixed-n-me}).
Unlike in ridgeless random-feature regression, the peak is smoothed even as $\lambda\to0$: averaging over the diffusion noise acts as an implicit ridge that vanishes as $t\to0$, sharpening the peak at
small $t$ (Appendix~\ref{sec:app-heatmaps}).

The decrease of $\PV$ has a different origin, specific to denoising. At a training input
$x_{i,t}(\epsilon)=a_tx_i+\sqrt{h_t}\,\epsilon$, the target is the same clean sample $x_i$ for
every noise draw $\epsilon$, so the denoiser fits the training samples only by becoming
insensitive to $x_t$ near them: by Stein's identity, $\Ltrain(\Ahat)+h_t\lambda\Anorm$ is
exactly the average of $a_t\dx^{-1}\tr\nabla_{x_t}x_{\Ahat}$ over the training inputs
(Lemma~\ref{lem:stein-train}). For a denoiser linear in $x_t$, this Jacobian is the same at
every input, so a small training loss forces the same small Jacobian at test inputs and hence a
small $\PV$. While, for a general nonlinear denoiser, the Jacobian at the training inputs need not constrain that at test inputs, the linear component of our model carries this mechanism: by Lemma~\ref{lem:main-decomp}, the average Jacobian trace at the training inputs is asymptotically $a_t\dx^{-1}\tr G_x$, with the same $G_x$ that sets the linear part of $\PV$. 
As the width grows, the training loss decreases, and the linear part of $\PV$ falls below the $\PV$ of the true denoiser; beyond the
interpolation threshold, the nonlinear part decreases as well, with the weight norm (Figure~\ref{fig:me-pv-split}). 
The following proposition formalizes this as $\psi_p\to\infty$ at fixed $\psi_n$.

\begin{proposition}[$\psi_p\to\infty$ with $\psi_n$ fixed]
\label{prop:main-fixed-n}
Under the assumptions of Theorem~\ref{thm:main}, suppose additionally that $s^2,v_0^2,\mu_1^2>0$ (Appendix~\ref{sec:activation-coefficients}). Fix $\psi_n,\lambda>0$, and let
$\psi_p\to\infty$ after the proportional limit. Then
$\Ltrain(\Ahat)\to0$ and $\Anorm\to0$, so $\PV(\Ahat)\to0$ and the nonlinear part of $\ME(\Ahat)$ vanishes.
\end{proposition}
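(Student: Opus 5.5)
The plan is to avoid the fixed-point equations of Theorem~\ref{thm:main} wherever possible and argue at the level of the ridge solution itself, using Lemma~\ref{lem:main-decomp} to pass to $\PV$ and $\ME$. First I show $\Anorm\to0$ and $\Ltrain(\Ahat)\to0$ together via a comparison argument. Since $\Ahat$ minimizes $\Lridge$, for any competitor $A'$ we have
\[
\Ltrain(\Ahat)+\frac{h_t\lambda}{\dx p}\|\Ahat\|_F^2\le \Ltrain(A')+\frac{h_t\lambda}{\dx p}\|A'\|_F^2 .
\]
The competitor I will build is an interpolant-type readout $A'$ that fits the conditional empirical score at the $n$ training conditions with small error and small normalized norm $\|A'\|_F^2/(\dx p)$ once $p\gg n$.

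The natural construction uses the kernel limit. As $\psi_p\to\infty$, the normalized feature Gram matrix $p^{-1}\act(G)\T\act(G)$ over training inputs converges to a kernel, and the noise-averaged feature covariance converges to an operator. This operator is bounded below on the span of the $n$ training conditions because $s^2,v_0^2,\mu_1^2>0$. Here $v_0^2$ supplies a ridge-like nonlinear component, and $\mu_1^2$ and $s^2$ make the $y$-dependent part nondegenerate.

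Taking $A'=M\,\act(G_{\mathrm{tr}})\T/\sqrt p$-type combinations (with $G_{\mathrm{tr}}$ the training preactivations), the normalized norm scales like $1/\psi_p$ times a quantity bounded in terms of $\psi_n$ and the kernel's smallest eigenvalue. The training loss of $A'$ can be driven to the irreducible part coming from the spread of $\epsilon$.

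That irreducible part is the subtlety. By Stein's identity (Lemma~\ref{lem:stein-train}), $\Ltrain+h_t\lambda\Anorm$ equals the averaged Jacobian trace $a_t\dx^{-1}\tr\nabla_{x_t}x_A$ at training inputs. I therefore want a competitor whose denoiser is insensitive to $x_t$: it outputs $x_i$ while having $\nabla_{x_t}x_A\approx0$. With infinitely many features this is achievable at vanishing norm per feature.

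The cleaner route is to read off the limits from Theorem~\ref{thm:main} directly. I will show that as $\psi_p\to\infty$ the fixed-point solution satisfies $\partial_z\Kbar\to0$, so $\Anorm\to0$, and $h_t\mu_1^2\Kbar\to1$, so $\tfrac{a_t}{\dx}\tr G_x\to0$. The latter gives $\Ltrain\to 0$ by Lemma~\ref{lem:main-decomp}.

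To obtain these limits, I expand the fixed-point equations~\eqref{eq:appendix-fixed-m}--\eqref{eq:appendix-fixed-rho} in $1/\psi_p$ at $z=-\lambda$. The resolvent of the $p\times p$ block has a vanishing fraction of nonzero directions relative to $p$, so its trace converges to $1/\lambda$ times $(1-\psi_n/\psi_p)$. The $n$-dimensional block then behaves like a kernel ridge problem whose effective ridge is $\lambda/\psi_p\to0$ and whose kernel is positive definite by $v_0^2>0$.

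Given $\Anorm\to0$ and $\Ltrain\to0$, the rest of the proof splits into two steps.

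\textbf{$\ME$.} The nonlinear part of $\ME$ is $(h_t^2/a_t^2)v_y^2\Anorm$, which tends to $0$ immediately.

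\textbf{$\PV$.} The nonlinear part is $(h_t^2/a_t^2)(v^2-v_y^2)\Anorm\to0$. For the linear part $\dx^{-1}\tr(G_x\Sigma_tG_x\T)$, I use the formula in Theorem~\ref{thm:main} and show each term converges so that the total vanishes. Equivalently, I argue that $G_x\to0$ in normalized Frobenius norm. Note that $\tr G_x\to0$ alone does not suffice. Instead I use $\partial_{q_{\mathrm r}}\Kbar$ and $\Kbar_\Sigma$ limits, namely $h_t\mu_1^2\Kbar_\Sigma\to\upsilon_\Sigma^2$ together with the matching limit of $\partial_{q_{\mathrm r}}\Kbar$, which make the expression cancel to $0$.

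I expect the main obstacle to be this asymptotic analysis of the fixed-point system as $\psi_p\to\infty$, and in particular establishing the second-order cancellation in the $\PV$ formula. My first attempt will posit the ansatz that the $n$-block resolvent converges to $(\mathcal K+0)^{-1}$ with $\mathcal K\succeq v_0^2 I$. I will then verify that all readouts converge to their ``perfect interpolation with zero Jacobian'' values. As a sanity check, I will confirm that the $\Kbar$-derivatives match a direct computation of the noiseless-kernel interpolant.
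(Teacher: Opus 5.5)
Your target limits are the right ones ($h_t\mu_1^2\Kbar\to1$ and $\partial_z\Kbar\to0$ at $z=-\lambda$), and obtaining the first from the fixed-point equations is what the paper does. Positivity of the terms in \eqref{eq:appendix-fixed-m} gives $m$ bounded above and away from zero. Then \eqref{eq:appendix-fixed-rho} gives $0<\rho\le\psi_n/(\psi_pmv_0^2)$, hence $D/(\psi_pm)=h_t\mu_1^2+O(\psi_p^{-1})$. These a priori bounds, not an ansatz checked for consistency, are what identify the solution. Also, $m\to1/(s^2+\lambda)$, not $1/\lambda$: $U$ has the floor $s^2\Id{p}$, so your heuristic for the $p\times p$ resolvent trace is off, though this is harmless here.

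The genuine gap is the derivative. Expanding the fixed-point equations in $1/\psi_p$ controls $(m,\rho)$, not $(\partial_zm,\partial_z\rho)$. You give no uniform bound on the latter, so $\partial_z\Kbar\to0$ is not established, and with it neither $\Anorm\to0$ nor $\Ltrain\to0$. The paper avoids differentiating the fixed point. It proves the value estimate $|\Kbar(z)-1/(h_t\mu_1^2)|=O(\psi_p^{-1})$ uniformly for $z\in[-\lambda,-\lambda/2]$. It then uses $\partial_zK=\dx^{-1}\tr(\WaT R^2\Wa)\ge0$ and $\partial_z^2K=2\dx^{-1}\tr(\WaT R^3\Wa)\ge0$, so $\Kbar(\mathbf 0,\cdot)$ is nondecreasing and convex, and therefore
\[
0\le\partial_z\Kbar(-\lambda)\le\frac{\Kbar(-\lambda/2)-\Kbar(-\lambda)}{\lambda/2}=O(\psi_p^{-1}).
\]

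For the linear part of $\PV$, your claim that $\tr G_x\to0$ alone does not suffice is wrong in this model, and this is the idea you are missing. Replace $V$ and $U$ by their equivalents. Then $G_x\simeq\tilde G_x=a_t^{-1}(I-h_t\mu_1^2\WaT\tilde Q\Wa)$, with $\tilde Q=(P+h_t\mu_1^2\Wa\WaT)^{-1}$ and $P=\Gt\Gt\T/n+(s^2+\lambda)\Id{p}$. The Woodbury identity gives $h_t\mu_1^2\WaT\tilde Q\Wa=I-(I+h_t\mu_1^2\WaT P^{-1}\Wa)^{-1}$, so $0\preceq\tilde G_x\preceq a_t^{-1}I$. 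Since its eigenvalues lie in $[0,a_t^{-1}]$ and $\Sigma_t\preceq I$,
\[
\dx^{-1}\tr(\tilde G_x\Sigma_t\tilde G_x)\le\dx^{-1}\tr\tilde G_x^2\le a_t^{-1}\dx^{-1}\tr\tilde G_x\simeq\Lridge(\Ahat)/a_t^2,
\]
which is Lemma~\ref{lem:pv-train-bound}. So the linear part of $\PV$ (and $\RA$) vanishes as soon as $\Lridge(\Ahat)\to0$.

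Your cancellation route is consistent. Given $h_t\mu_1^2\Kbar\to1$ and $h_t\mu_1^2\Kbar_\Sigma\to\upsilon_\Sigma^2$, it needs $\partial_{q_{\mathrm r}}\Kbar\to-\tau/(h_t^2\mu_1^4)$, which is indeed the leading-order value of $-\mathcal M[(h_t+a_t^2\varsigma)/X^2]$. But proving it requires uniform control of $\partial_{q_{\mathrm r}}m$ and $\partial_{q_{\mathrm r}}\rho$. That is the same missing derivative control, now in a direction where the theory only assumes local analytic continuation. The operator bound makes this unnecessary.
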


These limits reflect the two mechanisms above: fitting drives $G_x$ to zero, and the vanishing weight norm removes the nonlinear part.

\subsection{Training loss and memorization}
\label{sec:train-replication}
The last column of Figure~\ref{fig:width-dependent-errors} evaluates the training loss of
Theorem~\ref{thm:main} and shows how strengthening the condition affects it. In both rows, the training loss
decreases with $\psi_p$, and a higher SNR gives a lower loss at every width. Figure~\ref{fig:heatmap-condition} in Appendix~\ref{sec:app-heatmaps} varies $c$ and $\psi_y$ jointly at fixed $\psi_p/\psi_n$. To see the effect of $c$ and $\psi_y$ explicitly, we consider large $\psi_p$ and $\psi_n$.

\begin{proposition}[Training loss at large $\psi_p$ and $\psi_n$]
\label{thm:train}
Under the assumptions of Theorem~\ref{thm:main}, suppose additionally that
$s^2,v_0^2,\mu_1^2>0$ (Appendix~\ref{sec:activation-coefficients}), write $r=\psi_p/\psi_n$, and
let $\rho_\star\in(0,1)$ be the positive root of
\begin{equation*}
v_0^2\rho_\star^2+[s^2+v_0^2(r-1)]\rho_\star-s^2=0.
\end{equation*}
Then, for $\psi_p,\psi_n\gg1$ and $0<\lambda\ll1$, uniformly in $r\in(0,\infty)$,
\begin{align}
\lim\E[\Ltrain(\Ahat)] &= R_{\mathrm{train}}(r)+O(\psi_p^{-1}+\psi_n^{-1}+\lambda), \notag \\
R_{\mathrm{train}}(r) &= \mathcal M\Big[\frac{a_t^2\rho_\star\varsigma}{h_t+a_t^2\rho_\star\varsigma}\Big],
\label{eq:Rtrain}
\end{align}
where $\mathcal M[f(\varsigma)]:=\psi_yf(1-c^2)+(1-\psi_y)f(1)$ is the average of $f$ over
the eigenvalues $\varsigma$ of $\Sigma$, the residual variances along the observed and unobserved directions.
\end{proposition}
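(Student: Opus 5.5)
We propose to start from the identity in Lemma~\ref{lem:main-decomp} and Theorem~\ref{thm:main}: $\lim\E[\Ltrain(\Ahat)] = 1-h_t\mu_1^2\Kbar - h_t\lambda\mu_1^2\partial_z\Kbar$, with everything evaluated at $(\mathbf q,z)=(\mathbf 0,-\lambda)$. The $\lambda\partial_z\Kbar$ term is $O(\lambda)$ as long as $\partial_z\Kbar$ stays bounded, i.e.\ as long as the weight norm does not blow up. The assumption $s^2,v_0^2>0$ should give this: the nonlinear features and the averaging over diffusion noise act as an implicit ridge, and this smooths the peak uniformly in $r$. So the task reduces to computing $\Kbar$ in the regime $\psi_p,\psi_n\to\infty$ with $r=\psi_p/\psi_n$ fixed.

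First, we would substitute $\mathbf q=\mathbf0$, $z=-\lambda$ into the fixed-point equations \eqref{eq:appendix-fixed-m}--\eqref{eq:appendix-fixed-rho} and expand them in $1/\psi_p$, $1/\psi_n$ and $\lambda$. At leading order the six-block pencil should decouple:
\begin{itemize}[label={}]
\item the $p\times p$ feature Gram block and the $n\times n$ sample block interact only through the ratio $r$;
\item the $\dx$-dimensional data block enters only through the spectrum of $\Sigma$, hence through $\mathcal M$.
\end{itemize}
We expect one scalar order parameter $\rho$ to survive. It is the self-consistent resolvent fraction of the random-feature kernel restricted to the nonlinear part: $s^2$ plays the role of the noise-induced (per-sample, $\epsilon$-averaged) feature variance, and $v_0^2$ the part shared across noise draws. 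Its equation should reduce to the Marchenko--Pastur-type quadratic $v_0^2\rho^2+[s^2+v_0^2(r-1)]\rho-s^2=0$. Checking the sign of this quadratic at $\rho=0$ (value $-s^2<0$) and at $\rho=1$ (value $v_0^2r>0$) gives a unique root $\rho_\star\in(0,1)$. It depends continuously on $r$, with limits $0$ and $1$ at the two ends, and this is what gives uniformity.

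Second, with $\rho_\star$ in hand, we would solve the data-block equation. For each eigen-direction of $\Sigma$ with eigenvalue $\varsigma$, the linear component acts like a ridge regression of $x$ onto $x_t$. The effective shrinkage is governed by $a_t^2\varsigma$ (the signal variance in $x_t$ along that direction, once $y$ has explained $a_t^2c^2$) against noise $h_t$, and the effective capacity is $\rho_\star$. We would then show $1-h_t\mu_1^2\Kbar$ decomposes over directions as $a_t^2\rho_\star\varsigma/(h_t+a_t^2\rho_\star\varsigma)$, using $\Kbar=\dx^{-1}\tr$ of a resolvent that is diagonal in the eigenbasis of $\Sigma$ at leading order. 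Averaging over the $\psi_y$ fraction with $\varsigma=1-c^2$ and the $1-\psi_y$ fraction with $\varsigma=1$ gives $\mathcal M$. The $y$-block drops out because $y$ is noiseless and its contribution is absorbed exactly, which explains why only the residual variances $\varsigma$ appear.

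Third, we would control the errors. The corrections from the finite-$\psi$ terms in the fixed-point equations are $O(\psi_p^{-1}+\psi_n^{-1})$, and the ridge correction is $O(\lambda)$. Uniformity in $r$ follows from an implicit-function argument, which works because the Jacobian of the reduced system is nondegenerate: the quadratic has a simple root with derivative bounded below by a multiple of $s^2$.

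The main obstacle is the first step. The pencil of \citet{bodin2024} yields coupled equations for several blocks, and we must identify the right scaling so that they collapse to the single quadratic for $\rho_\star$. In particular, we must show that the $\mu_1$ linear channel does not contribute at leading order, and that degeneracies at $r\to0,\infty$ stay controlled. We would first try rescaling the block resolvent entries by $\psi_p$ and $\psi_n$ and taking formal limits, then verify the result against the numerical solution of Theorem~\ref{thm:main}.
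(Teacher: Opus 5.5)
Your plan is essentially the paper's proof: starting from $\lim\E[\Ltrain(\Ahat)]=1-h_t\mu_1^2\Kbar-h_t\lambda\mu_1^2\partial_z\Kbar$, the paper bounds the ridge term by the implicit ridge $s^2$ ($0\le\partial_z\Kbar\le\Kbar/(s^2+\lambda)\le1/((s^2+\lambda)h_t\mu_1^2)$), reduces the two scalar fixed-point equations to your quadratic, obtains $|\rho-\rho_\star|=O(\psi_p^{-1}+\psi_n^{-1}+\lambda)$ uniformly in $r$ from the secant slope $v_0^2\rho+s^2/\rho_\star\ge s^2$ (a slightly stronger bound than the derivative at the root), and reads off $\Kbar=\mathcal M[1/\ell(\varsigma)]+O(\psi_p^{-1})$ with $\ell=\mu_1^2(h_t+a_t^2\rho\varsigma)$. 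The paper never returns to the six-block pencil, and the obstacle you anticipate reduces to a priori bounds: multiplying the $m$-equation by $m$ gives $(1-O(\psi_p^{-1}))/(s^2+\lambda+v_0^2)\le m\le1/(s^2+\lambda)$, after which every linear-channel term is $O(\psi_p^{-1})$ or $O(\psi_n^{-1})$ uniformly in $r$; this includes $\beta/(2\delta)\le\psi_y/(\psi_pm)$ and the $y$-block term in $D$, which is suppressed at $O(\psi_p^{-1})$ rather than absorbed exactly.
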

Here $1-\rho_\star$ can be read as the fraction of training samples that the features fit, and $\rho_\star$ depends on $r$, $t$, and the activation, but not on $c$ or $\psi_y$ (Appendix~\ref{sec:app-rho}).

Thus, for large $\psi_p$ and $\psi_n$, $R_{\mathrm{train}}$ describes how the training loss
depends on width and on the condition:
\begin{itemize}
\item \emph{Population-score training loss as $r\to0^+$.} As $r\to0^+$, that is, for
$\psi_n\gg\psi_p\gg1$, $\rho_\star\to1$ and $R_{\mathrm{train}}$ equals the training loss
of the population score, $\mathcal M[a_t^2\varsigma/(h_t+a_t^2\varsigma)]$ (dashed in Figure~\ref{fig:width-dependent-errors}). This vanishes as
$\mathrm{SNR}\to\infty$, where the population score both generalizes and memorizes.
\item \emph{Full memorization as $r\to\infty$.} Since $\rho_\star$ decreases in $r$,
$R_{\mathrm{train}}$ decreases monotonically and vanishes as $r\to\infty$, that is, for
$\psi_p\gg\psi_n\gg1$.
\item \emph{Effect of the condition.} The summand
$a_t^2\rho_\star\varsigma/(h_t+a_t^2\rho_\star\varsigma)$ increases in
$\rho_\star\varsigma$, and $\rho_\star$ does not depend on $c$ or $\psi_y$. For fixed
$\psi_y$, increasing $c$ lowers the residual variance $\varsigma=1-c^2$ of the observed
directions, so $R_{\mathrm{train}}$ decreases in $c$. For fixed $c>0$, increasing
$\psi_y$ turns unobserved directions, with $\varsigma=1$, into observed ones, with
$\varsigma=1-c^2<1$, so $R_{\mathrm{train}}$ decreases in $\psi_y$. Both hold at every $r$.
\end{itemize}

\begin{remark}
At $c=1$, the observed directions do not contribute, and
$R_{\mathrm{train}}=(1-\psi_y)\,a_t^2\rho_\star/(h_t+a_t^2\rho_\star)$ is proportional to the
training loss of the population score, $(1-\psi_y)\,a_t^2/(h_t+a_t^2)$, so that $R_{\mathrm{train}}(r)/R_{\mathrm{train}}(0^+)$ does not depend on $\psi_y$.
If memorization were measured relative to the population score, that is, as fitting only
the part of $x_i$ that the condition leaves unexplained, it would be the same at every
$\mathrm{SNR}(\psi_y,1)$.
Replication, however, concerns the entire training image: prior
works~\citep{carlini2023extracting, somepalli2023understanding} define a sample as a copy when
it is close to the full image $x_i$, and the bound in Section~\ref{sec:preliminaries} controls
the distance to the empirical distribution itself. We therefore use the absolute training loss
as the measure of memorization.
\end{remark}
\section{Experiments}
\label{sec:experiments}
We evaluate samples from the full reverse process of the random-feature model~\eqref{eq:model} and a U-Net~\citep{ronneberger2015u} on $32\times32$ CelebA. Experimental details and qualitative examples are in Appendices~\ref{sec:app-exp-details} and~\ref{sec:app-additional-results}.

\paragraph{Metrics.}
Each metric corresponds to a quantity in Section~\ref{sec:theorem}.
\emph{Memorization}, the counterpart of the training loss in
Section~\ref{sec:train-replication}, is measured at training conditions: following
\citet{yoon2023diffusion}, a sample $\hat x$ generated for a training condition is
considered memorized if $\|\hat x-x^{(1)}\|/\|\hat x-x^{(2)}\|<1/3$, where $x^{(1)}$ and
$x^{(2)}$ are its nearest and second-nearest training samples in the $L^2$ sense. \emph{Variance}, the counterpart of
$\PV$~\eqref{eq:pv}, is the sample variance across the samples generated for one test condition, averaged over coordinates and conditions. The \emph{conditional error}, the counterpart of $\ME$~\eqref{eq:me}, is the error of the mean of the samples for a test condition with respect to the true conditional mean; on CelebA, where the conditional mean is unknown, the original test image takes its place (Appendix~\ref{sec:app-celeba-details}).

\begin{figure}[t]
  \centering
  \includegraphics[width=0.92\textwidth]{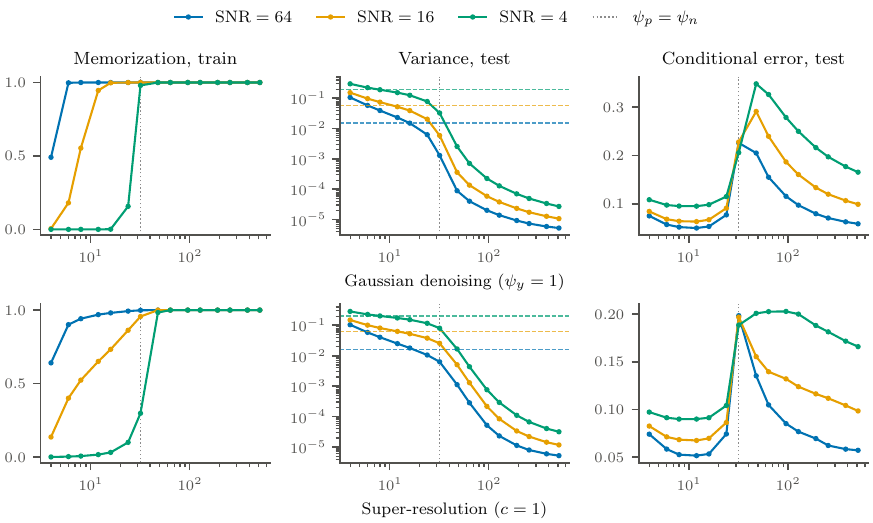}
  \caption{Full reverse-process generation with random-feature models (setting of
  Figure~\ref{fig:width-dependent-errors}, with $\dx=64$).}
  \label{fig:rfnn-width}
\end{figure}

\subsection{Random features on Gaussian data}
\label{sec:exp-rfnn}
We sample with the random-feature model~\eqref{eq:model}, minimizing~\eqref{eq:loss} at every step of the reverse chain, on the data of Section~\ref{sec:data-model}. Figure~\ref{fig:rfnn-width} shows that the trends of Section~\ref{sec:theorem} appear in samples from the full reverse process: a higher SNR makes memorization set in at smaller widths; the variance falls below the true conditional variance $\upsilon_\Sigma^2$ and keeps decreasing with width; and the conditional error peaks near $\psi_p=\psi_n$ and decreases beyond it. Wider models thus become more accurate but less diverse, the malign generalization of Section~\ref{sec:test-decomposition}. At test conditions, by contrast, none of the samples is a copy of a training image in any setting.

\begin{figure}[t]
  \centering
  \includegraphics[width=0.92\textwidth]{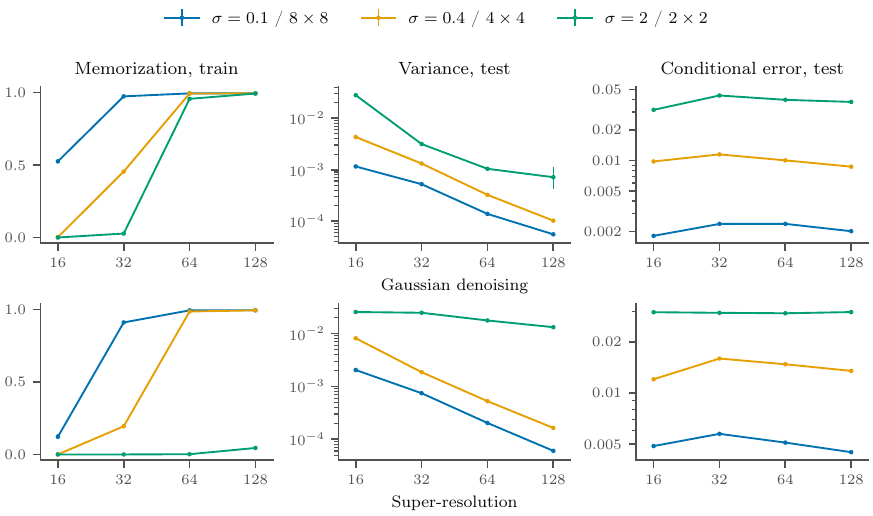}
  \caption{Full reverse-process generation with U-Nets on $32\times32$ CelebA, against the
      width $W\in\{16,32,64,128\}$ at $n=32768$, every model trained for $200$k steps.
      Markers are the mean and bars the min--max range over three seeds.}
  \label{fig:celeba-width}
\end{figure}

\subsection{Denoising and super-resolution on CelebA}
\label{sec:exp-celeba}
We train U-Nets on grayscale $32\times32$ CelebA \citep{liu2015deep}, with the condition as a second input channel. In Gaussian denoising the condition is $y=x+\sigma\xi$ with $\sigma\in\{0.1,0.4,2\}$; in super-resolution it is the image average-pooled to $8\times8$, $4\times4$ or $2\times2$.
A larger $\sigma$ weakens the condition as a smaller $c$ does, and coarser pooling as a
smaller $\psi_y$ does.

Figure~\ref{fig:celeba-width} shows the trends of Figure~\ref{fig:rfnn-width} in a U-Net: a stronger condition makes memorization set in at smaller widths, while the $2\times2$ condition is barely memorized up to $W=128$; the variance falls with width; and the conditional error shows a mild peak near the onset of memorization and decreases beyond it (Appendix~\ref{sec:app-celeba-details} compares other error metrics), again the malign generalization of Section~\ref{sec:test-decomposition}. 
As in Section~\ref{sec:exp-rfnn}, samples for test conditions are rarely copies of training images: averaged over seeds, their memorized fraction stays below $0.2\%$ in every setting except the $2\times2$ condition at $W=128$.\footnote{\label{fn:2x2-test}There, $1.9\%$ of the samples for test conditions are memorized ($4.5\%$ at training conditions): with only four dimensions, a test condition can lie close to a training condition.} 
Figure~\ref{fig:celeba-teaser} illustrates these behaviors in super-resolution.

\section{Conclusion}
\label{sec:conclusion}
We introduced a tractable data model~\eqref{eq:data-model} for conditional diffusion models
and derived the proportional limits of the test loss, its decomposition, and the training loss of a random-feature model. At test conditions, wider overparameterized models achieve lower conditional error while their prediction variance keeps decreasing, which we term malign generalization; at training conditions, more informative conditions induce memorization at smaller widths. Full reverse-process generation with random features and U-Nets shows the same trends. Our results call for further investigation into memorization and generalization mechanisms specific to conditional diffusion models.

\paragraph{Limitations and future work.}
\label{sec:app-limitations}
First, the data model~\eqref{eq:data-model} assumes an isotropic marginal and a residual
covariance with two distinct eigenvalues. Because the proof evaluates the traces separately on the eigenspaces of $\Sigma$, we expect our analysis to extend to residual covariances with finitely many distinct eigenvalues, with $\mathcal M$ still given by the normalized trace in~\eqref{eq:direction-average}. Extending the theory to anisotropic marginals and to non-Gaussian data remains an open problem.

Second, both our random-feature model and the U-Nets receive the condition $y$ via
concatenation with $x_t$. Practical models typically inject the condition through
cross-attention or adaptive normalization~\citep{rombach2022high,dhariwal2021diffusion};
investigating how these conditioning mechanisms influence memorization and generalization is an important direction for future work.

Third, as in \citet{george2026, bonnaire2025, urfin2026double}, the score
is a random-feature model with a fixed first layer, trained separately at each time $t$, and
we take the minimizer of the ridge-regularized loss. Our analysis therefore captures neither
feature learning nor parameter sharing across times; we leave these as important directions for future work.

Fourth, we assume that the training pairs $(x_i,y_i)$ are drawn i.i.d.\ from $P_{xy}$.
Large text-to-image training sets violate this assumption through sampling biases such as duplicated
images and repeated or templated captions, and replication in these models has been linked to
such duplication~\citep{carlini2023extracting,webster2023reproducible}. We do not address these
cases; extending the analysis to several samples per condition is left for future work.

\bibliographystyle{plainnat}
\bibliography{references}

\clearpage
\appendix
\numberwithin{equation}{section}
\newcommand{\apptocsec}[1]{\par\addvspace{5pt}\noindent
  \hyperref[#1]{\textbf{\makebox[2em][l]{\ref*{#1}}\nameref*{#1}}}\hfill\textbf{\pageref*{#1}}\par}
\newcommand{\apptocsub}[2][]{\par\noindent\hspace*{2em}%
  \hyperref[#2]{\makebox[2.6em][l]{\ref*{#2}}\nameref*{#2}}%
  \if\relax\detokenize{#1}\relax\else\ \textit{(#1)}\fi
  \leaders\hbox to 0.8em{\hss.\hss}\hfill\pageref*{#2}\par}
\begin{center}{\LARGE\bfseries Appendix}\end{center}
\vspace{0.5em}
\subsection*{Contents}
\begingroup\hypersetup{linkcolor=black}
\apptocsec{sec:notation}
\apptocsec{sec:proofs}
\apptocsub{sec:exact}
\apptocsub{sec:hermite}
\apptocsub[proof of Lemma~\ref*{lem:main-decomp}]{sec:app-decomp}
\apptocsub{sec:source}
\apptocsub{sec:pencil}
\apptocsub[proof of Theorem~\ref*{thm:main}]{sec:spectralclosure}
\apptocsec{sec:app-train-test-ratio}
\apptocsub[proof of Proposition~\ref*{thm:train}]{sec:app-joint-train}
\apptocsub{sec:app-rho}
\apptocsec{sec:app-interpretation}
\apptocsub{sec:app-score-form}
\apptocsub[proof of Proposition~\ref*{prop:main-fixed-n}]{sec:app-fixed-n-pv}
\apptocsec{sec:app-exp-details}
\apptocsub{sec:app-finite-rfnn-details}
\apptocsub{sec:app-rfnn-details}
\apptocsub{sec:app-celeba-details}
\apptocsec{sec:app-additional-results}
\apptocsub{sec:app-data-sweep}
\apptocsub{sec:app-heatmaps}
\apptocsub{sec:app-celeba-samples}
\apptocsec{sec:ai-use}
\endgroup
\clearpage

\section{Notation and conventions}
\label{sec:notation}
\begin{longtable}{@{}p{.27\textwidth}p{.68\textwidth}@{}}
\toprule Symbol & Meaning \\ \midrule\endhead
$\dx,\dy,n,p$ & Data dimension, condition dimension, sample size, and feature count.\\
$\psi_n,\psi_p,\psi_y$ & Limits of $n/\dx$, $p/\dx$, and $\dy/\dx$.\\
$U_\star,\Pi,\Pi_\perp$ & Orthonormal condition embedding and the two projectors in the main-text model.\\
$B,\Sigma$ & Conditional-mean map and residual covariance in $x=By+\Sigma^{1/2}\xi$.\\
$\upsilon_{\Sigma,d}^2,\upsilon_\Sigma^2$ & $\dx^{-1}\tr\Sigma$ and its limit.\\
$\Sigma_t,\tau$ & $a_t^2\Sigma+h_tI$ and its limiting normalized trace.\\
$\Sigma_x,I_x$ & $\operatorname{diag}(\Sigma,0_{\dy})$ and $\operatorname{diag}(I_{\dx},0_{\dy})$.\\
$W_1,W_2,\Wcat$ & Independent Gaussian hidden weights and $\Wcat=[W_1\ W_2]$.\\
$\kappa_d,\Tdet,\Gamma$ & Condition scaling $\sqrt{\dx/\dy}$, clean design map, and augmented noisy covariance.\\
$\Lambda,\Gamma_{\mathrm{res}}$ & Conditional-mean and diffused-residual parts of $\Gamma$.\\
$\sigma_{\ell,t}^2,\sigma_t^2,\sigma_{y,t}^2$ & Clean, noisy, and condition-only preactivation variances.\\
$\mu_1,v^2,v_0^2,s^2,v_y^2$ & Activation coefficients: $\mu_1$ is defined in Section~\ref{sec:rf-score-learning}, the others in Appendix~\ref{sec:activation-coefficients}.\\
$U,V;\Utilde,\Vtilde,\Vtilde_\xi,\Vtilde_\eta$ & Training and test feature moments in Lemmas~\ref{lem:traceid} and~\ref{lem:test-traceid}.\\
$\Utilde_y$ & Second moment of conditional feature means (Lemma~\ref{lem:test-traceid}).\\
$Q,\mathbf q,z$ & Ridge resolvent, $\mathbf q=(q_{\mathrm m},q_{\mathrm r})$ in \eqref{eq:Uq}, and spectral parameter.\\
$K, K_\Sigma;\Kbar,\Kbar_\Sigma$ & Unweighted and residual-weighted trace functions and their limits.\\
$m,\rho$ & Two scalars solving the fixed-point equations.\\
$\ME,\RE,\PV,\RA$ & Conditional-mean error, residual error, prediction variance, and residual alignment.\\
$G_x,\widehat B,\Anorm$ & Jacobian in $x_t$ of the linear component, conditional-mean map, and weight norm of the learned denoiser (Section~\ref{sec:test-decomposition}).\\
\bottomrule
\end{longtable}

\section{Proofs of the decomposition and the proportional limits}
\label{sec:proofs}
This appendix proves Lemma~\ref{lem:main-decomp} (Appendix~\ref{sec:app-decomp}) and
Theorem~\ref{thm:appendix-final}, which restates Theorem~\ref{thm:main}
at fixed aspect ratios $\psi_p,\psi_n,\psi_y$ and gives the functions
$\Kbar$ and $\Kbar_\Sigma$ explicitly.
We work with the data model~\eqref{eq:data-model}, so
$B=cU_\star$ and $\Sigma=(1-c^2)\Pi+\Pi_\perp$. In particular,

\begin{equation*}
BB\T=c^2\Pi,\qquad \Sigma^{1/2}=\sqrt{1-c^2}\Pi+\Pi_\perp,\qquad
\upsilon_\Sigma^2=1-c^2\psi_y.
\end{equation*}
The cases $c=0$ and $\psi_y=1$ are covered by combining equal blocks or omitting
zero-weight blocks. At $c=1$, $\Sigma$ is singular but $\Sigma_t\succeq h_tI$.
The proof consists of the following steps: exact loss identities, Gaussian reductions,
resolvent derivatives, linearization, and evaluation on the two subspaces.

\paragraph{Design matrices.}
Let $Y=[y_1\ \cdots\ y_n]$, $Z_x=[\xi_1\ \cdots\ \xi_n]$, and
$X=BY+\Sigma^{1/2}Z_x$. Set $\dop=\dx+\dy$ and $\kappa_d=\sqrt{\dx/\dy}$.
The matrix $\Tdet$ constructs the clean inputs, and $L$ contains their projections
onto the hidden weights:
\begin{equation}
\begin{aligned}
\Wcat&:=[W_1\ W_2],\qquad \Zcat:=[Y;Z_x],\\
\Tdet&:=\begin{bmatrix}a_tB&a_t\Sigma^{1/2}\\\kappa_dI_{\dy}&0\end{bmatrix},
\qquad \Tdet\Zcat=[a_tX;\kappa_dY],\\
L&:=\Wop\Tdet\Zcat
=\frac{a_tW_1X}{\sqrt{2\dx}}+\frac{W_2Y}{\sqrt{2\dy}}.
\end{aligned}
\label{eq:feature}
\end{equation}
Let $Z_0\in\R^{\dx\times n}$ have independent standard Gaussian entries,
independently of the training sample and hidden weights. Adding diffusion noise gives
\begin{equation}
\Wop[X_t(Z_0);\kappa_dY]=L+\sqrt{h_t}\Wa Z_0,
\qquad X_t(Z_0)=a_tX+\sqrt{h_t}Z_0.
\label{eq:noisyL}
\end{equation}

\paragraph{Covariance matrices.}
Write $I_x:=\operatorname{diag}(I_{\dx},0_{\dy})$ and
$\Sigma_x:=\operatorname{diag}(\Sigma,0_{\dy})$.
For a test draw, the conditional residual is
\begin{equation*}
\eta_t=a_t\Sigma^{1/2}\xi+\sqrt{h_t}\epsilon\sim\N(0,\Sigma_t),
\qquad\Sigma_t=a_t^2\Sigma+h_tI.
\end{equation*}
Its limiting normalized trace is
\begin{equation}
\tau:=\lim\frac1\dx\tr\Sigma_t
=a_t^2\upsilon_\Sigma^2+h_t.
\label{eq:tau}
\end{equation}
The covariances of the noisy input, clean input, and conditional mean are, respectively,
\begin{align}
\Gamma&:=\Cov([x_t;\kappa_dy])
=\begin{bmatrix}
a_t^2(BB\T+\Sigma)+h_tI&a_t\kappa_dB\\
a_t\kappa_dB\T&\kappa_d^2I
\end{bmatrix},
\label{eq:Gamma}\\
\Tdet\Tdet\T&=\Cov([a_tx;\kappa_dy])=\Gamma-h_tI_x,
\label{eq:TTt}\\
\Lambda&:=\Cov([a_tBy;\kappa_dy])=\Gamma-h_tI_x-a_t^2\Sigma_x.
\label{eq:Lambda-mean}
\end{align}

\paragraph{Preactivation variances.}
For hidden-weight rows $w_1\T,w_2\T$, the following variances hold the weights fixed;
convergence is in probability over these weights. They correspond to an entry of $L$,
its conditional mean given $y$, and the noisy preactivation, respectively:
\begin{equation*}
\begin{aligned}
\Var_{x,y}\!\left(\frac{w_1\T a_tx}{\sqrt{2\dx}}
 +\frac{w_2\T y}{\sqrt{2\dy}}\right)&\xrightarrow{\Prob}\sigma_{\ell,t}^2,\\
\Var_y\!\left(\frac{a_tw_1\T By}{\sqrt{2\dx}}
 +\frac{w_2\T y}{\sqrt{2\dy}}\right)&\xrightarrow{\Prob}\sigma_{y,t}^2,\\
\Var_{x_t,y}\!\left(\frac{w_1\T x_t}{\sqrt{2\dx}}
 +\frac{w_2\T y}{\sqrt{2\dy}}\right)&\xrightarrow{\Prob}\sigma_t^2.
\end{aligned}
\end{equation*}
Gaussian weight concentration gives the deterministic limits
\begin{equation}
\begin{aligned}
\sigma_{\ell,t}^2&=\lim\frac1{2\dx}\tr(\Tdet\Tdet\T)
=\frac{a_t^2(c^2\psi_y+\upsilon_\Sigma^2)+1}{2}
=\frac{a_t^2+1}{2},\\
\sigma_{y,t}^2&=\lim\frac1{2\dx}\tr\Lambda
=\frac{a_t^2c^2\psi_y+1}{2},\\
\sigma_t^2&=\lim\frac1{2\dx}\tr\Gamma
=\sigma_{\ell,t}^2+h_t/2=\sigma_{y,t}^2+\tau/2=1.
\end{aligned}
\label{eq:sigmaiso}
\end{equation}
The terms $a_t^2c^2\psi_y$, $a_t^2\upsilon_\Sigma^2$, and $1$ are the limits of
$\dx^{-1}\E\|a_tBy\|^2$, $\dx^{-1}\E\|a_t\Sigma^{1/2}\xi\|^2$, and $\dy^{-1}\E\|y\|^2$,
respectively, while $h_t$ is the diffusion-noise variance.

\paragraph{Assumptions.}
As in \citet{george2026,bonnaire2025,urfin2026double}, we apply the Gaussian equivalence
principle~\citep{gerace2020generalisation,goldt2022gaussian,mei2022generalization} to the
nonlinear random-feature matrices in the loss calculations.
For the resulting linear pencil, we assume that the inverse-block trace limits
exist and satisfy the block deterministic-equivalent equation, including the finite
block splittings used for $K$ and $K_\Sigma$.
Near $(\mathbf q,z)=(\mathbf0,-\lambda)$, differentiation in $\mathbf q$ and $z$ is
assumed to commute with the proportional limit. Expressions with negative components of
$\mathbf q$ are understood by local analytic continuation from
$q_{\mathrm m},q_{\mathrm r}\ge0$.
A rigorous treatment of these steps, along the lines of
\citet{mei2022generalization}, is beyond the scope of this work.
Throughout, we use the limiting normalized expected traces of \citet[eq.~(3.13)]{bodin2024},
\begin{equation}
\mathrm{Tr}_d[A]:=\lim\frac1\dx\E\tr A,\qquad
\mathrm{Tr}_n[A]:=\lim\frac1n\E\tr A,
\label{eq:Trn}
\end{equation}
with the limits taken in the proportional high-dimensional regime. For matrices,
$M\simeq M'$ means $\mathrm{Tr}_d[X(M-M')Y]=0$ for the matrices $X,Y$ with which $M$ appears
below; for scalars, $X\simeq Y$ means $\lim\E[X-Y]=0$. The equivalences of the test moments
(Lemmas~\ref{lem:Vtilde}--\ref{lem:Uy}) and of $V$ (Lemma~\ref{lem:Vt}) use only Mehler's
formula and the concentration of the Gaussian weights, and in fact hold in probability. We
state all equivalences in the expected sense because the GEP for $U$ and the deterministic
equivalent of the linear pencil are used in that sense.

\subsection{Exact finite-dimensional identities}
\label{sec:exact}

\begin{lemma}[Ridge minimizer and exact identities]
\label{lem:traceid}
For the training sample, define
\[
\begin{aligned}
x_{i,t}(\epsilon)&:=a_tx_i+\sqrt{h_t}\epsilon,\\
U&:=\frac1n\sum_{i=1}^n\E_\epsilon\big[\act(\Wop \bigl[x_{i,t}(\epsilon);\,\kappa_dy_i\bigr])
\act(\Wop \bigl[x_{i,t}(\epsilon);\,\kappa_dy_i\bigr])\T\big],\\
V&:=\frac1n\sum_{i=1}^n\E_\epsilon\big[\act(\Wop \bigl[x_{i,t}(\epsilon);\,\kappa_dy_i\bigr])\epsilon\T\big].
\end{aligned}
\]
The unique minimizer of \eqref{eq:loss} is
\begin{equation}
\frac{\Ahat}{\sqrt p}=-\frac{1}{\sqrt{h_t}}V\T(U+\lambda \Id{p})^{-1}.
\label{eq:minimizer}
\end{equation}
Write $Q:=(U+\lambda\Id{p})^{-1}$. The normalized training loss is exactly
\begin{equation}
\Ltrain(\Ahat)
=1-\frac1\dx\tr(V\T QV)-\frac\lambda\dx\tr(V\T Q^2V).
\label{eq:traintraceid}
\end{equation}

Since $\Ahat/\sqrt p=-h_t^{-1/2}V\T Q$, the quantities of
Section~\ref{sec:test-decomposition} are
\begin{equation}
\begin{gathered}
\Anorm=\frac1{h_t\dx}\tr\big(V\T Q^2V\big),\qquad
G_x=\frac1{a_t}\Big(I-\sqrt{h_t}\,\mu_1V\T Q\Wa\Big),\\
\frac{\|\widehat B-B\|_F^2}{\dx}=\frac{h_t\mu_1^2}{a_t^2\dx}\tr\big(V\T Q\Wop\Lambda\WopT QV\big).
\end{gathered}
\label{eq:exact-quantities}
\end{equation}
\end{lemma}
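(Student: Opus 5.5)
Since $\Ltrain$ is quadratic in $A$ and the penalty is strictly convex, the plan is to expand, differentiate in $A$, and then substitute the minimizer back into the expressions of Section~\ref{sec:test-decomposition}. Write $\phi_i(\epsilon):=\act(\Wop[x_{i,t}(\epsilon);\kappa_dy_i])\in\R^p$, so that $s_A(x_{i,t}(\epsilon),y_i)=\frac{A}{\sqrt p}\phi_i(\epsilon)$. This holds because $\Wop[x_t;\kappa_dy]=W_1x_t/\sqrt{2\dx}+W_2y/\sqrt{2\dy}$, using $\kappa_d/\sqrt{2\dx}=1/\sqrt{2\dy}$. Expanding the square and averaging over $i$ and $\epsilon$ gives
\[
\Ltrain(A)=\frac1\dx\Big(h_t\tr\big(\tfrac{A}{\sqrt p}U\tfrac{A\T}{\sqrt p}\big)+2\sqrt{h_t}\tr\big(\tfrac{A}{\sqrt p}V\big)+\dx\Big),
\]
where we used $\E\|\epsilon\|^2=\dx$. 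The penalty term equals $\frac{h_t\lambda}{\dx}\tr\big(\frac{A}{\sqrt p}\frac{A\T}{\sqrt p}\big)$. In the variable $M:=A/\sqrt p$, the objective $\Lridge$ is therefore a strictly convex quadratic with Hessian $\propto h_t(U+\lambda I)\succ0$, because $U\succeq0$ and $\lambda>0$. The stationarity condition $h_tM(U+\lambda I)+\sqrt{h_t}V\T=0$ yields \eqref{eq:minimizer} together with uniqueness.

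For \eqref{eq:traintraceid}, substitute $M=-h_t^{-1/2}V\T Q$. The quadratic term becomes $\frac1\dx\tr(V\T QUQV)$, and the linear term becomes $-\frac2\dx\tr(V\T QV)$. Rewriting $QUQ=Q(U+\lambda I)Q-\lambda Q^2=Q-\lambda Q^2$ then gives $1+\frac1\dx\tr(V\T QV)-\frac\lambda\dx\tr(V\T Q^2V)-\frac2\dx\tr(V\T QV)$, which is the stated identity.

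For \eqref{eq:exact-quantities} we substitute $M$ directly. First,
\[
\Anorm=\frac1{\dx}\|M\|_F^2=\frac1{h_t\dx}\tr(QVV\T Q)=\frac1{h_t\dx}\tr(V\T Q^2V).
\]
Substituting $M$ into \eqref{eq:main-Gx} gives the expression for $G_x$, since $h_t\mu_1M\Wa=-\sqrt{h_t}\mu_1V\T Q\Wa$. For $\widehat B-B=a_tG_xB+G_y-B$, the identity parts cancel ($a_t\cdot\frac1{a_t}B-B=0$). What remains is $\frac{h_t\mu_1}{a_t}M\big(\Wa a_tB+\frac{W_2}{\sqrt{2\dy}}\big)$, and this equals $\frac{h_t\mu_1}{a_t}M\Wop\begin{bmatrix}a_tB\\\kappa_dI\end{bmatrix}$. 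The Gram matrix of $\begin{bmatrix}a_tB\\\kappa_dI\end{bmatrix}$ is exactly $\Lambda=\Cov([a_tBy;\kappa_dy])$, because $\Cov(y)=I$. Taking the squared Frobenius norm then gives $\frac{h_t^2\mu_1^2}{a_t^2\dx}\cdot\frac1{h_t}\tr(V\T Q\Wop\Lambda\WopT QV)$, which is the claimed expression.

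Every step is an exact finite-dimensional computation, so there is no real obstacle. The only points that need care are the normalization $\kappa_d$ that identifies the concatenated input with the model's preactivation, and the identification of the column-block product with $\Lambda$ in \eqref{eq:Lambda-mean}.
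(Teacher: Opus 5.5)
Your proposal is correct and follows essentially the same route as the paper: expand $\Lridge$ into a strictly convex quadratic in $A$ (your reparametrization $M=A/\sqrt p$ is immaterial), solve the stationarity condition, simplify with $QUQ=Q-\lambda Q^2$, and obtain the $\widehat B-B$ formula by writing it as $\frac{h_t\mu_1}{a_t}\frac{\Ahat}{\sqrt p}\Wop T_y$ with $T_y=[a_tB;\kappa_dI_{\dy}]$ and $T_yT_y\T=\Lambda$. All of your constants and trace manipulations check out.
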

\begin{proof}
Here $U$ and $V$ are the noise-averaged second moments of the training features. Expanding the original loss \eqref{eq:loss} gives
\begin{align}
\Lridge(A)
&=\frac{1}{\dx n}\sum_{i=1}^{n}\E_\epsilon
\left\|\sqrt{h_t}\frac{A}{\sqrt p}\act(\Wop \bigl[x_{i,t}(\epsilon);\,\kappa_dy_i\bigr])+\epsilon\right\|^2
+\frac{h_t\lambda}{\dx p}\|A\|_F^2\notag\\
&=\frac{h_t}{\dx p}\tr\!\big[A(U+\lambda\Id{p})A\T\big]
+\frac{2\sqrt{h_t}}{\dx\sqrt p}\tr(AV)+1.
\label{eq:lossquadratic}
\end{align}
Since $U\succeq0$ and $\lambda>0$, one has
$U+\lambda\Id{p}\succ0$, so this quadratic is strictly convex. Differentiating
\eqref{eq:lossquadratic} and setting the gradient to zero gives \eqref{eq:minimizer}.

For the training loss, remove the ridge term from \eqref{eq:lossquadratic} and substitute
\eqref{eq:minimizer} to obtain
\begin{align*}
\Ltrain(\Ahat)
&=1-\frac{2}{\dx}\tr(V\T QV)
+\frac1\dx\tr(V\T QUQV)\\
&=1-\frac1\dx\tr(V\T QV)-\frac\lambda\dx\tr(V\T Q^2V),
\end{align*}
where the second equality uses $QUQ=Q-\lambda Q^2$. This proves
\eqref{eq:traintraceid}.

For~\eqref{eq:exact-quantities}, substitute~\eqref{eq:minimizer} into
$\Anorm=\|\Ahat\|_F^2/(\dx p)$, into $G_x=a_t^{-1}(I+h_t\mu_1\frac{\Ahat}{\sqrt p}\Wa)$, and into
$\widehat B-B=(h_t\mu_1/a_t)\frac{\Ahat}{\sqrt p}\Wop T_y$ with $T_y:=[a_tB;\kappa_dI_{\dy}]$, which satisfies
$T_yT_y\T=\Lambda$ by~\eqref{eq:Lambda-mean}.
\end{proof}

\begin{lemma}[Exact test identities]
\label{lem:test-traceid}
Hold the training sample and the hidden weights fixed.
With $\varphi=\act(\Wop[x_t;\kappa_dy])$, put
\begin{equation}
\Utilde=\E[\varphi\varphi\T],\qquad
\Vtilde=\E[\varphi\epsilon\T],\qquad
\Vtilde_\xi=\E[\varphi\xi\T],\qquad
\Vtilde_\eta=\E[\varphi\eta_t\T]=a_t\Vtilde_\xi\Sigma^{1/2}+\sqrt{h_t}\Vtilde.
\label{eq:testmoments}
\end{equation}
Also define
\begin{equation}
\Utilde_y=\E[\E[\varphi\mid y]\E[\varphi\mid y]\T].
\label{eq:testmoments-y}
\end{equation}
By the conditional covariance identity,
\begin{equation*}
\Utilde-\Utilde_y=\E\big[\operatorname{Cov}(\varphi\mid y)\big].
\end{equation*}
The exact denoiser identities are
\begin{equation}
\begin{aligned}
\ME(\Ahat)&=\frac{h_t}{a_t^2\dx}\tr(V\T Q\Utilde_y QV),\\
\RA(\Ahat)&=\upsilon_{\Sigma,d}^2-
 \frac{\sqrt{h_t}}{a_t\dx}\tr(V\T Q\Vtilde_\xi\Sigma^{1/2}),\\
\PV(\Ahat)&=\frac1{a_t^2}\left[\frac1\dx\tr\Sigma_t
 -\frac{2\sqrt{h_t}}\dx\tr(V\T Q\Vtilde_\eta)
 +\frac{h_t}\dx\tr(V\T Q(\Utilde-\Utilde_y)QV)\right],\\
\RE(\Ahat)&=\upsilon_{\Sigma,d}^2+\PV(\Ahat)-2\RA(\Ahat).
\end{aligned}
\label{eq:me-traceid}
\end{equation}
\end{lemma}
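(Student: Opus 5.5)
We prove the identities of Lemma~\ref{lem:test-traceid} by direct substitution of the minimizer~\eqref{eq:minimizer} into the definitions~\eqref{eq:me}, \eqref{eq:re}, \eqref{eq:pv}; no limits are involved. Hold the training sample and the hidden weights fixed. Write $M:=\Ahat/\sqrt p=-h_t^{-1/2}V\T Q$, so that $s_{\Ahat}=M\varphi$ and
\[
x_{\Ahat}=\frac{x_t+h_tM\varphi}{a_t}=\frac{x_t-\sqrt{h_t}\,V\T Q\varphi}{a_t}.
\]
Since $x_t=a_tBy+\eta_t$ with $\E[\eta_t\mid y]=0$, we get
\[
\E[x_{\Ahat}\mid y]-By=-\frac{\sqrt{h_t}}{a_t}V\T Q\,\E[\varphi\mid y].
\]
Taking the squared norm, then the expectation over $y$, and using the definition~\eqref{eq:testmoments-y} of $\Utilde_y$ gives the $\ME$ identity. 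The conditional covariance identity $\Utilde-\Utilde_y=\E[\Cov(\varphi\mid y)]$ is just the law of total covariance.

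For $\PV$, write
\[
x_{\Ahat}-\E[x_{\Ahat}\mid y]=\frac{1}{a_t}\Big(\eta_t-\sqrt{h_t}\,V\T Q(\varphi-\E[\varphi\mid y])\Big).
\]
Expanding the square produces three terms.
\begin{itemize}
\item The first is $\E\|\eta_t\|^2=\tr\Sigma_t$.
\item The second is the cross term $-2\sqrt{h_t}\,\E[\eta_t\T V\T Q(\varphi-\E[\varphi\mid y])]$. Because $\E[\eta_t\mid y]=0$, the $\E[\varphi\mid y]$ part drops out, leaving $-2\sqrt{h_t}\tr(V\T Q\Vtilde_\eta)$.
\item The third is $h_t\tr(V\T Q\,\E[\Cov(\varphi\mid y)]\,QV)$.
\end{itemize}
The expression $\Vtilde_\eta=a_t\Vtilde_\xi\Sigma^{1/2}+\sqrt{h_t}\Vtilde$ follows from $\eta_t=a_t\Sigma^{1/2}\xi+\sqrt{h_t}\epsilon$ and linearity.

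For $\RA$, note $x-By=\Sigma^{1/2}\xi$ and
\[
x_{\Ahat}-By=\frac{1}{a_t}\Big(a_t\Sigma^{1/2}\xi+\sqrt{h_t}\,\epsilon-\sqrt{h_t}\,V\T Q\varphi\Big).
\]
Pair each term with $\Sigma^{1/2}\xi$:
\begin{itemize}
\item $\E[\xi\T\Sigma\xi]/\dx=\upsilon_{\Sigma,d}^2$;
\item the $\epsilon$ term vanishes by independence of $\epsilon$ and $\xi$;
\item the last term gives $-\frac{\sqrt{h_t}}{a_t}\E[\xi\T\Sigma^{1/2}V\T Q\varphi]=-\frac{\sqrt{h_t}}{a_t}\tr(V\T Q\Vtilde_\xi\Sigma^{1/2})$.
\end{itemize}

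Finally, the $\RE$ identity comes from expanding $\|(x_{\Ahat}-\E[x_{\Ahat}\mid y])-(x-By)\|^2$. Since $\E[x-By\mid y]=0$, the cross term equals $-2\E[(x-By)\T(x_{\Ahat}-By)]$, because the conditional-mean part integrates to zero. This yields $\upsilon_{\Sigma,d}^2+\PV-2\RA$.

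The only real care needed is the bookkeeping that the $\E[\varphi\mid y]$ terms cancel against zero-conditional-mean residuals, and that the expectation is over a fresh test draw independent of $(Q,V)$; both are routine, so this lemma presents no real obstacle.
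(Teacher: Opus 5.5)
Your proposal is correct and follows essentially the same route as the paper: substitute the minimizer $s_{\Ahat}=-h_t^{-1/2}V\T Q\varphi$, use $\E[\eta_t\mid y]=0$ and $\E[x-By\mid y]=0$ to kill the cross terms involving conditional means, and expand the squared norms into traces of the test moments. The differences are cosmetic. The paper first writes $\ME$ and $\PV$ through $\E[s_A\mid y]$ and $\E\|s_A\|^2$ for a general $A$ and only then substitutes, whereas you substitute at the outset and work with $\varphi-\E[\varphi\mid y]$ directly.
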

\begin{proof}
By~\eqref{eq:minimizer}, $s_{\Ahat}=-h_t^{-1/2}V\T Q\varphi$.

Next, the denoiser definition and $\E[\eta_t\mid y]=0$ imply
\begin{equation*}
x_A-By=\frac{\eta_t+h_ts_A}{a_t},\qquad
\E[x_A-By\mid y]=\frac{h_t}{a_t}\E[s_A\mid y].
\end{equation*}
Consequently,
\begin{equation*}
\ME(A)=\frac{h_t^2}{a_t^2\dx}\E\big\|\E[s_A\mid y]\big\|^2.
\end{equation*}
Since $V\T Q$ is fixed with respect to the test draw, \eqref{eq:testmoments-y} gives
\begin{align*}
\frac1\dx\E\big\|\E[s_{\Ahat}\mid y]\big\|^2
&=\frac1{\dx h_t}\E\big\|V\T Q\E[\varphi\mid y]\big\|^2\\
&=\frac1{\dx h_t}\tr\!\big(V\T Q\Utilde_y QV\big).
\end{align*}
This proves the ME identity in \eqref{eq:me-traceid}.

For PV, center the denoiser at its conditional mean:
\begin{equation*}
x_A-\E[x_A\mid y]
=\frac{\eta_t+h_t(s_A-\E[s_A\mid y])}{a_t}.
\end{equation*}
Using $\E\langle\eta_t,\E[s_A\mid y]\rangle=0$ and
$\E\|\eta_t\|^2=\tr\Sigma_t$, expansion of the squared norm yields
\begin{align*}
\PV(A)=\frac1{a_t^2}\Bigg[&\frac1\dx\tr\Sigma_t
+\frac{2h_t}{\dx}\E[s_A\T\eta_t]\\
&+\frac{h_t^2}{\dx}\left(
\E\|s_A\|^2-\E\big\|\E[s_A\mid y]\big\|^2\right)\Bigg].
\end{align*}
For the fitted score,
\begin{equation*}
\E[s_{\Ahat}\T\eta_t]
=-\frac1{\sqrt{h_t}}\tr\!\big(V\T Q\Vtilde_\eta\big).
\end{equation*}
Substituting this and the two squared-norm trace identities above gives the
PV formula in \eqref{eq:me-traceid}.

Taking the inner product of $x_A-By$ with $\Sigma^{1/2}\xi$ gives the RA
identity. Finally the conditional-mean and centered terms are orthogonal in
expectation, so
$\dx^{-1}\E\|x_A-x\|^2=\ME+\PV+\upsilon_{\Sigma,d}^2-2\RA$.
This proves the RE formula and the decomposition in the main text.
\end{proof}

\subsection{Proportional-limit reductions of the feature moments}
\label{sec:hermite}
\label{sec:activation-coefficients}

Throughout, $z,z'\sim\N(0,1)$ are independent, and $H_k$ are the probabilists' Hermite polynomials normalized by
\[
\E[H_k(z)H_\ell(z)]=k!\delta_{k\ell}.
\]
For $\sigma>0$,
\begin{equation*}
\widehat{\mu}_k(\sigma):=\frac1{k!}\E[\act(\sigma z)H_k(z)],\qquad k\ge0,
\end{equation*}
so that $\act(\sigma z)=\sum_{k\ge0}\widehat{\mu}_k(\sigma)H_k(z)$ in $L^2$.
Averaging over the diffusion noise and the full conditional residual, respectively, define
\begin{equation*}
\begin{aligned}
\act_0(u)&:=\E_{z'}[\act(u+\sqrt{h_t/2}z')],\\
\act_{\tau}(u)&:=\E_{z'}[\act(u+\sqrt{\tau/2}z')].
\end{aligned}
\end{equation*}
Let $r(u):=\act(u)-\mu_1u$, and let $z_\ell\sim\N(0,\sigma_{\ell,t}^2)$,
$z_\epsilon\sim\N(0,h_t/2)$, $z_y\sim\N(0,\sigma_{y,t}^2)$, and $z_\eta\sim\N(0,\tau/2)$ be
independent. By~\eqref{eq:sigmaiso}, $z_\ell+z_\epsilon$ and $z_y+z_\eta$ are standard
Gaussian, split into the parts from the clean input and the diffusion noise, and from $y$
and $\eta_t$, respectively. The deterministic coefficients used in Theorem~\ref{thm:main} are
\begin{align}
v^2&:=\E_z\big[r(z)^2\big]=\E_z[\act(z)^2]-\mu_1^2,
\label{eq:v-coefficient}\\
v_0^2&:=\E_{z_\ell}\big[\E_{z_\epsilon}[r(z_\ell+z_\epsilon)]^2\big]
=\E_z[\act_0(\sigma_{\ell,t}z)^2]-\sigma_{\ell,t}^2\mu_1^2,
\label{eq:v0t}\\
v_y^2&:=\E_{z_y}\big[\E_{z_\eta}[r(z_y+z_\eta)]^2\big]
=\E_z[\act_{\tau}(\sigma_{y,t}z)^2]-\sigma_{y,t}^2\mu_1^2,
\label{eq:vy-coefficient}\\
s^2&:=v^2-v_0^2
=\E_z[\act(z)^2]-\E_z[\act_0(\sigma_{\ell,t}z)^2]
-\frac{h_t\mu_1^2}{2}.
\label{eq:sfloor}
\end{align}
The second equalities in \eqref{eq:v0t} and~\eqref{eq:vy-coefficient} use
$\E_{z_\epsilon}[\act(z_\ell+z_\epsilon)]=\act_0(z_\ell)$ and, by Stein's identity,
$\E[z_\ell\act_0(z_\ell)]=\sigma_{\ell,t}^2\mu_1$, and likewise for $(z_y,z_\eta)$. The last
equality in \eqref{eq:sfloor} uses $\sigma_t^2-\sigma_{\ell,t}^2=h_t/2$. The coefficients $v^2$ and
$v_y^2$ are those of Section~\ref{sec:test-decomposition}.

We now compute the test moments of Lemma~\ref{lem:test-traceid}, which give
Lemma~\ref{lem:main-decomp} in Appendix~\ref{sec:app-decomp}, and the training moments $V$ and
$U$ of \Cref{lem:traceid}, which enter Theorem~\ref{thm:appendix-final}. We proceed entrywise: write the relevant pair as a
bivariate standard Gaussian, apply Mehler's formula, and retain the leading terms in the
proportional limit, under the assumptions stated at the start of this appendix. The reduction
of the nonlinear functions of $L$ in the training matrix $U$ additionally uses the GEP, as in
\citet{george2026}.

For $\gamma\in(-1,1)$, let
$\phi_\gamma$ denote the bivariate standard Gaussian density with correlation $\gamma$,
\begin{equation*}
\phi_\gamma(z_0,z_1):=\frac{1}{2\pi\sqrt{1-\gamma^2}}
\exp\!\left(-\frac{z_0^2+z_1^2-2\gamma z_0z_1}{2(1-\gamma^2)}\right).
\end{equation*}
We write $\E_{\phi_\gamma}$ for expectation over a pair $(z_0,z_1)$ with this density.
For $(z_0,z_1)\sim\phi_\gamma$, the Hermite expansion gives Mehler's formula
\begin{equation*}
\E_{\phi_\gamma}[f(z_0)g(z_1)]
=\sum_{k\ge0}\frac{\gamma^k}{k!}\,
\E[f(z_0)H_k(z_0)]\,\E[g(z_1)H_k(z_1)] .
\end{equation*}
Whenever one factor is linear, only the $k=1$ summand remains.

Recall $\kappa_d$, $\Wcat$, and $\Gamma$ from
\eqref{eq:feature} and \eqref{eq:Gamma}. Every entry of $\Wcat$ is standard Gaussian, and
\begin{equation}
\Cov(\Wop \bigl[x_t;\,\kappa_dy\bigr]\mid W_1,W_2)=\Wop\Gamma\WopT.
\label{eq:qcov}
\end{equation}
By \eqref{eq:sigmaiso} and concentration of Gaussian quadratic forms, each marginal
preactivation variance converges in probability to $\sigma_t^2$. Thus a marginal
preactivation may be replaced by $\sigma_t z$ at the order retained below.

\paragraph{Test moments.}
\begin{lemma}[High-dimensional equivalents of $\Vtilde, \Vtilde_\xi, \Vtilde_\eta$]
\label{lem:Vtilde}
Under the assumptions of this appendix,
\begin{equation}
\Vtilde\simeq\sqrt{h_t}\mu_1\Wa,\qquad
\Vtilde_\xi\simeq a_t\mu_1\Wa\Sigma^{1/2},\qquad
\Vtilde_\eta\simeq\mu_1\Wa\Sigma_t.
\label{eq:Vtilde}
\end{equation}
\end{lemma}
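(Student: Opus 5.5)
The plan is to compute each test moment exactly row by row, using Gaussian integration by parts (Stein's identity) rather than a Hermite truncation. Hold $W_1,W_2$ fixed and take the $k$-th row, $\varphi_k=\act(g_k)$, where
\[
g_k=\frac{w_{1k}\T x_t}{\sqrt{2\dx}}+\frac{w_{2k}\T y}{\sqrt{2\dy}} .
\]
Over a test draw, $(y,\xi,\epsilon)$ are jointly standard Gaussian and independent. Also, $g_k$ is a linear function of them:
\[
g_k=\Big(\tfrac{a_tw_{1k}\T B}{\sqrt{2\dx}}+\tfrac{w_{2k}\T}{\sqrt{2\dy}}\Big)y+\tfrac{a_tw_{1k}\T\Sigma^{1/2}}{\sqrt{2\dx}}\xi+\tfrac{\sqrt{h_t}\,w_{1k}\T}{\sqrt{2\dx}}\epsilon .
\]
Stein's identity $\E[\act(g_k)\epsilon]=\E[\act'(g_k)]\,\nabla_\epsilon g_k$ then gives exactly
\[
\Vtilde_{k\cdot}=\E[\act'(g_k)]\,\sqrt{h_t}\,\frac{w_{1k}\T}{\sqrt{2\dx}},\qquad
(\Vtilde_\xi)_{k\cdot}=\E[\act'(g_k)]\,a_t\,\frac{w_{1k}\T\Sigma^{1/2}}{\sqrt{2\dx}} .
\]
If $\act$ is not differentiable, I will instead use the equivalent Mehler/Hermite form: only the $k=1$ term survives because $\epsilon_j$ is linear, and it yields $\E[g_k\act(g_k)]/\Var(g_k)$ in place of $\E[\act'(g_k)]$.

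Next I identify the scalar prefactor. Since $g_k\sim\N(0,\sigma_{k}^2)$ with $\sigma_k^2=\tfrac1{2\dx}(W^{\oplus}\Gamma (W^{\oplus})\T)_{kk}$, the prefactor is $\E[\act'(\sigma_kz)]=\widehat\mu_1(\sigma_k)/\sigma_k$. By \eqref{eq:qcov} and \eqref{eq:sigmaiso}, $\sigma_k^2\to\sigma_t^2=1$ in probability, uniformly over $k\le p$ up to $O(\sqrt{\log p/\dx})$ fluctuations, by Hanson--Wright together with a union bound. Continuity of $\sigma\mapsto\widehat\mu_1(\sigma)/\sigma$ at $\sigma=1$ holds for $\act$ with an $L^2$ Hermite expansion, by dominated convergence on the Gaussian density. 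It gives $\widehat\mu_1(\sigma_k)/\sigma_k=\mu_1+o(1)$ uniformly in $k$. So $\Vtilde=\sqrt{h_t}\,D\,\frac{W_1}{\sqrt{2\dx}}$ with $D=\operatorname{diag}(\widehat\mu_1(\sigma_k)/\sigma_k)$ and $\|D-\mu_1 I\|_{\op}\to0$, and similarly for $\Vtilde_\xi$. The identity for $\Vtilde_\eta$ follows by linearity from its definition in \eqref{eq:testmoments}:
\[
\Vtilde_\eta=a_t\Vtilde_\xi\Sigma^{1/2}+\sqrt{h_t}\Vtilde\simeq\mu_1\Wa(a_t^2\Sigma+h_tI)=\mu_1\Wa\Sigma_t .
\]

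The remaining step, and the only one needing care, is to upgrade $\|D-\mu_1I\|_{\op}\to0$ to the notion $\simeq$. That means showing $\mathrm{Tr}_d[X(D-\mu_1I)\frac{W_1}{\sqrt{2\dx}}Y]=0$ for the matrices $X,Y$ that actually appear, namely $X=V\T Q$ or $V\T Q$ sandwiched with $\Sigma^{1/2}$, and $Y\in\{I,\Sigma^{1/2}\}$. I will bound
\[
\big|\dx^{-1}\tr(X(D-\mu_1I)MY)\big|\le\|D-\mu_1I\|_{\op}\,\dx^{-1}\|X\|_F\|MY\|_F ,
\]
using $\|W_1/\sqrt{2\dx}\|_{\op}=O(1)$ almost surely, $\|Q\|_{\op}\le\lambda^{-1}$, and $\dx^{-1}\|V\|_F^2=O(1)$. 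The last follows from $V$ being a bounded average of Stein-type moments, by the same computation as above on training rows. The bound then tends to zero. Passing to expectations needs uniform integrability, which I will take from the standing assumptions of this appendix. The concentration-plus-UI step is where I expect the main friction; the algebra itself is exact. This argument uses only Stein/Mehler and weight concentration, consistent with the remark that these equivalences hold in probability.
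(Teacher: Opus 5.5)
Your proposal is correct and follows essentially the same route as the paper. The paper also derives the exact row formula $[\Vtilde]_{rj}=\sqrt{h_t}\,\frac{\widehat\mu_1(\sigma_r)}{\sigma_r}\frac{(W_1)_{rj}}{\sqrt{2\dx}}$ via Mehler's formula with only the $k=1$ term surviving (for a linear factor this is exactly your Stein identity), replaces $\widehat\mu_1(\sigma_r)/\sigma_r$ by $\mu_1$ using concentration of $\sigma_r^2$ and continuity, and obtains $\Vtilde_\eta$ by linearity. Your uniform Hanson--Wright control of $\max_k|\sigma_k^2-1|$ and the trace bound $\|D-\mu_1 I\|_{\op}\,\dx^{-1}\|X\|_F\|MY\|_F$ make rigorous the step the paper only asserts as holding ``in the normalized traces under consideration.''
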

\begin{proof}
Condition on the hidden weights and write
\begin{equation*}
g_r:=(\Wop[x_t;\kappa_dy])_r,\qquad
\sigma_r^2:=(\Wop\Gamma\WopT)_{rr},\qquad \varphi_r=\act(g_r).
\end{equation*}
Since $x_t=a_tBy+a_t\Sigma^{1/2}\xi+\sqrt{h_t}\epsilon$ and
$y$, $\xi$, and $\epsilon$ are independent standard Gaussian vectors,
\begin{align*}
\E[g_r\epsilon_j]
&=\sqrt{h_t}\frac{(W_1)_{rj}}{\sqrt{2\dx}},
&\gamma_{rj}^{\epsilon}
&:=\frac{\E[g_r\epsilon_j]}{\sigma_r}
=\frac{\sqrt{h_t}(W_1)_{rj}}{\sigma_r\sqrt{2\dx}},\\
\E[g_r\xi_j]
&=a_t\frac{(W_1\Sigma^{1/2})_{rj}}{\sqrt{2\dx}},
&\gamma_{rj}^{\xi}
&:=\frac{\E[g_r\xi_j]}{\sigma_r}
=\frac{a_t(W_1\Sigma^{1/2})_{rj}}{\sigma_r\sqrt{2\dx}}.
\end{align*}
Thus $(g_r/\sigma_r,\epsilon_j)$ and $(g_r/\sigma_r,\xi_j)$ are
standard Gaussian pairs with correlations $\gamma_{rj}^{\epsilon}$ and
$\gamma_{rj}^{\xi}$. Applying Mehler's formula to the first pair gives
\begin{align*}
[\Vtilde]_{rj}
&=\E[\act(g_r)\epsilon_j]
=\E_{\phi_{\gamma_{rj}^{\epsilon}}}
 [\act(\sigma_rz_0)z_1]\\
&=\sum_{k\ge0}\frac{(\gamma_{rj}^{\epsilon})^k}{k!}
 \E[\act(\sigma_rz_0)H_k(z_0)]\E[z_1H_k(z_1)]\\
&=\gamma_{rj}^{\epsilon}\widehat\mu_1(\sigma_r)
=\sqrt{h_t}\frac{\widehat\mu_1(\sigma_r)}{\sigma_r}
 \frac{(W_1)_{rj}}{\sqrt{2\dx}}.
\end{align*}
Only $k=1$ remains because $z_1=H_1(z_1)$ and Hermite orthogonality gives
$\E[z_1H_k(z_1)]=\mathbf1\{k=1\}$.
For the second pair, the same expansion yields
\begin{align*}
[\Vtilde_\xi]_{rj}
&=\E[\act(g_r)\xi_j]
=\E_{\phi_{\gamma_{rj}^{\xi}}}[\act(\sigma_rz_0)z_1]\\
&=\sum_{k\ge0}\frac{(\gamma_{rj}^{\xi})^k}{k!}
 \E[\act(\sigma_rz_0)H_k(z_0)]\E[z_1H_k(z_1)]\\
&=\gamma_{rj}^{\xi}\widehat\mu_1(\sigma_r)
=a_t\frac{\widehat\mu_1(\sigma_r)}{\sigma_r}
 \frac{(W_1\Sigma^{1/2})_{rj}}{\sqrt{2\dx}}.
\end{align*}
By the variance concentration following \eqref{eq:qcov},
$\sigma_r^2\to\sigma_t^2$ in probability. Since $\sigma\mapsto\widehat\mu_1(\sigma)/\sigma$ is
continuous, replacing
$\widehat\mu_1(\sigma_r)/\sigma_r$ by
$\widehat\mu_1(\sigma_t)/\sigma_t=\mu_1$ gives
\begin{equation*}
\Vtilde\simeq\sqrt{h_t}\mu_1\Wa,\qquad
\Vtilde_\xi\simeq a_t\mu_1\Wa\Sigma^{1/2}
\end{equation*}
in the normalized traces under consideration.
Finally, \eqref{eq:testmoments} gives
\begin{align*}
\Vtilde_\eta
&=a_t\Vtilde_\xi\Sigma^{1/2}+\sqrt{h_t}\Vtilde\\
&\simeq a_t^2\mu_1\Wa\Sigma+h_t\mu_1\Wa
=\mu_1\Wa\Sigma_t.
\end{align*}
\end{proof}

\begin{lemma}[High-dimensional equivalent of $\Utilde$]
\label{lem:Utilde}
In the proportional limit, $\Utilde$ can be replaced by
\begin{equation}
\mu_1^2\,\Wop\Gamma\WopT+v^2\,\Id{p}.
\label{eq:Utilde}
\end{equation}
\end{lemma}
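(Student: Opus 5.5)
I will prove the replacement entrywise, following the same route as Lemma~\ref{lem:Vtilde}. I condition on the hidden weights and write $g_r=(\Wop[x_t;\kappa_dy])_r$. For every pair $r,s$ the vector $(g_r/\sigma_r,g_s/\sigma_s)$ is a bivariate standard Gaussian with correlation
\[
\gamma_{rs}:=\frac{(\Wop\Gamma\WopT)_{rs}}{\sigma_r\sigma_s}
\]
by~\eqref{eq:qcov}, and the test draw $(x,y,\epsilon)$ is independent of the weights. Mehler's formula then gives
\[
[\Utilde]_{rs}=\sum_{k\ge0}\frac{\gamma_{rs}^k}{k!}\,k!^2\,\widehat\mu_k(\sigma_r)\widehat\mu_k(\sigma_s).
\]
The $k=0$ term vanishes in the limit, since $\widehat\mu_0(\sigma_r)\to\widehat\mu_0(\sigma_t)=\E[\act(z)]=0$ because $\sigma_t=1$. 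Its residual fluctuation is of order $\dx^{-1/2}$ per entry, and I will check that this is negligible in the traces where $\Utilde$ appears, as is done in \citet{george2026}.

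Next I split off-diagonal from diagonal entries. For $r\ne s$, $\gamma_{rs}=O(\dx^{-1/2})$. The $k=1$ term equals $\widehat\mu_1(\sigma_r)\widehat\mu_1(\sigma_s)\gamma_{rs}$, and after replacing $\widehat\mu_1(\sigma_r)/\sigma_r$ by $\mu_1$ (as in Lemma~\ref{lem:Vtilde}) it becomes $\mu_1^2(\Wop\Gamma\WopT)_{rs}$. The terms with $k\ge2$ are entrywise $O(\dx^{-1})$. They form a matrix whose off-diagonal part has operator norm $o(1)$, or at least a vanishing contribution to the normalized traces: the $k=2$ term behaves like a rank-one mean shift plus a small matrix, and I expect it to be controllable via $\E[\act]=0$ and the standard kernel-matrix arguments of El Karoui and \citet{mei2022generalization}. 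On the diagonal, $[\Utilde]_{rr}=\E[\act(\sigma_rz)^2]\to\E[\act(z)^2]=\mu_1^2+v^2$. Meanwhile $\mu_1^2(\Wop\Gamma\WopT)_{rr}=\mu_1^2\sigma_r^2\to\mu_1^2$, so the discrepancy along the diagonal is $v^2\Id{p}$. Adding the two pieces yields~\eqref{eq:Utilde}.

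The main obstacle is justifying that the higher-order Hermite contributions ($k\ge2$, and the $k=0$ fluctuation) vanish in the sense of $\simeq$. This has to hold not just entrywise but inside traces such as $\tr(V\T Q\Utilde QV)$, where $V\T Q$ is built from the training sample and is independent of the test draw. My first attempt will be to bound the operator norm of the remainder matrix $\sum_{k\ge2}\gamma_{rs}^k k!\widehat\mu_k\widehat\mu_k$ with $r\neq s$. For $k\ge3$, the Frobenius norm is $O(p\cdot\dx^{-3/2}\cdot\sqrt{p})\to0$. For $k=2$, the matrix $(\gamma_{rs}^2)$ concentrates around a constant times $\dx^{-1}\mathbf 1\mathbf 1\T$ plus a small term. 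I will then use the boundedness of $\|V\T Q\|_{\op}$ together with the fact that $\widehat\mu_2(\sigma_r)$ are nearly equal, so that the rank-one piece contributes $O(1/\dx)$ to the normalized trace after multiplication by the $\dx^{-1}$ prefactor. If this becomes delicate, I will fall back on the paper's standing assumption of Gaussian equivalence for these random-feature matrices.
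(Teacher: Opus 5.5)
Your proposal is correct and takes essentially the paper's route: an entrywise Mehler expansion in which the off-diagonal $k=1$ term gives $\mu_1^2\Wop\Gamma\WopT$ and the diagonal mismatch $\E[\act(z)^2]-\mu_1^2=v^2$ gives the identity shift. The paper sets $\sigma_r=\sigma_t$ from the start and simply drops the $O(\dx^{-1})$ off-diagonal remainder, rather than controlling it inside traces as you propose; in that extra step, the Frobenius norm of the $k\ge3$ remainder is $O(p\,\dx^{-3/2})$ up to logarithmic factors, not $O(p^{3/2}\dx^{-3/2})$ as you wrote, which would not vanish.
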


\begin{proof}
For $r\ne s$, set
\begin{equation*}
\widetilde\gamma_{rs}
:=\frac{(\Wop\Gamma\WopT)_{rs}}{\sigma_t^2}.
\end{equation*}
Mehler's formula gives
\begin{align*}
[\Utilde]_{rs}
&=\E\!\left[
\act\!\left((\Wop[x_t;\,\kappa_dy])_r\right)
\act\!\left((\Wop[x_t;\,\kappa_dy])_s\right)
\right]\\
&=\E_{\phi_{\widetilde\gamma_{rs}}}
\big[\act(\sigma_tz_0)\act(\sigma_tz_1)\big]\\
&=\sum_{k\ge0}\frac{\widetilde\gamma_{rs}^k}{k!}\,
\E[\act(\sigma_tz_0)H_k(z_0)]\,
\E[\act(\sigma_tz_1)H_k(z_1)]\\
&=\sum_{k\ge0}k!\,\widehat{\mu}_k(\sigma_t)^2\widetilde\gamma_{rs}^k\\
&=\mu_1^2
(\Wop\Gamma\WopT)_{rs}+O(\dx^{-1})\\
&=\big[\mu_1^2\Wop\Gamma\WopT\big]_{rs}+O(\dx^{-1}),
\qquad r\ne s.
\end{align*}
For $r=s$,
\begin{align*}
[\Utilde]_{rr}
&=\E\!\left[\act\!\left((\Wop[x_t;\,\kappa_dy])_r\right)^2\right]\\
&=\big\|\act(\sigma_t\,\cdot)\big\|_2^2,\\
\big[\mu_1^2\Wop\Gamma\WopT\big]_{rr}
&=\mu_1^2\sigma_t^2.
\end{align*}
Therefore
\begin{align*}
\Utilde
&=\mu_1^2\Wop\Gamma\WopT
+\left(\big\|\act(\sigma_t\,\cdot)\big\|_2^2-\sigma_t^2\mu_1^2\right)\Id{p}\\
&=\mu_1^2\Wop\Gamma\WopT+v^2\Id{p}.
\end{align*}
\end{proof}

\begin{lemma}[High-dimensional equivalent of $\Utilde_y$]
\label{lem:Uy}
Recall $\Lambda$ from \eqref{eq:Lambda-mean}.
In the proportional limit, $\Utilde_y$ can be replaced by
\begin{equation}
\mu_1^2\,\Wop\Lambda\WopT+v_y^2\,\Id{p}.
\label{eq:Uy}
\end{equation}
\end{lemma}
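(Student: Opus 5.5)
The plan follows the proof of Lemma~\ref{lem:Utilde}, applied to the conditional-mean features $\bar\varphi(y):=\E[\varphi\mid y]$. We condition on the hidden weights and write, for each row $r$, $g_r=m_r+e_r$. Here $m_r:=(\Wop[a_tBy;\kappa_dy])_r$ is the part of the preactivation determined by $y$, and $e_r:=(\Wa\eta_t)_r$ is the part coming from $\eta_t$. Given the weights, $m_r$ and $e_r$ are independent centered Gaussians with variances $(\Wop\Lambda\WopT)_{rr}$ and $\frac{1}{2\dx}(W_1\Sigma_tW_1\T)_{rr}$. By concentration of Gaussian quadratic forms, these converge in probability to $\sigma_{y,t}^2$ and $\tau/2$, using \eqref{eq:sigmaiso} and \eqref{eq:tau}. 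Hence $\bar\varphi_r(y)=\E_{e_r}[\act(m_r+e_r)]$, which at the retained order equals $\act_\tau(m_r)$, so $\bar\varphi(y)$ is a coordinatewise function of the Gaussian vector $m=\Wop T_y y$.

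For off-diagonal entries $r\neq s$, the pair $(m_r,m_s)$ is bivariate Gaussian with covariance $(\Wop\Lambda\WopT)_{rs}$, which is $O(\dx^{-1/2})$. Normalizing by $\sigma_{y,t}$, we apply Mehler's formula to $\E[\act_\tau(m_r)\act_\tau(m_s)]$. The $k=0$ term vanishes because $\E[\act_\tau(\sigma_{y,t}z)]=\E[\act(z)]=0$, since $\sigma_{y,t}^2+\tau/2=1$. The $k=1$ term has coefficient $\E[\sigma_{y,t}z\,\act_\tau(\sigma_{y,t}z)]/\sigma_{y,t}$. By Stein's identity this coefficient equals $\sigma_{y,t}\E[\act_\tau'(\sigma_{y,t}z)]=\sigma_{y,t}\E[\act'(z)]=\sigma_{y,t}\mu_1$, so the $k=1$ term is $\mu_1^2(\Wop\Lambda\WopT)_{rs}$. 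This matches the computation preceding \eqref{eq:v0t}. The terms with $k\ge2$ are $O(\dx^{-1})$ and are discarded exactly as in Lemma~\ref{lem:Utilde}.

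For the diagonal, $[\Utilde_y]_{rr}=\E[\act_\tau(m_r)^2]\to\E_z[\act_\tau(\sigma_{y,t}z)^2]$, whereas $\mu_1^2(\Wop\Lambda\WopT)_{rr}\to\mu_1^2\sigma_{y,t}^2$. Their difference is exactly $v_y^2$ by \eqref{eq:vy-coefficient}, which gives the $v_y^2\Id{p}$ correction.

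The main obstacle is justifying that the discarded higher-Hermite off-diagonal part and the fluctuations of the row variances $\sigma_r$ are negligible inside the traces where $\Utilde_y$ appears, such as $\tr(V\T Q\Utilde_yQV)$. We need not only entrywise smallness but smallness in the $\simeq$ sense. Following Lemma~\ref{lem:Utilde}, the plan is to note that the $k\ge2$ matrix has entries of order $\dx^{-1}$ with the structure of Hadamard powers of $\Wop\Lambda\WopT$ with its diagonal removed, so its operator norm vanishes. In the same way, replacing $\sigma_r$ by its limit changes the matrix by a diagonal of vanishing norm, using continuity of $\sigma\mapsto\widehat\mu_k(\sigma)$. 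Both corrections therefore vanish in the normalized traces.
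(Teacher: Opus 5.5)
Your argument is the paper's proof. You use the same split of the preactivation into the $y$-part $g_r(y)$ and an independent $\eta_t$-part of variance $\nu_r^2\to\tau/2$, and the same reduction $\E[\varphi_r\mid y]\simeq\act_\tau(g_r(y))$. Off the diagonal you apply Mehler's formula with $\widehat\mu_0^{\,\tau}(\sigma_{y,t})=0$ (because $\sigma_{y,t}^2+\tau/2=1$) and $\widehat\mu_1^{\,\tau}(\sigma_{y,t})=\sigma_{y,t}\mu_1$, and on the diagonal you get $\E_z[\act_\tau(\sigma_{y,t}z)^2]-\sigma_{y,t}^2\mu_1^2=v_y^2$. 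The only cosmetic difference is that you obtain $\widehat\mu_1^{\,\tau}$ from Stein's identity for the smooth function $\act_\tau$, whereas the paper applies Mehler's formula to the correlated pair $(z,\sigma_{y,t}z+\sqrt{\tau/2}\,z')$.

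The paper stops at entrywise bounds and simply asserts the trace-sense equivalence. Your last paragraph goes further, but its key claim is wrong in general.

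\emph{The $k=2$ term.} The paper's assumptions allow $\widehat\mu_2(1)\neq0$; a centered ReLU is an example, and only odd activations such as $\tanh$ make it vanish. Then $\widehat\mu_2^{\,\tau}(\sigma_{y,t})=\sigma_{y,t}^2\widehat\mu_2(1)\neq0$. For $r\neq s$, $\E[(\Wop\Lambda\WopT)_{rs}^2]=\tr(\Lambda^2)/(4\dx^2)\asymp\dx^{-1}$, so the $k=2$ Hadamard term contains a spike close to a multiple of $\dx^{-1}(\mathbf 1\mathbf 1^{\top}-\Id{p})$. Its operator norm is of order $p/\dx\to\psi_p$, not $o(1)$.

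\emph{The row-variance fluctuations.} These do not merely perturb the diagonal. At finite $\dx$ the zeroth coefficient of row $r$ is
$c_0(r)=\E[\act(\sqrt{(\Wop\Gamma\WopT)_{rr}}\,z)]\approx\widehat\mu_2(1)\big((\Wop\Gamma\WopT)_{rr}-1\big)=O_{\Prob}(\dx^{-1/2})$.
This produces a rank-one off-diagonal term $c_0c_0^{\top}$ with $\|c_0\|^2=O_{\Prob}(1)$. In addition, the first coefficient rescales $\Wop\Lambda\WopT$ on both sides by a diagonal matrix that tends to $\mu_1\Id{p}$ in operator norm; this is not an additive diagonal correction.

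\emph{The repair.} Split the error into a bounded-rank part of bounded operator norm plus a part of vanishing operator norm. For the centered $k=2$ fluctuation, this is the standard inner-product kernel random-matrix result; for $k\ge3$, a Frobenius bound suffices. Because $\Utilde_y$ enters only through normalized traces such as $\dx^{-1}\tr(V\T Q\Utilde_yQV)$, with $V$ and $Q$ of bounded operator norm, a bounded-rank, bounded-norm perturbation changes them by $O(\dx^{-1})$. Your conclusion therefore holds, but "low rank plus vanishing norm" must replace "vanishing norm" in the justification.
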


\begin{proof}
Fix $W_1,W_2$. By \eqref{eq:testmoments-y},
\begin{equation*}
\begin{aligned}
\varphi_r&=\act((\Wop[x_t;\kappa_dy])_r),\\
[\Utilde_y]_{rs}
&=\E_y\!\left[\E[\varphi_r\mid y,W_1,W_2]\,
\E[\varphi_s\mid y,W_1,W_2]\right].
\end{aligned}
\end{equation*}
To compute these conditional means, let $w_{1r}\T,w_{2r}\T$ denote the $r$th rows
of $W_1,W_2$. Since $\eta_t=x_t-a_tBy\sim\N(0,\Sigma_t)$ is independent of $y$,
\begin{align*}
(\Wop[x_t;\kappa_dy])_r
&=\underbrace{\frac{a_tw_{1r}\T By}{\sqrt{2\dx}}
 +\frac{w_{2r}\T y}{\sqrt{2\dy}}}_{=:g_r(y)}
 +\frac{w_{1r}\T\eta_t}{\sqrt{2\dx}},\\
\frac{w_{1r}\T\eta_t}{\sqrt{2\dx}}\,\bigg|\,y,W_1,W_2
&\sim\N(0,\nu_r^2),
\qquad \nu_r^2:=\frac{w_{1r}\T\Sigma_tw_{1r}}{2\dx}.
\end{align*}
Over the Gaussian weights,
\begin{align*}
\E_{w_{1r}}[\nu_r^2]
&=\frac{\tr\Sigma_t}{2\dx}\longrightarrow\frac\tau2,\\
\Var_{w_{1r}}(\nu_r^2)
&=\frac{\tr(\Sigma_t^2)}{2\dx^2}=O(\dx^{-1}),
\qquad \nu_r^2\xrightarrow{\Prob}\frac\tau2.
\end{align*}
Using $\nu_r^2\xrightarrow{\Prob}\tau/2$,
\begin{equation}
\begin{aligned}
\E[\act((\Wop[x_t;\kappa_dy])_r)\mid y,W_1,W_2]
&=\E_{z'}[\act(g_r(y)+\nu_rz')]\\
&\simeq\E_{z'}[\act(g_r(y)+\sqrt{\tau/2}z')]\\
&=\act_\tau(g_r(y)).
\end{aligned}
\label{eq:phiy}
\end{equation}
With $g(y):=(g_r(y))_{r=1}^p$, \eqref{eq:Lambda-mean} and \eqref{eq:sigmaiso} give
\begin{align*}
g(y)\mid W_1,W_2&\sim\N(0,\Wop\Lambda\WopT),\\
(\Wop\Lambda\WopT)_{rr}&\xrightarrow{\Prob}\sigma_{y,t}^2,\\
\widetilde\gamma^{\,y}_{rs}
&:=\frac{(\Wop\Lambda\WopT)_{rs}}{\sigma_{y,t}^2}
=O_{\Prob}(\dx^{-1/2}),\qquad r\ne s.
\end{align*}
Define
\begin{equation*}
\widehat\mu_k^{\,\tau}(\sigma_{y,t})
:=\frac1{k!}\E_z[\act_\tau(\sigma_{y,t}z)H_k(z)].
\end{equation*}
Since $\sigma_{y,t}^2+\tau/2=\sigma_t^2=1$, centering and Mehler's formula yield
\begin{align*}
\widehat\mu_0^{\,\tau}(\sigma_{y,t})
&=\E[\act(\sigma_{y,t}z+\sqrt{\tau/2}z')]
=\E[\act(\sigma_tz)]=0,\\
\widehat\mu_1^{\,\tau}(\sigma_{y,t})
&=\E[z\,\act(\sigma_{y,t}z+\sqrt{\tau/2}z')]\\
&=\E_{\phi_{\sigma_{y,t}/\sigma_t}}[z_0\,\act(\sigma_tz_1)]
=\frac{\sigma_{y,t}}{\sigma_t}\widehat\mu_1(\sigma_t)
=\sigma_{y,t}\mu_1.
\end{align*}
At the retained order, Mehler's expansion for $r\ne s$ is
\begin{align*}
[\Utilde_y]_{rs}
&\simeq\E_{\phi_{\widetilde\gamma^{\,y}_{rs}}}
[\act_\tau(\sigma_{y,t}z_0)\act_\tau(\sigma_{y,t}z_1)]\\
&=\sum_{k\ge1}k!\,\widehat\mu_k^{\,\tau}(\sigma_{y,t})^2
 (\widetilde\gamma^{\,y}_{rs})^k\\
&=\mu_1^2(\Wop\Lambda\WopT)_{rs}+O_{\Prob}(\dx^{-1}),\\
[\Utilde_y]_{rr}
&\xrightarrow{\Prob}\E_z[\act_\tau(\sigma_{y,t}z)^2]
=\sigma_{y,t}^2\mu_1^2+v_y^2.
\end{align*}
Consequently, with $\simeq$ in the trace sense specified in this appendix,
\begin{align*}
\Utilde_y
&\simeq\mu_1^2\Wop\Lambda\WopT
+\Big(\E_z[\act_\tau(\sigma_{y,t}z)^2]-\sigma_{y,t}^2\mu_1^2\Big)\Id{p}\\
&=\mu_1^2\Wop\Lambda\WopT+v_y^2\Id{p}.
\end{align*}
\end{proof}

\paragraph{Training moments.}
Recall the clean preactivation matrix $L$ from \eqref{eq:feature}.

\begin{lemma}[High-dimensional equivalent of $V$]
\label{lem:Vt}
In the proportional limit, $V$ can be replaced by
\begin{equation}
\sqrt{h_t}\,\mu_1\,\Wa.
\label{eq:Vtfinal}
\end{equation}
\end{lemma}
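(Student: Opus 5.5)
The argument mirrors the proof of Lemma~\ref{lem:Vtilde}, now for the training moment $V=\frac1n\sum_i\E_\epsilon[\act(g_i(\epsilon))\epsilon\T]$. Here $g_i(\epsilon):=\Wop[x_{i,t}(\epsilon);\kappa_dy_i]=\ell_i+\sqrt{h_t}\Wa\epsilon$, and $\ell_i$ is the $i$th column of $L$ in~\eqref{eq:feature}. We condition on the hidden weights and on the training sample $(x_i,y_i)$, so that $\ell_i$ is fixed and only $\epsilon$ is random. Rather than bivariate Mehler, we use Gaussian integration by parts in $\epsilon$ (Stein's identity):
\[
\E_\epsilon[\act(g_{i,r}(\epsilon))\epsilon_j]
=\sqrt{h_t}\frac{(W_1)_{rj}}{\sqrt{2\dx}}\E_\epsilon[\act'(g_{i,r}(\epsilon))],
\]
where $g_{i,r}(\epsilon)\mid\ell_i\sim\N(\ell_{ir},\nu_r^2)$ and $\nu_r^2=\|w_{1r}\|^2h_t/(2\dx)\to h_t/2$ in probability. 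If $\act$ is only $L^2$, we state this in the equivalent form $\E[\act(\ell+\nu z')z']/\nu$ instead. This gives the exact entrywise expression
\[
V_{rj}=\sqrt{h_t}\,\frac{(W_1)_{rj}}{\sqrt{2\dx}}\,D_r,\qquad
D_r:=\frac1n\sum_{i=1}^n\E_{z'}\!\big[\act'(\ell_{ir}+\nu_rz')\big],
\]
so $V=\sqrt{h_t}\,\operatorname{diag}(D)\Wa$ with no approximation.

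The second step identifies the limit of $D_r$. For fixed $r$ the $\ell_{ir}$ are i.i.d.\ over $i$ given the weights, each Gaussian with variance $(\Wop\Tdet\Tdet\T\WopT)_{rr}\to\sigma_{\ell,t}^2$ by~\eqref{eq:sigmaiso} and Gaussian quadratic-form concentration. The law of large numbers over $n\to\infty$, together with continuity in the variances, then gives
\[
D_r\to\E_{z_\ell,z_\epsilon}[\act'(z_\ell+z_\epsilon)]=\E_z[\act'(\sigma_tz)]=\mu_1,
\]
because $z_\ell+z_\epsilon$ is standard Gaussian ($\sigma_t^2=1$) and $\E[\act'(z)]=\E[z\act(z)]=\mu_1$ by Stein's identity. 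The fluctuations of $D_r-\mu_1$ are $O_\Prob(n^{-1/2}+\dx^{-1/2})$.

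The third step passes from entrywise convergence to the trace sense $\simeq$ of this appendix. Write $V-\sqrt{h_t}\mu_1\Wa=\sqrt{h_t}\operatorname{diag}(D-\mu_1)\Wa$. For the traces in which $V$ enters, such as $\tr(V\T QV)$ and $\tr(V\T Q^2V)$, the matrices $Q$ and $\Wa$ have bounded operator norm: $\|Q\|\le\lambda^{-1}$ and $\|\Wa\|=O(1)$ in the proportional regime. Bounding each such normalized trace by $\max_r|D_r-\mu_1|$ times $O(1)$ therefore suffices.

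This uniformity over $r\le p$ is the main obstacle. We would obtain it by a sub-Gaussian or bounded-moment concentration bound for each $D_r$ followed by a union bound, using a growth condition on $\act$ from the standing assumptions. Alternatively, we would bound the normalized Frobenius error $p^{-1}\sum_r(D_r-\mu_1)^2\to0$ in expectation and use $|\tr(X\operatorname{diag}(\delta)Y)|\le\|\delta\|_2\|XY\|_F$-type estimates, which only needs the average control. A secondary subtlety is that $D_r$ depends weakly on the training sample that also enters $Q$. Since the replacement error is controlled in operator norm, independently of $Q$, this dependence does not matter.
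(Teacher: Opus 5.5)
Your proposal is correct and follows essentially the same route as the paper. The paper conditions on the clean preactivation $L_{ri}$ and applies Mehler's formula with one linear factor. Only the $k=1$ term survives, which is exactly your Stein identity, giving $[V]_{rj}=\gamma_{rj}\frac1n\sum_i\act_1(L_{ri})$; the law of large numbers and $\sigma_t^2=1$ then give $\sqrt{h_t}\mu_1(W_1)_{rj}/\sqrt{2\dx}+O(\dx^{-1})$ entrywise. Your exact factorization $V=\sqrt{h_t}\operatorname{diag}(D)\Wa$ and your passage from entrywise to trace equivalence, via $\max_r|D_r-\mu_1|$ or the averaged Frobenius bound, make rigorous a step the paper leaves implicit.
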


\begin{proof}
For the $\ell$th training sample, by the definition of $V$,
\[
V=\frac1n\sum_{\ell=1}^n V^{(\ell)},\qquad
V^{(\ell)}:=\E_\epsilon\big[\act(\Wop \bigl[x_{\ell,t}(\epsilon);\,\kappa_dy_\ell\bigr])\epsilon\T\big].
\]
By \eqref{eq:noisyL}, conditionally on $(x_\ell,y_\ell,W_1,W_2)$, the shift $L_{r\ell}$ is fixed and
$\|w_{1r}\|^2/\dx=1+O_{\Prob}(\dx^{-1/2})$. Put
\begin{equation*}
\gamma_{rj}:=\frac{(W_1)_{rj}}{\sqrt{\dx}}.
\end{equation*}
Then Mehler's formula gives the consecutive equalities
\begin{align*}
[V^{(\ell)}]_{rj}
&=\big[\E_\epsilon[\act(\Wop \bigl[x_{\ell,t}(\epsilon);\,\kappa_dy_\ell\bigr])\epsilon\T]\big]_{rj}\\
&=\E_\epsilon\!\left[\act\!\left(L_{r\ell}
+\sqrt{h_t}\frac{w_{1r}\T \epsilon}{\sqrt{2\dx}}\right)\epsilon_j\right]\\
&=\E_{\phi_{\gamma_{rj}}}
\big[\act(L_{r\ell}+\sqrt{h_t/2}z_0)z_1\big]\\
&=\sum_{k\ge0}\frac{\gamma_{rj}^k}{k!}\,
\E_{z_0}\big[\act(L_{r\ell}+\sqrt{h_t/2}z_0)H_k(z_0)\big]\,
\E_{z_1}[z_1H_k(z_1)]\\
&=\gamma_{rj}\,\act_1(L_{r\ell}),
\end{align*}
where $\act_1(u):=\E_{z'}[\act(u+\sqrt{h_t/2}z')z']$; the final equality holds because only
$k=1$ survives.
After averaging over the training samples,
\begin{align*}
[V]_{rj}
&=\frac1n\sum_{i=1}^n[V^{(i)}]_{rj}\\
&=\gamma_{rj}\frac1n\sum_{i=1}^n\act_1(L_{ri})\\
&=\gamma_{rj}\E_z\big[\act_1(\sigma_{\ell,t}z)\big]+O(\dx^{-1})\\
&=\gamma_{rj}\E_{z,z'}\big[
\act(\sigma_{\ell,t}z+\sqrt{h_t/2}z')z'\big]+O(\dx^{-1})\\
&=\gamma_{rj}\E_{\phi_{\sqrt{h_t/2}/\sigma_t}}
\big[\act(\sigma_tz_0)z_1\big]+O(\dx^{-1})\\
&=\gamma_{rj}\sqrt{h_t/2}\,\mu_1+O(\dx^{-1})\\
&=\sqrt{h_t}\,\mu_1
\frac{(W_1)_{rj}}{\sqrt{2\dx}}+O(\dx^{-1})\\
&=\big[\sqrt{h_t}\,\mu_1\Wa\big]_{rj}+O(\dx^{-1}).
\end{align*}
\end{proof}

\begin{lemma}[High-dimensional equivalent of $U$]
\label{lem:Ut}
Recall the clean preactivation matrix $L$ from \eqref{eq:feature}.
For an independent standard Gaussian $\Omega\in\R^{p\times n}$, set
\begin{equation}
\Gt:=\mu_1 L+v_0\Omega.
\label{eq:Gtdef}
\end{equation}
Then $s^2\ge0$. In the proportional limit, $U$ can be replaced by
\begin{equation}
\frac{\Gt\Gt\T}{n}
+h_t\mu_1^2\,\Wa\WaT+s^2\Id{p}.
\label{eq:Ugauss}
\end{equation}
\end{lemma}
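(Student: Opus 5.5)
The plan is to reduce $U$ to a Gram matrix of the noise-averaged activations, plus a diffusion-noise cross term and a diagonal correction. The Gram part is then handled by the GEP, following the treatment of the analogous unconditional moment in \citet{george2026}.

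\emph{Step 1: $s^2\ge0$.} This is immediate. With $z_\ell,z_\epsilon$ as in \eqref{eq:v0t} and $z_\ell+z_\epsilon\sim\N(0,1)$ by \eqref{eq:sigmaiso}, we have
\[
s^2=v^2-v_0^2=\E\big[r(z_\ell+z_\epsilon)^2\big]-\E_{z_\ell}\big[\E_{z_\epsilon}[r(z_\ell+z_\epsilon)]^2\big]=\E_{z_\ell}\big[\Var_{z_\epsilon}\big(r(z_\ell+z_\epsilon)\big)\big]\ge0.
\]

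\emph{Step 2: the noise average for a single sample.} By \eqref{eq:noisyL}, the $r$th preactivation for sample $i$ is $L_{ri}+\sqrt{h_t}\,w_{1r}\T\epsilon/\sqrt{2\dx}$. Conditionally on $(L,W_1)$, the noise parts form a centered Gaussian vector with covariance $h_t\Wa\WaT$. Its diagonal entries concentrate at $h_t/2$, and its off-diagonal entries are $O_{\Prob}(\dx^{-1/2})$. For $r\ne s$, I will expand $\E_\epsilon[\act(L_{ri}+n_r)\act(L_{si}+n_s)]$ in powers of the noise correlation using a shifted Mehler formula, i.e.\ a Hermite expansion in the noise with the shifts $L_{ri},L_{si}$ held fixed. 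This gives
\[
\act_0(L_{ri})\act_0(L_{si})+h_t\Big(\Wa\WaT\Big)_{rs}\act_0'(L_{ri})\act_0'(L_{si})+O(\dx^{-1}).
\]
For $r=s$, the expansion gives instead $\E_\epsilon[\act(L_{ri}+n_r)^2]$.

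\emph{Step 3: averaging over $i$.} Averaging over $i$ turns the first off-diagonal term into $\frac1n\act_0(L)\act_0(L)\T$. The GEP applies to this Gram matrix, since $\act_0$ is applied entrywise to the Gaussian-like matrix $L$, whose entries have variance $\sigma_{\ell,t}^2$. Under $z_\ell\sim\N(0,\sigma_{\ell,t}^2)$, $\act_0$ has mean $0$, and by Stein $\E[z_\ell\act_0(z_\ell)]=\sigma_{\ell,t}^2\mu_1$, so its linear coefficient is $\mu_1$. Its nonlinear remainder has variance $v_0^2$ by \eqref{eq:v0t}. Hence $\act_0(L)$ can be replaced by $\Gt=\mu_1L+v_0\Omega$, which gives $\Gt\Gt\T/n$.

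In the second off-diagonal term, $\frac1n\sum_i\act_0'(L_{ri})\act_0'(L_{si})$ concentrates at $\E[\act_0'(z_\ell)]^2=\E[\act'(z)]^2=\mu_1^2$. This produces $h_t\mu_1^2\Wa\WaT$ off the diagonal.

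\emph{Step 4: matching the diagonal.} The diagonal of $U$ converges to $\E[\act(z)^2]=\mu_1^2+v^2$. The diagonal of the first two terms of \eqref{eq:Ugauss} converges to
\[
\mu_1^2\sigma_{\ell,t}^2+v_0^2+h_t\mu_1^2/2=\mu_1^2+v_0^2,
\]
using $\sigma_{\ell,t}^2+h_t/2=1$. The difference is exactly $s^2\Id{p}$, which explains the identity term.

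\emph{Main obstacle.} The delicate step is justifying the replacements in the trace sense used later, i.e.\ inside $Q=(U+\lambda\Id{p})^{-1}$. The $O(\dx^{-1})$ entrywise errors in Step 2 form a $p\times p$ matrix that need not be small in operator norm. The GEP for $\act_0(L)$ must also hold jointly with the extra term $\Wa\WaT$, which depends on the same $W_1$ as $L$, and with $V$, which is built from $\Wa$.

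I would handle this as \citet{george2026} do. The higher Hermite orders are bounded by entrywise Hadamard powers of $\Wa\WaT$, whose operator norm is $O(\dx^{-1/2})$ beyond the first order. The shared-weight GEP is taken as part of the assumptions stated at the start of this appendix rather than proved.
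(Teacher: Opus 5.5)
Your proposal is correct and follows essentially the same route as the paper. The shared steps are: a per-sample Mehler expansion in the diffusion noise with the clean shifts $L_{ri}$ held fixed; a separate diagonal match producing $s^2\Id{p}$; the GEP replacement $\act_0(L)\to\Gt$; and $s^2\ge0$ written as an expected conditional variance, which is exactly the paper's squared-residual expression. Your first-order term $h_t(\Wa\WaT)_{rs}\act_0'(L_{ri})\act_0'(L_{si})$ is the paper's $\gamma_{rs}\act_1(L_{ri})\act_1(L_{si})$ after Stein's identity $\act_1=\sqrt{h_t/2}\,\act_0'$. You match the diagonal after the GEP replacement and the paper before it, which is equivalent.

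One correction to your closing remark: the second Hermite order does not have $O(\dx^{-1/2})$ operator norm in general. Since $\E[\gamma_{rs}^2]\approx\dx^{-1}$ and the sample-averaged second coefficient $h_t\widehat\mu_2(1)$ of the noise-smoothed activation need not vanish (it does for odd $\act$ such as $\tanh$), this order leaves an $O(1)$-norm rank-one spike. That spike is still harmless, because a rank-one change of $U$ moves the normalized traces involving $Q$ only by $O(\dx^{-1})$.
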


\begin{proof}
For the $\ell$th training sample, set
\[
U^{(\ell)}:=\E_\epsilon\big[\act(\Wop \bigl[x_{\ell,t}(\epsilon);\,\kappa_dy_\ell\bigr])
\act(\Wop \bigl[x_{\ell,t}(\epsilon);\,\kappa_dy_\ell\bigr])\T\big].
\]
Recall $\act_1$ from the proof of Lemma~\ref{lem:Vt}.
Then $U=n^{-1}\sum_{\ell=1}^nU^{(\ell)}$. For $r\ne s$, the row-norm concentration and
the fact that the DSM noise enters only through the data block give the asymptotic correlation
\begin{equation*}
\gamma_{rs}:=\frac{w_{1r}\T w_{1s}}{\dx}=2[\Wa\WaT]_{rs}.
\end{equation*}
Using \eqref{eq:noisyL}, Mehler's formula therefore gives directly
\begin{align*}
[U^{(\ell)}]_{rs}
&=\E_\epsilon\!\left[
\act\!\left(L_{r\ell}+\sqrt{h_t}\frac{w_{1r}\T \epsilon}{\sqrt{2\dx}}\right)
\act\!\left(L_{s\ell}+\sqrt{h_t}\frac{w_{1s}\T \epsilon}{\sqrt{2\dx}}\right)
\right]\\
&=\E_{\phi_{\gamma_{rs}}}
\big[\act(L_{r\ell}+\sqrt{h_t/2}z_0)
      \act(L_{s\ell}+\sqrt{h_t/2}z_1)\big]\\
&=\sum_{k\ge0}\frac{\gamma_{rs}^k}{k!}\,
\E_{z_0}\big[\act(L_{r\ell}+\sqrt{h_t/2}z_0)H_k(z_0)\big]\,
\E_{z_1}\big[\act(L_{s\ell}+\sqrt{h_t/2}z_1)H_k(z_1)\big]\\
&=\act_0(L_{r\ell})\act_0(L_{s\ell})
+\gamma_{rs}\act_1(L_{r\ell})\act_1(L_{s\ell})
+O(\dx^{-1}),
\end{align*}
where the $k\ge2$ part is $O(\gamma_{rs}^2)=O(\dx^{-1})$. Summing over the training samples,
\begin{align*}
[U]_{rs}
&=\frac1n\sum_{i=1}^n[U^{(i)}]_{rs}\\
&=\frac1n\sum_{i=1}^n\act_0(L_{ri})\act_0(L_{si})
+\gamma_{rs}\frac1n\sum_{i=1}^n\act_1(L_{ri})\act_1(L_{si})
+O_{\Prob}(\dx^{-1})\\
&=\frac1n\sum_{i=1}^n\act_0(L_{ri})\act_0(L_{si})
+\gamma_{rs}\E_{x,y}\!\left[
\act_1\!\left(\frac{a_tw_{1r}\T x}{\sqrt{2\dx}}+\frac{w_{2r}\T y}{\sqrt{2\dy}}\right)
\right.\\
&\hspace{7em}\left.
\times
\act_1\!\left(\frac{a_tw_{1s}\T x}{\sqrt{2\dx}}+\frac{w_{2s}\T y}{\sqrt{2\dy}}\right)
\right]
+O_{\Prob}(\dx^{-1})\\
&=\frac1n\sum_{i=1}^n\act_0(L_{ri})\act_0(L_{si})
+\gamma_{rs}\E_z\big[\act_1(\sigma_{\ell,t}z)\big]^2
+O_{\Prob}(\dx^{-1})\\
&=\left[\frac{\act_0(L)\act_0(L)\T}{n}
+h_t\mu_1^2\Wa\WaT\right]_{rs}+O_{\Prob}(\dx^{-1}),
\qquad r\ne s,
\end{align*}
where the penultimate equality uses the Gaussian limit of the two clean preactivations.

For $r=s$,
\begin{align*}
[U]_{rr}
&=\frac1n\sum_{i=1}^n[U^{(i)}]_{rr}\\
&=\frac1n\sum_{i=1}^n\E_\epsilon\!\left[
\act\!\left(L_{ri}+\sqrt{h_t}\frac{w_{1r}\T \epsilon}{\sqrt{2\dx}}\right)^2
\right]\\
&=\E_{x,y,\epsilon}\!\left[
\act\!\left(
\frac{a_tw_{1r}\T x}{\sqrt{2\dx}}+\frac{w_{2r}\T y}{\sqrt{2\dy}}
+\sqrt{h_t}\frac{w_{1r}\T \epsilon}{\sqrt{2\dx}}
\right)^2\right]+O_{\Prob}(\dx^{-1/2})\\
&=\E_z\!\left[\act\!\left(\sqrt{\sigma_{\ell,t}^2+h_t/2}\,z\right)^2\right]
+O_{\Prob}(\dx^{-1/2}) \\
&=\big\|\act(\sigma_t\,\cdot)\big\|_2^2+O_{\Prob}(\dx^{-1/2}).
\end{align*}
Combining the off-diagonal and diagonal terms gives
\begin{equation}
U
\simeq\frac{\act_0(L)\act_0(L)\T}{n}
+h_t\mu_1^2\Wa\WaT
+s^2\Id{p}.
\label{eq:Utfinal}
\end{equation}
Moreover, by~\eqref{eq:sfloor},
\begin{align*}
s^2
&=\big\|\act(\sigma_t\,\cdot)\big\|_2^2
-\E_z\!\left[\act_0(\sigma_{\ell,t}z)^2\right]
-\frac{h_t\mu_1^2}{2} \\
&=\E_{z,z'}\!\left[
\left(
\act(\sigma_{\ell,t}z+\sqrt{h_t/2}z')
-\act_0(\sigma_{\ell,t}z)
-\sqrt{h_t/2}\,\mu_1 z'
\right)^2
\right]\ge0.
\end{align*}

Following \citet{george2026}, we apply the Gaussian equivalence principle to the
nonlinearity $\act_0(L)$ with $L=\Wop\Tdet\Zcat$. Using \eqref{eq:v0t},
$\act_0(L)$ can be replaced by its Gaussian equivalent
$\Gt=\mu_1 L+v_0\Omega$. Substituting this replacement into \eqref{eq:Utfinal} gives
\eqref{eq:Ugauss}.
\end{proof}

\subsection{Decomposition of the test errors and the training loss}
\label{sec:app-decomp}
We now prove Lemma~\ref{lem:main-decomp}. For $\ME$, $\PV$, and $\RA$,
the proof combines the exact test identities~\eqref{eq:me-traceid} with the test moments of
Appendix~\ref{sec:hermite} and the exact expressions~\eqref{eq:exact-quantities}; it uses
neither the equivalent of $V$ nor that of $U$. For the training loss, it combines the exact training
identity~\eqref{eq:traintraceid} with the equivalent of $V$. In both cases the resolvent
$Q=(U+\lambda\Id{p})^{-1}$ is kept as it is, so the GEP for $U$ (Lemma~\ref{lem:Ut}) is not used. We use the decomposition of $\Gamma$ into the covariance of its conditional
mean and that of its residual,
\begin{equation*}
\Gamma_{\mathrm{res}}=h_tI_x+a_t^2\Sigma_x
 =\operatorname{diag}(\Sigma_t,0_{\dy}),\qquad
\Lambda=\Gamma-\Gamma_{\mathrm{res}}.
\end{equation*}
Both matrices are positive semidefinite, and
$\Gamma=\Lambda+\Gamma_{\mathrm{res}}$.

\begin{proof}[Proof of Lemma~\ref{lem:main-decomp}]
Since $\E[gg\T]=\Wop\Gamma\WopT$ by~\eqref{eq:qcov} and, as each entry of $g$ may be
replaced by $z\sim\N(0,1)$ with $\E[zr(z)]=0$, $r(g)$ is asymptotically
orthogonal to $g$, Lemma~\ref{lem:Utilde} gives
$\E[r(g)r(g)\T]\simeq\Utilde-\mu_1^2\Wop\Gamma\WopT\simeq v^2\Id{p}$. Likewise
$\E[g\mid y]$ has covariance $\Wop\Lambda\WopT$, and Lemma~\ref{lem:Uy} gives
$\E\big[\E[r(g)\mid y]\E[r(g)\mid y]\T\big]\simeq v_y^2\Id{p}$. Hence
\begin{equation*}
\frac1\dx\E\|s_{\Ahat}^{\mathrm{nl}}\|^2=\frac1{\dx p}\tr\big(\Ahat\,\E[r(g)r(g)\T]\Ahat\T\big)
\simeq\frac{v^2}{\dx p}\|\Ahat\|_F^2=v^2\Anorm,
\end{equation*}
and in the same way $\dx^{-1}\E\|\E[s_{\Ahat}^{\mathrm{nl}}\mid y]\|^2\simeq v_y^2\Anorm$, which
are the first two equivalences of Lemma~\ref{lem:main-decomp}.
By Lemma~\ref{lem:Uy} and~\eqref{eq:exact-quantities},
\begin{align*}
\ME(\Ahat)=\frac{h_t}{a_t^2\dx}\tr\big(V\T Q\Utilde_yQV\big)
&\simeq\frac{h_t\mu_1^2}{a_t^2\dx}\tr\big(V\T Q\Wop\Lambda\WopT QV\big)
+\frac{h_tv_y^2}{a_t^2\dx}\tr\big(V\T Q^2V\big)\\
&=\frac{\|\widehat B-B\|_F^2}{\dx}+\frac{h_t^2}{a_t^2}v_y^2\Anorm .
\end{align*}
By Lemmas~\ref{lem:Vtilde} and~\ref{lem:Utilde}, $\Vtilde_\xi\simeq a_t\mu_1\Wa\Sigma^{1/2}$,
$\Vtilde_\eta\simeq\mu_1\Wa\Sigma_t$, and
$\Utilde-\Utilde_y\simeq\mu_1^2\Wa\Sigma_t\WaT+(v^2-v_y^2)\Id{p}$, where we used
$\Wop\Gamma_{\mathrm{res}}\WopT=\Wa\Sigma_t\WaT$. Inserting these into
Lemma~\ref{lem:test-traceid} and comparing with $G_x$ in~\eqref{eq:exact-quantities} gives
\begin{align*}
\RA(\Ahat)&\simeq\upsilon_{\Sigma,d}^2-\frac{\sqrt{h_t}\,\mu_1}{\dx}\tr\big(V\T Q\Wa\Sigma\big)
=\frac{a_t}\dx\tr\big(\Sigma G_x\big),\\
\PV(\Ahat)&\simeq\frac1{a_t^2\dx}\Big[\tr\Sigma_t-2\sqrt{h_t}\,\mu_1\tr\big(V\T Q\Wa\Sigma_t\big)
+h_t\mu_1^2\tr\big(V\T Q\Wa\Sigma_t\WaT QV\big)\Big]\\
&\quad+\frac{h_t^2}{a_t^2}(v^2-v_y^2)\Anorm\\
&=\frac1\dx\tr\big(G_x\Sigma_tG_x\T\big)+\frac{h_t^2}{a_t^2}(v^2-v_y^2)\Anorm .
\end{align*}
These are the equivalences for $\ME$, $\PV$, and $\RA$.
Finally, by~\eqref{eq:traintraceid} and~\eqref{eq:exact-quantities},
\begin{equation*}
\Ltrain(\Ahat)+h_t\lambda \Anorm=1-\frac1\dx\tr\big(V\T QV\big),\qquad
\frac{a_t}\dx\tr G_x=1-\frac{\sqrt{h_t}\,\mu_1}{\dx}\tr\big(V\T Q\Wa\big)
\end{equation*}
exactly. Their difference is $\dx^{-1}\tr\big(V\T Q(\sqrt{h_t}\mu_1\Wa-V)\big)$, and
Lemma~\ref{lem:Vt} gives the equivalence for $\Ltrain$.
\end{proof}

For the true denoiser, $G_x=a_t\Sigma\Sigma_t^{-1}$, $\widehat B=B$, and there are no terms in $\Anorm$.

\subsection{Reduction to resolvent traces}
\label{sec:source}
With $Q=(U+\lambda I)^{-1}$, define
\begin{align*}
E_1&=\dx^{-1}\tr(\WaT Q\Wa),&
E_\Sigma&=\dx^{-1}\tr(\Sigma\WaT Q\Wa),\\
E_3&=\dx^{-1}\tr(\WaT Q^2\Wa),&
E_{\mathrm m}&=\dx^{-1}\tr(\WaT Q\Wop\Lambda\WopT Q\Wa),\\
E_{\mathrm r}&=\dx^{-1}\tr(\WaT Q\Wop\Gamma_{\mathrm{res}}\WopT Q\Wa).
\end{align*}
By Lemma~\ref{lem:Vt},
$V\simeq\sqrt{h_t}\mu_1\Wa$, so~\eqref{eq:traintraceid} and~\eqref{eq:exact-quantities} give,
in normalized expectation,
\begin{align}
\Ltrain&\simeq1-h_t\mu_1^2E_1-h_t\lambda\mu_1^2E_3,\label{eq:red-train}\\
\Anorm&\simeq\mu_1^2E_3,\qquad
\frac{\|\widehat B-B\|_F^2}{\dx}\simeq\frac{h_t^2\mu_1^4}{a_t^2}E_{\mathrm m},\label{eq:red-N}\\
G_x&\simeq\frac1{a_t}\Big(I-h_t\mu_1^2\WaT Q\Wa\Big).\label{eq:red-Gx}
\end{align}
Using $\Sigma_t=a_t^2\Sigma+h_tI$ and $\Wop\Gamma_{\mathrm{res}}\WopT=\Wa\Sigma_t\WaT$,
\eqref{eq:red-Gx} gives
\begin{equation}
\begin{aligned}
\frac{a_t}\dx\tr G_x&\simeq1-h_t\mu_1^2E_1,\qquad
\frac{a_t}\dx\tr(\Sigma G_x)\simeq\upsilon_{\Sigma,d}^2-h_t\mu_1^2E_\Sigma,\\
\frac1\dx\tr\big(G_x\Sigma_tG_x\T\big)&\simeq\frac1{a_t^2}\Big[\frac1\dx\tr\Sigma_t
-2h_t\mu_1^2\big(h_tE_1+a_t^2E_\Sigma\big)+h_t^2\mu_1^4E_{\mathrm r}\Big].
\end{aligned}
\label{eq:red-Gx-traces}
\end{equation}
Thus $E_1$ and $E_\Sigma$ are traces of $G_x$, $E_{\mathrm r}$ its quadratic form in
$\Sigma_t$, $E_{\mathrm m}$ the error of the conditional-mean map, and $E_3$ the weight norm.

\begin{lemma}[Perturbed resolvent]
\label{lem:source}
Define $\mathbf q=(q_{\mathrm m},q_{\mathrm r})$ and
\begin{align}
U(\mathbf q)&=U+q_{\mathrm m}\Wop\Lambda\WopT
 +q_{\mathrm r}\Wop\Gamma_{\mathrm{res}}\WopT,\label{eq:Uq}\\
R(\mathbf q,z)&=(U(\mathbf q)-zI_p)^{-1},\label{eq:Rqz}\\
K(\mathbf q,z)&=\dx^{-1}\tr(\WaT R\Wa),\qquad
K_\Sigma(\mathbf q,z)=\dx^{-1}\tr(\Sigma\WaT R\Wa).\label{eq:Kqz}
\end{align}
At $(\mathbf q,z)=(\mathbf0,-\lambda)$,
\begin{equation}
E_1=K,\quad E_3=\partial_zK,\quad
E_{\mathrm m}=-\partial_{q_{\mathrm m}}K,\quad E_{\mathrm r}=-\partial_{q_{\mathrm r}}K,\quad E_\Sigma=K_\Sigma.
\label{eq:Efromsource}
\end{equation}
\end{lemma}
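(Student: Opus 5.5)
The proof is direct differentiation of the resolvent $R(\mathbf q,z)=(U(\mathbf q)-zI_p)^{-1}$ at the base point $(\mathbf q,z)=(\mathbf 0,-\lambda)$. There $U(\mathbf 0)=U$, so $R(\mathbf 0,-\lambda)=(U+\lambda I_p)^{-1}=Q$. Substituting into \eqref{eq:Kqz} gives $K(\mathbf0,-\lambda)=\dx^{-1}\tr(\WaT Q\Wa)=E_1$ and $K_\Sigma(\mathbf0,-\lambda)=\dx^{-1}\tr(\Sigma\WaT Q\Wa)=E_\Sigma$, which are the first and last identities in \eqref{eq:Efromsource}. These need nothing beyond the definitions.

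For the derivatives I use the standard resolvent identities. Since $U$, $\Wop\Lambda\WopT$ and $\Wop\Gamma_{\mathrm{res}}\WopT$ are positive semidefinite, $U(\mathbf q)$ is too for $q_{\mathrm m},q_{\mathrm r}\ge0$. Hence $U(\mathbf q)-zI_p\succeq \lambda I_p$ near $z=-\lambda$, the map $(\mathbf q,z)\mapsto R$ is real-analytic there, and finite-dimensional differentiation needs no justification. Differentiating $R(U(\mathbf q)-zI)=I$ gives
\begin{equation*}
\partial_zR=R^2,\qquad \partial_{q_{\mathrm m}}R=-R\,\Wop\Lambda\WopT R,\qquad \partial_{q_{\mathrm r}}R=-R\,\Wop\Gamma_{\mathrm{res}}\WopT R.
\end{equation*}
At the base point these become $Q^2$, $-Q\Wop\Lambda\WopT Q$ and $-Q\Wop\Gamma_{\mathrm{res}}\WopT Q$.

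Sandwiching each derivative between $\WaT$ and $\Wa$ and taking $\dx^{-1}\tr$, which is linear and commutes with differentiation at fixed dimension, gives $\partial_zK=E_3$, $\partial_{q_{\mathrm m}}K=-E_{\mathrm m}$ and $\partial_{q_{\mathrm r}}K=-E_{\mathrm r}$, as claimed in \eqref{eq:Efromsource}.

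The lemma is an exact finite-dimensional statement, so none of these steps is a real obstacle. The only delicate point comes later, when $K$ is replaced by its deterministic limit $\Kbar$ and one needs differentiation to commute with the proportional limit; that is covered by the standing assumption at the start of this appendix. I would note it as a remark rather than prove it here.
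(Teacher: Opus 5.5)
Your proposal is correct and matches the paper's proof essentially step for step: you identify $R(\mathbf0,-\lambda)=Q$, apply $\partial A^{-1}=-A^{-1}(\partial A)A^{-1}$ to obtain $\partial_zR=R^2$, $\partial_{q_{\mathrm m}}R=-R\Wop\Lambda\WopT R$ and $\partial_{q_{\mathrm r}}R=-R\Wop\Gamma_{\mathrm{res}}\WopT R$, and then take the normalized traces. One small remark: at fixed dimension, $U(\mathbf q)-zI_p$ stays invertible on a full neighborhood of $(\mathbf0,-\lambda)$ by continuity, so you get two-sided derivatives in $\mathbf q$ without needing to restrict to $q_{\mathrm m},q_{\mathrm r}\ge0$.
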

\begin{proof}
At $(\mathbf q,z)=(\mathbf0,-\lambda)$, definitions \eqref{eq:Uq} and
\eqref{eq:Rqz} give $R(\mathbf0,-\lambda)=(U+\lambda I_p)^{-1}=Q$.
The inverse differentiation identity
$\partial A^{-1}=-A^{-1}(\partial A)A^{-1}$,
applied to $A=U(\mathbf q)-zI_p$, with
$H_{\mathrm m}=\Lambda$ and $H_{\mathrm r}=\Gamma_{\mathrm{res}}$, gives
\begin{equation}
\partial_{q_j}R=-R\Wop H_j\WopT R,\qquad \partial_zR=R^2,\qquad j\in\{\mathrm m,\mathrm r\}.
\label{eq:resolventderiv}
\end{equation}
Inserting these derivatives into \eqref{eq:Kqz} and evaluating at
$(\mathbf q,z)=(\mathbf0,-\lambda)$ gives \eqref{eq:Efromsource}
by the definitions of the normalized traces.
\end{proof}
Write the deterministic limits as
\begin{equation}
\Kbar=\lim\E K,\qquad\Kbar_\Sigma=\lim\E K_\Sigma.
\label{eq:Kbardef}
\end{equation}

\subsection{Linearization and the perturbed resolvent}
\label{sec:pencil}

\begin{lemma}[Deterministic equivalent of the perturbed resolvent]
\label{lem:compact}
Set
\begin{equation}
\Cone(\mathbf q):=h_t\mu_1^2I_x+q_{\mathrm m}\Lambda+q_{\mathrm r}\Gamma_{\mathrm{res}},
\label{eq:Cone}
\end{equation}
and recall $\mathrm{Tr}_n$ from~\eqref{eq:Trn}. For the scalars $m$ and $\rho$ defined in \eqref{eq:scalars}, set
\begin{equation*}
\Kop(\rho;\mathbf q):=\Cone(\mathbf q)
+\rho\mu_1^2\Tdet\Tdet\T,
\qquad
\Aop:=\Id{\dop}+\frac{\psi_pm}{2}\Kop(\rho;\mathbf q).
\end{equation*}
Assume that the limits defining the normalized inverse-block traces below exist. Then the
relevant nonnegative pair $(m,\rho)$ satisfies
\begin{align}
\frac{1}{m}
&=s^2-z+\frac{\psi_n}{2}\mathrm{Tr}_n\big[\Aop^{-1}\Kop\big]+v_0^2\rho,
\label{eq:compact1}\\
\frac{1}{\rho}
&=1+v_0^2\frac{\psi_p}{\psi_n}m
+\frac{\psi_pm}{2}\mu_1^2
\mathrm{Tr}_n\big[\Tdet\T\Aop^{-1}\Tdet\big],
\label{eq:compact2}
\end{align}
and
\begin{equation}
\Kbar(\mathbf q,z)
=\psi_n\frac{\psi_pm}{2}\,\mathrm{Tr}_n\big[I_x\Aop^{-1}\big].
\label{eq:readoutscalar}
\end{equation}
Moreover,
\begin{equation}
\Kbar_\Sigma=\psi_n\frac{\psi_pm}{2}\,\mathrm{Tr}_n[\Sigma_x\Aop^{-1}].
\label{eq:weighted-readout}
\end{equation}
\end{lemma}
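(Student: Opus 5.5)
Our approach starts from the Gaussian-equivalent form of $U$ given by Lemma~\ref{lem:Ut}. Substituting $\Gt=\mu_1\Wop\Tdet\Zcat+v_0\Omega$ into \eqref{eq:Ugauss}, and adding the two perturbations in \eqref{eq:Uq}, gives
\begin{equation*}
U(\mathbf q)\simeq \frac{\Gt\Gt\T}{n}+\Wop\Cone(\mathbf q)\WopT+s^2\Id{p},
\end{equation*}
where $\Cone(\mathbf q)=h_t\mu_1^2I_x+q_{\mathrm m}\Lambda+q_{\mathrm r}\Gamma_{\mathrm{res}}$. The term $h_t\mu_1^2\Wa\WaT$ is absorbed as the $I_x$ block, since $\Wop I_x\WopT=\Wa\WaT$. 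The resolvent $R(\mathbf q,z)$ is therefore that of a sum of three parts:
\begin{itemize}
\item a sample-covariance term built from the product $\Wcat\Tdet\Zcat$ of three independent Gaussian-type factors, plus an independent Gaussian block $\Omega$;
\item a deterministic-covariance term $\Wop\Cone\WopT$ that is quadratic in $\Wcat$;
\item a scalar shift $s^2-z$.
\end{itemize}
The quantities we need, $K=\dx^{-1}\tr(\WaT R\Wa)$ and $K_\Sigma$, are quadratic forms of $R$ sandwiched by $\Wcat$ restricted to the $x$-block, with $I_x$ or $\Sigma_x$ inserted.

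The next step is to build a linear pencil, a block matrix $\mathbf L$ with blocks indexed by $(p,n,\dop,\dop,n,\dots)$, whose inverse contains $R$ as one block and contains $\Wcat\,\cdot\,R\,\cdot\,\Wcat\T$-type blocks elsewhere. Concretely, we introduce auxiliary blocks for $\Wop$, $\Zcat/\sqrt n$, $v_0\Omega/\sqrt n$, and $\Cone^{1/2}$, so that a Schur complement on the $p\times p$ block recovers $U(\mathbf q)-z$. This gives the six-block pencil mentioned in Section~\ref{sec:rf-score-learning}. Since every random block is Gaussian with i.i.d.\ entries, we then apply \cite[Result~3.1]{bodin2024}. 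This yields a block deterministic-equivalent equation in which each normalized trace of an inverse block is a rational function of the others.

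The remaining work is algebraic reduction to two scalars:
\begin{itemize}
\item $m$, the normalized trace of the $p\times p$ block, that is, $\mathrm{Tr}$ of $R$ divided by $\psi_p$;
\item $\rho$, the normalized trace of the $n\times n$ block associated with $\Zcat$ and $\Omega$, which plays the role of the fraction of samples not fitted.
\end{itemize}
Because $\Cone$ and $\Tdet\Tdet\T$ are deterministic in the $\dop$-dimensional space, the $\dop$-block traces collapse to $\Aop^{-1}=(I+\tfrac{\psi_pm}{2}\Kop)^{-1}$, with $\Kop=\Cone+\rho\mu_1^2\Tdet\Tdet\T$. Here the sample covariance contributes $\rho\Tdet\Tdet\T$ through the $n$-block trace.
\begin{itemize}
\item The $p$-block self-consistency gives \eqref{eq:compact1}. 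Its terms are the shift $s^2-z$, the $\Omega$ contribution $v_0^2\rho$, and the $\Wcat$-mediated term $\tfrac{\psi_n}{2}\mathrm{Tr}_n[\Aop^{-1}\Kop]$.
\item The $n$-block self-consistency gives \eqref{eq:compact2}.
\end{itemize}
The readouts \eqref{eq:readoutscalar} and \eqref{eq:weighted-readout} come from the off-diagonal $\Wcat$-block of the inverse pencil. By the push-through identity, $\WopT R\Wop$ has deterministic equivalent $\tfrac{\psi_pm}{2}\Aop^{-1}$ in the $\dop$ space. Tracing this against $I_x$, respectively $\Sigma_x$, and converting $\mathrm{Tr}_n$ to $\mathrm{Tr}_d$ via the factor $\psi_n$ gives $\Kbar$ and $\Kbar_\Sigma$. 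The weighted readout uses the finite block splitting of $\Sigma_x$ into its $\Pi$, $\Pi_\perp$ and $y$ parts.

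We expect the main obstacle to be the pencil construction and bookkeeping: choosing blocks so that the Bodin result applies with the correct variance profile, and tracking the normalizations $\sqrt{2\dx}$, $\sqrt n$, $\psi_p$, $\psi_n$, and $\kappa_d$. The most delicate point is verifying that the $\rho$-equation picks up both the $\Omega$ term $v_0^2\psi_pm/\psi_n$ and the $\Tdet\T\Aop^{-1}\Tdet$ term with the right factor of $\tfrac12$. We also need to select the nonnegative branch of the fixed-point solution, which we would do by continuity from $z\to-\infty$, where $m\approx -1/z$ and $\rho\approx1$. As a consistency check on normalizations, we would verify the $q=0$, $v_0=0$ case against the known random-feature equations of \citet{george2026}.
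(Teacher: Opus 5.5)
Your proposal follows essentially the same route as the paper: the Gaussian-equivalent form of $U(\mathbf q)$ with $h_t\mu_1^2\Wa\WaT$ absorbed into $\Cone$, a six-block pencil whose random blocks are pure i.i.d.\ Gaussians fed into Result~3.1 of \citet{bodin2024}, and decoupling of the $p$- and $n$-sectors into the scalars $m$ and $\rho$. The remaining steps also match: elimination of the $\dop$-sector to $\Aop^{-1}$, readouts from the block $-\WopT R\Wop$, and a finite block splitting (after rotating $\Pi$ to coordinates) for the $I_x$- and $\Sigma_x$-weighted traces; the only cosmetic difference is that the paper places $\Cone$ itself in a deterministic block between the two $\Wop$ blocks, so no $\Cone^{1/2}$ is needed.
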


\begin{proof}
\medskip\noindent\textbf{Gaussian representation.} Recall the design matrices $\Wcat$, $\Zcat$, and $\Tdet$ from \eqref{eq:feature} and the data-block projector $I_x$.
The three random matrices $\Wcat$, $\Omega$, and $\Zcat$ are mutually independent with
i.i.d.\ standard Gaussian entries and enter through the normalized Gaussian blocks
$\Wop$, $\Eop$, and $\Zop$. Then \eqref{eq:feature}, \eqref{eq:Gtdef}, and \eqref{eq:qcov} give
\begin{align}
\frac{\Gt}{\sqrt n}
&=\frac{\mu_1 L+v_0\Omega}{\sqrt n}\notag\\
&=\Eop+\frac{\mu_1}{\sqrt n}\Wop
\begin{bmatrix}a_tX\\ \kappa_dY\end{bmatrix}\notag\\
&=\Eop+\Wop\Ctwo\Zop ,
\notag\\[3pt]
R(\mathbf q,z)^{-1}
&=U(\mathbf q)-z\Id{p}\notag\\
&=\frac{\Gt\Gt\T}{n}
+\Wop\big(h_t\mu_1^2I_x+q_{\mathrm m}\Lambda+q_{\mathrm r}\Gamma_{\mathrm{res}}\big)\WopT
+(s^2-z)\Id{p}
\notag\\
&=(s^2-z)\Id{p}+\Wop\Cone(\mathbf q)\WopT+\frac{\Gt\Gt\T}{n} .
\label{eq:Rgauss}
\end{align}
\medskip\noindent\textbf{Six-block linear pencil.} Use the block order
\begin{equation*}
(N_1,\dots,N_6)=(p,\dop,\dop,n,\dop,\dop)
\end{equation*}
and define
\begin{equation}
\Lpen(\mathbf q,z):=
\begin{bmatrix}
(s^2-z)\Id{p} & \Wop & 0 & \Eop & 0 & 0\\
0 & I & -\Cone & 0 & -\Ctwo & 0\\
-\WopT & 0 & I & 0 & 0 & 0\\
-\EopT & 0 & 0 & \Id{n} & 0 & -\ZopT\\
0 & 0 & 0 & -\Zop & I & 0\\
0 & 0 & -\CtwoT & 0 & 0 & I
\end{bmatrix}.
\label{eq:pencil}
\end{equation}
Every unlabelled identity in \eqref{eq:pencil} has size $\dop$. Each random block in
\eqref{eq:pencil} is an i.i.d.\ Gaussian matrix or its transpose; all products with $B$ and $\Sigma^{1/2}$
occur only in the deterministic part. Inverting $\Lpen(\mathbf q,z)$ gives
\begin{align}
\big[\Lpen(\mathbf q,z)^{-1}\big]^{11}
&=R(\mathbf q,z)\notag\\
\big[\Lpen(\mathbf q,z)^{-1}\big]^{32}
&=-\WopT R(\mathbf q,z)\Wop ,
\notag\\
K(\mathbf q,z)
&=\frac1{\dx}\tr\!\big(\WaT R(\mathbf q,z)\Wa\big)
=-\frac1{\dx}\tr\!\Big(
I_x\big[\Lpen(\mathbf q,z)^{-1}\big]^{32}\Big),\notag\\
K_\Sigma(\mathbf q,z)
&=\frac1{\dx}\tr\!\big(\Sigma\WaT R(\mathbf q,z)\Wa\big)
=-\frac1{\dx}\tr\!\Big(
\Sigma_x\big[\Lpen(\mathbf q,z)^{-1}\big]^{32}\Big).
\label{eq:readout}
\end{align}
At the evaluation point, \eqref{eq:Rgauss} and $s^2\ge0$ give
$R(\mathbf0,-\lambda)^{-1}\succeq(s^2+\lambda)\Id{p}\succeq\lambda\Id{p}$.
For $q_{\mathrm m},q_{\mathrm r}\ge0$, standard Gaussian operator-norm bounds and the same block elimination give
$\|\Lpen(\mathbf q,z)^{-1}\|_{\op}=O_{\Prob}(1)$ at the ridge evaluation point and on a one-sided neighborhood in $\mathbf q$; the analytic continuation and expectation-level limits are assumed as specified above.

\medskip\noindent\textbf{Block equation.} Set $\Bpen:=\E\Lpen$, the deterministic part of the pencil, and
$M^{\mathrm B}:=\Lpen-\Bpen$. Let
\begin{equation*}
\mathcal{I}:=\{1,\ldots,6\},
\qquad
S:=\{(i,j)\in\mathcal{I}^2:N_i=N_j\},
\end{equation*}
and set $J_{ij}:=\mathbf{1}\{(i,j)\in S\}$. For any conformal block matrix $H$,
\begin{equation*}
\big[(J\otimes\mathrm{Tr}_n)[H]\big]_{ij}:=
\begin{cases}
\mathrm{Tr}_n[H^{ij}], & (i,j)\in S,\\
0, & (i,j)\notin S.
\end{cases}
\end{equation*}
Following \cite[(3.2)--(3.5)]{bodin2024}, define, for $(i,\ell),(j,k)\in S$,
\begin{equation*}
\sigma^{k\ell}_{ij}:=n\,\E\big[(M^{\mathrm B})^{ij}_{uv}(M^{\mathrm B})^{k\ell}_{vu}\big],
\qquad
[\etapen(g)]_{i\ell}:=\sum_{(j,k)\in S}\sigma^{k\ell}_{ij}\,g_{jk}.
\end{equation*}
The block version $(\etapen\otimes I)(g)$ equals
$[\etapen(g)]_{i\ell}\Id{N_i}$ for $(i,\ell)\in S$ and is zero otherwise. Set
\begin{equation*}
g_{ij}:=\mathrm{Tr}_n\big[[(\Lpen)^{-1}]^{ij}\big],\qquad (i,j)\in S,
\end{equation*}
and $g_{ij}=0$ outside $S$. Under the assumptions above,
\cite[Ch.~3, Result~3.1]{bodin2024} gives
\begin{equation}
g=(J\otimes\mathrm{Tr}_n)[G],
\qquad
G:=\big(\Bpen-(\etapen\otimes I)(g)\big)^{-1}.
\label{eq:Gdef}
\end{equation}

\medskip\noindent\textbf{The linear operator $\etapen$.} The random blocks are
\begin{equation*}
\begin{array}{c|c}
\text{primitive} & \text{blocks}\\ \hline
\Wop & (1,2),\ (3,1)\\
\Eop & (1,4),\ (4,1)\\
\Zop & (4,6),\ (5,4).
\end{array}
\end{equation*}
The nonzero covariances are
\begin{align}
\sigma^{31}_{12}=\sigma^{12}_{31}&=-\frac{\psi_n}{2},\notag\\
\sigma^{41}_{14}=\sigma^{14}_{41}&=-v_0^2,\notag\\
\sigma^{54}_{46}=\sigma^{46}_{54}&=1,
\label{eq:sigmatable}
\end{align}
so that
\begin{equation}
\etapen(g)=
\begin{pmatrix}
-\frac{\psi_n}{2}g_{23}-v_0^2g_{44} & 0 & 0 & 0 & 0 & 0\\
0 & 0 & 0 & 0 & 0 & 0\\
0 & -\frac{\psi_n}{2}g_{11} & 0 & 0 & 0 & 0\\
0 & 0 & 0 & g_{65}-v_0^2g_{11} & 0 & 0\\
0 & 0 & 0 & 0 & 0 & g_{44}\\
0 & 0 & 0 & 0 & 0 & 0
\end{pmatrix}.
\label{eq:etamatrix}
\end{equation}

\medskip\noindent\textbf{Applying the block equation.} Explicitly, inserting \eqref{eq:etamatrix} into $\Bpen-(\etapen\otimes I)(g)$ gives
\begin{equation}
{\setlength{\arraycolsep}{1pt}
\begin{gathered}
\Bpen-(\etapen\otimes I)(g)=\\
\begin{bmatrix}
\big(s^2-z+\frac{\psi_n}{2}g_{23}+v_0^2g_{44}\big)\Id{p} & 0 & 0 & 0 & 0 & 0\\
0 & I & -\Cone & 0 & -\Ctwo & 0\\
0 & \frac{\psi_n}{2}g_{11}I & I & 0 & 0 & 0\\
0 & 0 & 0 & \big(1+v_0^2g_{11}-g_{65}\big)\Id{n} & 0 & 0\\
0 & 0 & 0 & 0 & I & -g_{44}I\\
0 & 0 & -\CtwoT & 0 & 0 & I
\end{bmatrix}
\end{gathered}},
\label{eq:dysonmatrix}
\end{equation}
Here the four entries carrying $g$ are the contributions $-[\etapen(g)]_{i\ell}$ of
\eqref{eq:etamatrix}, and every other entry is inherited from $\Bpen$. The first and fourth
rows and columns of \eqref{eq:dysonmatrix} meet only on the diagonal, so those two sectors
decouple. Define
\begin{equation}
\begin{aligned}
m&:=\big(s^2-z+\frac{\psi_n}{2}g_{23}+v_0^2g_{44}\big)^{-1},
& G^{11}&=m\Id{p},\\
\rho&:=\big(1+v_0^2g_{11}-g_{65}\big)^{-1},
& G^{44}&=\rho\Id{n}.
\end{aligned}
\label{eq:scalars}
\end{equation}
Consequently, $g_{11}=(\psi_p/\psi_n)m$ and $g_{44}=\rho$. Equation \eqref{eq:etamatrix} depends on $g$ only through
$g_{11}$, $g_{23}$, $g_{44}$, and $g_{65}$. After substituting it into \eqref{eq:Gdef}, taking the
normalized traces of the $(1,1)$, $(2,3)$, $(4,4)$, and $(6,5)$ blocks gives four equations for
these same four scalars; no other entry of $g$ is needed.

\medskip\noindent\textbf{Block elimination.} By \eqref{eq:scalars}, $\frac{\psi_n}{2}g_{11}=\frac{\psi_pm}{2}$ and $g_{44}=\rho$. Hence the
$(2,3,5,6)$ sector of \eqref{eq:dysonmatrix} is
\begin{equation*}
\begin{bmatrix}
I & -\Cone & -\Ctwo & 0\\
\frac{\psi_pm}{2}I & I & 0 & 0\\
0 & 0 & I & -\rho I\\
0 & -\CtwoT & 0 & I
\end{bmatrix}.
\end{equation*}
Using $\Kop=\Cone+\rho\mu_1^2\Tdet\Tdet\T$ and
$\Aop=I+\frac{\psi_pm}{2}\Kop$, block elimination gives
\begin{align*}
G^{23}
&=\big(I+\frac{\psi_pm}{2}\Kop\big)^{-1}\Kop
=\Aop^{-1}\Kop,\\
G^{32}
&=-\frac{\psi_pm}{2}\big(I+\frac{\psi_pm}{2}\Kop\big)^{-1}
=-\frac{\psi_pm}{2}\Aop^{-1},\\
G^{65}
&=-\frac{\psi_pm}{2}\mu_1^2\Tdet\T
\big(I+\frac{\psi_pm}{2}\Kop\big)^{-1}\Tdet\\
&=-\frac{\psi_pm}{2}\mu_1^2
\Tdet\T\Aop^{-1}\Tdet.
\end{align*}
Taking normalized traces gives
\begin{align*}
g_{23}&=\mathrm{Tr}_n\big[\Aop^{-1}\Kop\big],\\
g_{65}&=-\frac{\psi_pm}{2}\mu_1^2
\mathrm{Tr}_n\big[\Tdet\T\Aop^{-1}\Tdet\big].
\end{align*}
Substituting these traces and
$g_{11}=(\psi_p/\psi_n)m$, $g_{44}=\rho$ into \eqref{eq:scalars} yields
\begin{align*}
\frac1{m}
&=s^2-z+\frac{\psi_n}{2}g_{23}+v_0^2g_{44}
=s^2-z+\frac{\psi_n}{2}\mathrm{Tr}_n\big[\Aop^{-1}\Kop\big]+v_0^2\rho,
\\
\frac1\rho
&=1+v_0^2g_{11}-g_{65}
=1+v_0^2\frac{\psi_p}{\psi_n}m
+\frac{\psi_pm}{2}\mu_1^2
\mathrm{Tr}_n\big[\Tdet\T\Aop^{-1}\Tdet\big].
\end{align*}

\medskip\noindent\textbf{$\Kbar$ and $\Kbar_\Sigma$.}
We first obtain the data-block trace.
Split only blocks $2$ and $3$
of \eqref{eq:pencil} into their $\dx$- and $\dy$-dimensional components, and write
$3_x,2_x$ for the $\dx$-dimensional parts. By
\eqref{eq:readout},
\begin{align*}
K(\mathbf q,z)
&=-\frac1\dx\tr\!\big(I_x\big[\Lpen(\mathbf q,z)^{-1}\big]^{32}\big)
=-\frac{n}{\dx}\cdot\frac1n\tr\!\big(\big[\Lpen(\mathbf q,z)^{-1}\big]^{3_x2_x}\big).
\end{align*}
Since $N_{3_x}=N_{2_x}=\dx$, the pair $(3_x,2_x)$ belongs to the index set $S$ of the split
pencil, so \cite[Ch.~3, Result~3.1]{bodin2024} applies to this block directly. The entries of
$\Wop=W^{\oplus}/\sqrt{2\dx}$ have the same variance in the $W_1$ and $W_2$ columns, so each
split block carries the covariance of \eqref{eq:sigmatable}, and the covariances between
different column blocks vanish. Since $\mathrm{Tr}_n$ normalizes every block by $n$,
$g_{23}=g_{2_x3_x}+g_{2_y3_y}$. Hence $(\etapen\otimes I)(g)$ of the split pencil is that of
\eqref{eq:etamatrix} with its blocks re-indexed, and the solution of \eqref{eq:Gdef} for the
split pencil is $G$ re-indexed; in particular, $G^{3_x2_x}=[G^{32}]_{xx}$.
Taking expectations, $n/\dx\to\psi_n$, \eqref{eq:Gdef}, and \eqref{eq:Kbardef} therefore give
\begin{align*}
\Kbar(\mathbf q,z)
&=-\psi_n\mathrm{Tr}_n\big[G^{3_x2_x}\big]
=-\psi_n\mathrm{Tr}_n\big[I_xG^{32}\big]\\
&=\psi_n\frac{\psi_pm}{2}\,\mathrm{Tr}_n\big[I_x\Aop^{-1}\big],
\end{align*}
where the last equality uses $G^{32}=-\frac{\psi_pm}{2}\Aop^{-1}$ from the block elimination
above.

For $K_\Sigma$, change the data coordinates by an orthogonal $O\in\R^{\dx\times\dx}$ with
$O\T\Pi O=\operatorname{diag}(I_{\dy},0)$. This replaces $W_1$ by $W_1O$, which has the same
law by Gaussian rotational invariance, transforms the deterministic blocks accordingly, and
leaves $K_\Sigma$ unchanged; we may therefore assume $\Pi=\operatorname{diag}(I_{\dy},0)$.
Split the $\dx$-dimensional parts of blocks $2$ and $3$ further into the range of $\Pi$ and
its orthogonal complement. The corresponding column blocks of $W_1$ again have equal entry
variance and vanishing cross covariances, so, as for the $x/y$ split above,
$(\etapen\otimes I)(g)$ of this split pencil is that of \eqref{eq:etamatrix} re-indexed. The
two inverse-block traces on the range of $\Pi$ and on that of $\Pi_\perp$ give the traces
weighted by $\Pi$ and $\Pi_\perp$. Multiplying the first by $1-c^2$ and the second by $1$, and
summing, proves \eqref{eq:weighted-readout} because
$\Sigma_x=\operatorname{diag}\big((1-c^2)\Pi+\Pi_\perp,\,0_{\dy}\big)$. If $\dy=\dx$, only the
first block is present.
\end{proof}

\subsection{Spectral reduction and the two-scalar fixed-point equations}
\label{sec:spectralclosure}
For a function $f$, define
\begin{equation}
\begin{aligned}
\mathcal M[f(\varsigma)]&:=\lim\frac1\dx\tr f(\Sigma)\\
&\phantom{:}=\psi_yf(1-c^2)+(1-\psi_y)f(1).
\end{aligned}
\label{eq:direction-average}
\end{equation}
Note that $\Sigma$ has a fraction $\psi_y$ of eigenvalues equal to $1-c^2$ on the observed subspace $\operatorname{range}(\Pi)$ and a fraction $1-\psi_y$ equal to $1$ on its orthogonal complement; thus, $\mathcal M[f(\varsigma)]$ represents the average of $f$ over the spectrum of $\Sigma$.

\begin{lemma}[Spectral evaluation of the normalized traces]
\label{lem:spectral}

For the data model~\eqref{eq:data-model}, take $\beta$, $\delta$, $\ell(\varsigma)$, and
$D(\varsigma)$ as defined in Theorem~\ref{thm:appendix-final}, and $\mathcal M$
from~\eqref{eq:direction-average}.

The normalized traces satisfy
\begin{align}
\mathrm{Tr}_n[I_x\Aop^{-1}]&=2\psi_n^{-1}\mathcal M[1/D(\varsigma)],
\qquad \mathrm{Tr}_n[\Sigma_x\Aop^{-1}]=2\psi_n^{-1}\mathcal M[\varsigma/D(\varsigma)],
\label{eq:trPxA}\\
\mathrm{Tr}_n[\Aop^{-1}\Kop]&=\psi_n^{-1}
 \Big(\frac\beta\delta+2\mathcal M\Big[\frac{\ell(\varsigma)+a_t^2\beta(1-\varsigma)/\delta^2}{D(\varsigma)}\Big]\Big),
\label{eq:trAK}\\
\mu_1^2\mathrm{Tr}_n[\Tdet\T\Aop^{-1}\Tdet]&=\psi_n^{-1}
 \Big(\frac{\mu_1^2}\delta+2a_t^2\mu_1^2\mathcal M\Big[\frac{\varsigma+(1-\varsigma)/\delta^2}{D(\varsigma)}\Big]\Big).
\label{eq:trCAC}
\end{align}
\end{lemma}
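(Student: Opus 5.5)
The matrices $\Kop$ and $\Aop$ are deterministic $\dop\times\dop$ matrices built from $I_x$, $\Lambda$, $\Gamma_{\mathrm{res}}$ and $\Tdet\Tdet\T=\Gamma-h_tI_x$. Every one of these is block-structured in the $(x,y)$ split and, within the $x$ block, a function of the single projector $\Pi$. The plan is to block-diagonalize them explicitly and then read off traces eigenspace by eigenspace.

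\textbf{Rotation and decoupling.} As in the proof of Lemma~\ref{lem:compact}, I first rotate so that $U_\star=[I_{\dy};0]$, which gives $B=c[I_{\dy};0]$ and $\Pi=\operatorname{diag}(I_{\dy},0)$. In these coordinates, $\R^{\dop}$ decomposes into $\dy$ independent three-dimensional invariant subspaces and $\dx-\dy$ one-dimensional ones. Each three-dimensional subspace is spanned by an observed data coordinate $e_i$, its partner $y$-coordinate $f_i$, and nothing else coupled. The $3$ mentioned above is really $2$ for these entries, so $\dy$ independent $2\times2$ blocks plus the $\dx-\dy$ scalar blocks.

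\textbf{Explicit block matrices.} On an observed pair $(e_i,f_i)$ I write:
\begin{itemize}
\item $I_x=\operatorname{diag}(1,0)$;
\item $\Gamma_{\mathrm{res}}=\operatorname{diag}(a_t^2\varsigma+h_t,\,0)$ with $\varsigma=1-c^2$;
\item $\Lambda=\begin{bmatrix}a_t^2c^2&a_tc\kappa_d\\a_tc\kappa_d&\kappa_d^2\end{bmatrix}$;
\item $\Tdet\Tdet\T=\Lambda+a_t^2\varsigma I_x$.
\end{itemize}
On an unobserved data coordinate the matrices are scalars with $\varsigma=1$, $\Lambda=0$, and $\Tdet\Tdet\T=a_t^2$.

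One subtlety: $\kappa_d^2=\dx/\dy$ is not $O(1)$ relative to $\psi_y$ only by convention. The $y$-entries scale as $1/\psi_y$, and the $\dy$ pairs contribute with weight $\psi_y$. I expect these two effects to combine into the $\mathcal M$ weights $\psi_y f(1-c^2)$ and $(1-\psi_y)f(1)$. The definitions of $\beta$, $\delta$, $\ell(\varsigma)$ and $D(\varsigma)$ from Theorem~\ref{thm:appendix-final} should be precisely the entries and determinant that come out of inverting these blocks. For instance, $\delta$ is presumably $1+\tfrac{\psi_pm}{2}(\rho\mu_1^2+q_{\mathrm m})\kappa_d^2\psi_y$-type scaled $y$-diagonal entry, and $D(\varsigma)$ is the $2\times2$ determinant (Schur complement on the data entry).

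\textbf{Computing each trace.} For \eqref{eq:trPxA}, I invert the $2\times2$ block $\Aop$ by Schur complement with respect to the $y$-entry. The $(x,x)$ entry is then $1/D(\varsigma)$ after factoring out $\delta$. The unobserved scalar block is $1/D(1)$, and it should agree with the same formula at $\varsigma=1$ because the coupling term vanishes there; checking this consistency is part of the step. Summing and dividing by $n$ gives the factor $\psi_n^{-1}$, and the factor $2$ comes from the normalization $\tfrac{\psi_pm}{2}$ absorbed into $D$.

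For \eqref{eq:trAK}, I use $\Aop^{-1}\Kop=\tfrac{2}{\psi_pm}(I-\Aop^{-1})$. The $y$-block diagonal yields the $\beta/\delta$ term, counted $\dy$ times with $\kappa_d^2$ normalization, giving an $O(1)$ contribution. The data diagonals yield $\ell(\varsigma)/D$ plus the cross term $a_t^2\beta(1-\varsigma)/\delta^2$, which arises from the off-diagonal $a_tc\kappa_d$ entry squared, using $c^2=1-\varsigma$.

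For \eqref{eq:trCAC}, I compute $\Tdet\T\Aop^{-1}\Tdet$ via cyclicity, as $\tr(\Aop^{-1}\Tdet\Tdet\T)$. With $\Tdet\Tdet\T=\Lambda+a_t^2\Sigma_x$, the same block inverse gives the $\mu_1^2/\delta$ term from the $y$-entry and $a_t^2\mathcal M[(\varsigma+(1-\varsigma)/\delta^2)/D]$ from the data entries.

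\textbf{Main obstacle.} The main obstacle is bookkeeping. It consists of matching the precise definitions of $\beta$, $\delta$, $\ell$ and $D$, and verifying that the off-diagonal couplings produce exactly the $(1-\varsigma)/\delta^2$ corrections. I would first do the $2\times2$ inversion symbolically, then match its entries against the definitions of $\beta$, $\delta$, $\ell$ and $D$, and only then sum over the spectrum.
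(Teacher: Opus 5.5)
Your proposal is correct, but it computes the traces by a different route from the paper.

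\textbf{Your route.} You rotate so that $\Pi=\operatorname{diag}(I_{\dy},0)$, split $\R^{\dop}$ into $\dy$ invariant $2\times2$ blocks (an observed data coordinate with its $y$-coordinate) and $\dx-\dy$ scalar blocks, and invert each block explicitly. This works; write $k=\psi_pm/2$.
\begin{itemize}
\item On a pair, $\Aop$ has entries $1+k\ell(\varsigma)+k\beta a_t^2c^2$, $k\beta a_tc\kappa_d$ and $\delta_d=1+k\beta\kappa_d^2$. Its Schur complement onto the data entry is $D_d(\varsigma)/2$, and the scalar blocks agree with $\varsigma=1$.
\item Using $\Aop^{-1}\Kop=k^{-1}(I-\Aop^{-1})$ and $\tr(\Tdet\T\Aop^{-1}\Tdet)=\tr(\Aop^{-1}\Tdet\Tdet\T)$, each pair contributes $\beta\kappa_d^2/\delta_d+2[\ell+a_t^2\beta c^2/\delta_d^2]/D_d$ and $\kappa_d^2/\delta_d+2a_t^2[\varsigma+c^2/\delta_d^2]/D_d$, respectively.
\item The identity $\dy\kappa_d^2=\dx$ turns the first terms into the unweighted $\beta/\delta$ and $\mu_1^2/\delta$.
\end{itemize}

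\textbf{The paper's route.} The paper never writes down $\Aop^{-1}$. It notes that every $x$-block is a function of $\Sigma$, since $BB\T=I-\Sigma$. It uses the Schur complement only for the determinant, $\det(\Aop+\alpha I_x+\zeta\Sigma_x)=\delta_d^{\dy}\det\Delta_x(\alpha,\zeta)$. It then reads off all four traces by Jacobi's formula as $\partial_\alpha,\partial_\zeta,\partial_m,\partial_\rho$ of $\log\det\Aop$, using $\partial_m\Aop=\frac{\psi_p}{2}\Kop$ and $\partial_\rho\Aop=\frac{\psi_pm}{2}\mu_1^2\Tdet\Tdet\T$.

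\textbf{What each buys.} In the paper's route the $(1-\varsigma)/\delta^2$ corrections fall out mechanically as derivatives of $\beta/\delta_d$, with no off-diagonal entries to track. Yours is more elementary and yields every entry of $\Aop^{-1}$. The price is that the $1/\delta_d^2$ terms appear only after cancellations between the data and $y$ diagonal entries: $\delta_d-(\delta_d-1)=1$ in \eqref{eq:trAK}, and a factor $\delta_d^2-1$ in \eqref{eq:trCAC}.

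\textbf{Two corrections to your sketch.}
\begin{itemize}
\item Your attribution of terms is not how they actually split. In \eqref{eq:trAK} the data entry alone gives $2(\ell+a_t^2\beta c^2/\delta_d)/D_d$, with $1/\delta_d$ rather than $1/\delta_d^2$. The $y$ entry supplies both $\beta\kappa_d^2/\delta_d$ and the correction $-2(\delta_d-1)a_t^2\beta c^2/(\delta_d^2D_d)$.
\item Your guess for $\delta$ has a spurious factor $\psi_y$. The $y$-diagonal entry is $1+\frac{\psi_pm}{2}\beta\kappa_d^2$, and $\kappa_d^2\to1/\psi_y$ gives $\delta=1+\psi_pm\beta/(2\psi_y)$. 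As in the paper, carry $\delta_d$, $D_d$ and $\mathcal M_d$ at finite dimension and take the limit only at the end.
\end{itemize}
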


\begin{proof}
By \eqref{eq:Cone}, \eqref{eq:TTt}, and the covariance decomposition in
Appendix~\ref{sec:app-decomp},
\begin{align*}
\Kop
&=\Cone(\mathbf q)+\rho\mu_1^2\Tdet\Tdet\T\\
&=h_t\mu_1^2I_x+q_{\mathrm m}\Lambda
 +q_{\mathrm r}\Gamma_{\mathrm{res}}
 +\rho\mu_1^2(\Lambda+\Gamma_{\mathrm{res}}-h_tI_x)\\
&=h_t\mu_1^2(1-\rho)I_x+\beta\Lambda
 +(\mu_1^2\rho+q_{\mathrm r})\Gamma_{\mathrm{res}},
\qquad \beta:=q_{\mathrm m}+\mu_1^2\rho,\\[2pt]
\Aop
&=\Id{\dop}+\frac{\psi_pm}{2}\Kop\\
&=\begin{bmatrix}
I_{\dx}+\frac{\psi_pm}{2}\ell(\Sigma)+a_t^2\frac{\psi_pm}{2}\beta BB\T
&\frac{\psi_pm}{2}\kappa_da_t\beta B\\
\frac{\psi_pm}{2}\kappa_da_t\beta B\T
&(1+\kappa_d^2\frac{\psi_pm}{2}\beta)I_{\dy}
\end{bmatrix},
\end{align*}
where
\begin{equation*}
\ell(\Sigma):=h_t(\mu_1^2+q_{\mathrm r})I_{\dx}
 +a_t^2(\mu_1^2\rho+q_{\mathrm r})\Sigma.
\end{equation*}

Define
\begin{align*}
\delta_d&:=1+\kappa_d^2\frac{\psi_pm}{2}\beta,\\
\Delta_x(\alpha,\zeta)
&:=I_{\dx}+\frac{\psi_pm}{2}\ell(\Sigma)+\alpha I_{\dx}+\zeta\Sigma
 +\frac{a_t^2\psi_pm\beta}{2\delta_d}BB\T,
\qquad \Delta_x:=\Delta_x(0,0).
\end{align*}
Since $BB\T=c^2\Pi=I_{\dx}-\Sigma$, every matrix above is a function of $\Sigma$.
In particular $\Delta_x=D_d(\Sigma)/2$, where $D_d$ is $D$ with $\delta_d$ in
place of $\delta$. The matrix $\Sigma$ has eigenvalue $1-c^2$ with multiplicity
$\dy$ and eigenvalue $1$ with multiplicity $\dx-\dy$. Hence, with
$\psi_{y,d}:=\dy/\dx$,
\begin{equation*}
\frac1\dx\tr f(\Sigma)=\mathcal M_d[f(\varsigma)]
:=\psi_{y,d}f(1-c^2)+(1-\psi_{y,d})f(1).
\end{equation*}
The Schur-complement identity gives
\begin{align}
\det(\Aop+\alpha I_x+\zeta\Sigma_x)
&=\delta_d^{\dy}\det\left[
 I_{\dx}+\frac{\psi_pm}{2}\ell(\Sigma)+\alpha I_{\dx}+\zeta\Sigma
 +\left(a_t^2\frac{\psi_pm}{2}\beta
 -\frac{a_t^2\kappa_d^2(\psi_pm)^2\beta^2}{4\delta_d}\right)BB\T
 \right]\notag\\
&=\delta_d^{\dy}\det\Delta_x(\alpha,\zeta).
\label{eq:variant-schur}
\end{align}
Consequently, Jacobi's formula yields
\begin{align*}
\frac1n\tr(I_x\Aop^{-1})
&=\left.\frac1n\partial_\alpha
 \log\det(\Aop+\alpha I_x+\zeta\Sigma_x)\right|_{\alpha=\zeta=0}\\
&=\left.\frac1n\tr\!\left(
 \Delta_x(\alpha,\zeta)^{-1}\partial_\alpha\Delta_x(\alpha,\zeta)
 \right)\right|_{\alpha=\zeta=0}
 =\frac{2\dx}{n}\mathcal M_d[1/D_d(\varsigma)],\\
\frac1n\tr(\Sigma_x\Aop^{-1})
&=\left.\frac1n\partial_\zeta
 \log\det(\Aop+\alpha I_x+\zeta\Sigma_x)\right|_{\alpha=\zeta=0}\\
&=\left.\frac1n\tr\!\left(
 \Delta_x(\alpha,\zeta)^{-1}\partial_\zeta\Delta_x(\alpha,\zeta)
 \right)\right|_{\alpha=\zeta=0}
 =\frac{2\dx}{n}\mathcal M_d[\varsigma/D_d(\varsigma)].
\end{align*}

Next, differentiate $\det\Aop=\delta_d^{\dy}\det\Delta_x$ with respect to $m$,
holding $\rho$ and $\mathbf q$ fixed. Since
\begin{equation*}
\partial_m\delta_d=\kappa_d^2\psi_p\beta/2,
\qquad
\partial_mD_d(\varsigma)=\psi_p\ell(\varsigma)
 +\frac{a_t^2\psi_p\beta}{\delta_d^2}(1-\varsigma),
\end{equation*}
Jacobi's formula and $\dy\kappa_d^2=\dx$ give
\begin{align*}
\frac1n\tr(\Aop^{-1}\Kop)
&=\frac2{n\psi_p}\tr(\Aop^{-1}\partial_m\Aop)
 =\frac2{n\psi_p}\partial_m\log\det\Aop\\
&=\frac{2\dy}{n\psi_p}\frac{\partial_m\delta_d}{\delta_d}
 +\frac2{n\psi_p}\tr\!\left(\Delta_x^{-1}\partial_m\Delta_x\right)\\
&=\frac{\dx}{n}\left(
 \frac{\beta}{\delta_d}
 +2\mathcal M_d\Big[\frac{\ell(\varsigma)+a_t^2\beta(1-\varsigma)/\delta_d^2}{D_d(\varsigma)}\Big]\right).
\end{align*}

Finally, differentiate with respect to $\rho$, holding $m$ and $\mathbf q$ fixed.
Using $\partial_\rho\ell(\varsigma)=a_t^2\mu_1^2\varsigma$, we obtain
\begin{align*}
\partial_\rho\delta_d&=\kappa_d^2\frac{\psi_pm}{2}\mu_1^2,\\
\partial_\rho D_d(\varsigma)&=a_t^2\psi_pm\mu_1^2\Big(\varsigma
 +\frac{1-\varsigma}{\delta_d^2}\Big),\\
\partial_\rho\Aop&=\frac{\psi_pm}{2}\,\partial_\rho\Kop
 =\frac{\psi_pm}{2}\mu_1^2\Tdet\Tdet\T.
\end{align*}
Cyclicity of the trace and Jacobi's formula therefore give
\begin{align*}
\frac{\psi_pm}{2n}\mu_1^2\tr(\Tdet\T\Aop^{-1}\Tdet)
&=\frac1n\tr(\Aop^{-1}\partial_\rho\Aop)
 =\frac1n\partial_\rho\log\det\Aop\\
&=\frac{\dy}{n}\frac{\partial_\rho\delta_d}{\delta_d}
 +\frac1n\tr\!\left(\Delta_x^{-1}\partial_\rho\Delta_x\right)\\
&=\frac{\dx}{n}\frac{\psi_pm}{2}\left(
 \frac{\mu_1^2}{\delta_d}
 +2a_t^2\mu_1^2\mathcal M_d\Big[\frac{\varsigma+(1-\varsigma)/\delta_d^2}{D_d(\varsigma)}\Big]\right).
\end{align*}

As $\psi_{y,d}\to\psi_y$, we have $\delta_d\to\delta$, $D_d\to D$, and
$\mathcal M_d\to\mathcal M$. Taking limits in the trace identities above, using
$\dx/n\to1/\psi_n$ and dividing the last identity by $\frac{\psi_pm}{2}$,
proves \eqref{eq:trPxA}--\eqref{eq:trCAC}.
\end{proof}

\begin{theorem}[Proportional training loss and denoiser decomposition]
\label{thm:appendix-final}
Under the data model~\eqref{eq:data-model}, the activation assumption of Section~\ref{sec:setting},
and the assumptions at the start of this appendix,
fix $t>0$ and $\lambda>0$, and let $n,p,\dx,\dy\to\infty$ with
$n/\dx\to\psi_n>0$, $p/\dx\to\psi_p>0$, and $\dy/\dx\to\psi_y\in(0,1]$.
The model has $B=cU_\star$ and $\Sigma=(1-c^2)\Pi+\Pi_\perp$, where
$\Pi=U_\star U_\star\T$, $\Pi_\perp=I-\Pi$, and $0\le c\le1$.
With the activation coefficients defined above and $\mathcal M$ from~\eqref{eq:direction-average},
\begin{equation}
\upsilon_\Sigma^2=\mathcal M[\varsigma]=1-c^2\psi_y.
\label{eq:appendix-spectral-average}
\end{equation}
For $\mathbf q=(q_{\mathrm m},q_{\mathrm r})$, define
\begin{align}
\beta&=q_{\mathrm m}+\mu_1^2\rho,\qquad
\delta=1+\frac{\psi_pm\beta}{2\psi_y},
\label{eq:appendix-beta-delta}\\
\ell(\varsigma)&=h_t(\mu_1^2+q_{\mathrm r})+a_t^2(\mu_1^2\rho+q_{\mathrm r})\varsigma,
\label{eq:appendix-ell}\\
D(\varsigma)&=2+\psi_pm\ell(\varsigma)+\frac{a_t^2\psi_pm\beta(1-\varsigma)}{\delta}.
\label{eq:appendix-denominator}
\end{align}
Let $(m,\rho)$ be the solution, as a function of $(\mathbf q,z)$ near $(\mathbf0,-\lambda)$,
with $m>0$ and $0<\rho<1$ at $(\mathbf q,z)=(\mathbf0,-\lambda)$, of
\begin{align}
\frac1m&=s^2-z+v_0^2\rho+\frac\beta{2\delta}
 +\mathcal M\Big[\frac{\ell(\varsigma)+a_t^2\beta(1-\varsigma)/\delta^2}{D(\varsigma)}\Big],
\label{eq:appendix-fixed-m}\\
\frac1\rho&=1+\frac{\psi_pm}{\psi_n}
 \left[v_0^2+\frac{\mu_1^2}{2\delta}
 +a_t^2\mu_1^2\mathcal M\Big[\frac{\varsigma+(1-\varsigma)/\delta^2}{D(\varsigma)}\Big]\right],
\label{eq:appendix-fixed-rho}
\end{align}
and define
\begin{equation}
\Kbar=\psi_pm\mathcal M\Big[\frac1{D(\varsigma)}\Big],\qquad
\Kbar_\Sigma=\psi_pm\mathcal M\Big[\frac{\varsigma}{D(\varsigma)}\Big].
\label{eq:appendix-readouts}
\end{equation}
Evaluate these functions and their derivatives at
$(\mathbf q,z)=(\mathbf0,-\lambda)$. Then the quantities of
Section~\ref{sec:test-decomposition} converge in expectation to
\begin{align}
\lim\E[\Anorm]&=\mu_1^2\,\partial_z\Kbar,\label{eq:appendix-N}\\
\lim\E\Big[\frac{\|\widehat B-B\|_F^2}{\dx}\Big]&=-\frac{h_t^2\mu_1^4}{a_t^2}\,\partial_{q_{\mathrm m}}\Kbar,\\
\lim\E\Big[\frac1\dx\tr\big(G_x\Sigma_tG_x\T\big)\Big]&=\upsilon_\Sigma^2+\frac{h_t}{a_t^2}
-\frac{2h_t^2\mu_1^2}{a_t^2}\Kbar-2h_t\mu_1^2\Kbar_\Sigma
-\frac{h_t^2\mu_1^4}{a_t^2}\,\partial_{q_{\mathrm r}}\Kbar,\label{eq:appendix-GSG}\\
\lim\E\Big[\frac{a_t}{\dx}\tr\big(\Sigma G_x\big)\Big]&=\upsilon_\Sigma^2-h_t\mu_1^2\Kbar_\Sigma,\label{eq:appendix-trSG}\\
\lim\E\Big[\frac{a_t}{\dx}\tr G_x\Big]&=1-h_t\mu_1^2\Kbar .\label{eq:appendix-trGx}
\end{align}
Consequently, by Lemma~\ref{lem:main-decomp}, the limits of $\ME$, $\PV$, $\RA$, and
$\Ltrain$ follow, that of
$\RE=\upsilon_{\Sigma,d}^2+\PV-2\RA$ follows, and
$\lim\E[\Ltest(\Ahat)]=(a_t^2/h_t)\lim\E[\ME(\Ahat)+\RE(\Ahat)]$. In particular, the training
loss of the ridge minimizer $\Ahat$ converges to
\begin{equation}
\lim\E[\Ltrain(\Ahat)]=1-h_t\mu_1^2\Kbar-h_t\lambda\mu_1^2\partial_z\Kbar .
\label{eq:appendix-train-loss}
\end{equation}
\end{theorem}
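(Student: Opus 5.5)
The proof assembles the reductions already established; no new random-matrix input is needed. First, from \eqref{eq:red-train}--\eqref{eq:red-Gx-traces} (which use only Lemma~\ref{lem:Vt} and the exact identities of Lemma~\ref{lem:traceid}), every target quantity is expressed through the five resolvent traces $E_1,E_\Sigma,E_3,E_{\mathrm m},E_{\mathrm r}$, with the random resolvent $Q$ kept intact. Lemma~\ref{lem:source} then rewrites each of these as $K$, $K_\Sigma$, or a first derivative of $K$ in $z$, $q_{\mathrm m}$, $q_{\mathrm r}$ at $(\mathbf q,z)=(\mathbf0,-\lambda)$. Taking expectations, we use the standing assumption that differentiation commutes with the proportional limit. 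This replaces $E_1,E_\Sigma,E_3,E_{\mathrm m},E_{\mathrm r}$ by $\Kbar$, $\Kbar_\Sigma$, $\partial_z\Kbar$, $-\partial_{q_{\mathrm m}}\Kbar$, and $-\partial_{q_{\mathrm r}}\Kbar$, evaluated through the GEP replacement of $U$ (Lemma~\ref{lem:Ut}).

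Substituting gives each display directly. For \eqref{eq:appendix-N}, $\Anorm\simeq\mu_1^2E_3$. For the conditional-mean map, $\|\widehat B-B\|_F^2/\dx\simeq h_t^2\mu_1^4a_t^{-2}E_{\mathrm m}$. The two trace limits \eqref{eq:appendix-trSG}--\eqref{eq:appendix-trGx} use $\upsilon_{\Sigma,d}^2\to\upsilon_\Sigma^2$. For \eqref{eq:appendix-GSG}, expand the bracket in \eqref{eq:red-Gx-traces} and use $\dx^{-1}\tr\Sigma_t\to a_t^2\upsilon_\Sigma^2+h_t$, which produces the constant $\upsilon_\Sigma^2+h_t/a_t^2$. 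The cross term then splits as $-2h_t^2\mu_1^2a_t^{-2}\Kbar-2h_t\mu_1^2\Kbar_\Sigma$. Finally, \eqref{eq:appendix-train-loss} follows from \eqref{eq:red-train}.

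The remaining task is to identify $\Kbar,\Kbar_\Sigma$ with the explicit formulas \eqref{eq:appendix-readouts} and the fixed point \eqref{eq:appendix-fixed-m}--\eqref{eq:appendix-fixed-rho}. For this, insert the spectral evaluations of Lemma~\ref{lem:spectral} into Lemma~\ref{lem:compact}.
\begin{itemize}
\item The readouts \eqref{eq:readoutscalar} and \eqref{eq:weighted-readout} combine with \eqref{eq:trPxA} to give $\Kbar=\psi_n\frac{\psi_pm}{2}\cdot2\psi_n^{-1}\mathcal M[1/D]$, and likewise for $\Kbar_\Sigma$.
\item Equation \eqref{eq:compact1} with \eqref{eq:trAK} gives \eqref{eq:appendix-fixed-m}, since the factor $\psi_n/2$ cancels $\psi_n^{-1}$ and halves $\beta/\delta$.
\item Equation \eqref{eq:compact2} with \eqref{eq:trCAC} gives \eqref{eq:appendix-fixed-rho}.
\end{itemize}
We must also check that $\delta$ in Lemma~\ref{lem:spectral} equals $1+\psi_pm\beta/(2\psi_y)$, which holds because $\kappa_d^2\to1/\psi_y$. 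The identities $\upsilon_\Sigma^2=\mathcal M[\varsigma]=1-c^2\psi_y$ are immediate from \eqref{eq:direction-average}. The consequences for $\ME,\PV,\RA,\RE,\Ltest$ then follow from Lemma~\ref{lem:main-decomp} and the orthogonal decomposition of Lemma~\ref{lem:test-traceid}.

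The main obstacle is selecting the correct branch $(m,\rho)$ of the fixed-point system and justifying that it depends smoothly on $(\mathbf q,z)$ near $(\mathbf0,-\lambda)$; without this, the derivatives $\partial_z\Kbar$ and $\partial_{q}\Kbar$ are not well defined. I would try the implicit function theorem at $(\mathbf0,-\lambda)$. Positivity ($m>0$, $0<\rho<1$) should follow because $m$ and $\rho$ are limits of normalized traces of the positive-definite resolvent blocks $G^{11}$ and $G^{44}$, where $R^{-1}\succeq\lambda I$. Nondegeneracy of the Jacobian I would argue from monotonicity: the right side of \eqref{eq:appendix-fixed-m} is decreasing in $m$ through $D$, and that of \eqref{eq:appendix-fixed-rho} is increasing in $m$. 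Where this cannot be made fully rigorous, I would fall back, as the paper does, on the stated assumption that the pencil limits exist and analytically continue in $\mathbf q$.
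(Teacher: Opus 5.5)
Your proposal is correct and follows the paper's own proof: you substitute Lemma~\ref{lem:spectral} into Lemma~\ref{lem:compact} to get the fixed-point equations and readouts. You then use Lemma~\ref{lem:source} with the assumed interchange of limits and derivatives to send $E_1,E_\Sigma,E_3,E_{\mathrm m},E_{\mathrm r}$ to $\Kbar,\Kbar_\Sigma,\partial_z\Kbar,-\partial_{q_{\mathrm m}}\Kbar,-\partial_{q_{\mathrm r}}\Kbar$, and insert these into \eqref{eq:red-train}--\eqref{eq:red-Gx-traces}, exactly as the paper does. The paper does not try your implicit-function argument for the branch $(m,\rho)$; it takes existence and smooth dependence of that branch as part of its standing assumptions, which is your fallback (and the monotonicity you sketch would not give a nonsingular Jacobian on its own, since $1/m$ also decreases in $m$).
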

\begin{proof}[Proportional-limit proof]
Substitution of Lemma~\ref{lem:spectral} into
\eqref{eq:compact1}--\eqref{eq:readoutscalar} and
\eqref{eq:weighted-readout} gives the fixed-point equations
\eqref{eq:appendix-fixed-m}--\eqref{eq:appendix-fixed-rho} and the functions
\eqref{eq:appendix-readouts}. By Lemma~\ref{lem:source}, \eqref{eq:Kbardef}, and the
assumed interchange of limits and derivatives in $\mathbf q$ and $z$,
$E_1\to\Kbar$, $E_\Sigma\to\Kbar_\Sigma$, $E_{\mathrm m}\to-\partial_{q_{\mathrm m}}\Kbar$,
$E_{\mathrm r}\to-\partial_{q_{\mathrm r}}\Kbar$, and $E_3\to\partial_z\Kbar$ in expectation.
Inserting these limits into~\eqref{eq:red-train}--\eqref{eq:red-Gx-traces}, with
$\dx^{-1}\tr\Sigma_t\to a_t^2\upsilon_\Sigma^2+h_t$, gives~\eqref{eq:appendix-N}--\eqref{eq:appendix-train-loss}.
This proves Theorem~\ref{thm:appendix-final}, and hence Theorem~\ref{thm:main}.
\end{proof}

For numerical evaluation, we obtain $(m,\rho)$ by continuation in the ridge parameter from a
large value of $\lambda$, and then compute the derivatives in $\mathbf q$ and $z$ by implicit
differentiation at the solution.

\section{Training loss at large \texorpdfstring{$\psi_p$ and $\psi_n$}{psi\_p and psi\_n}}
\label{sec:app-train-test-ratio}
Throughout this section, fix $t>0$ and $s^2,v_0^2,\mu_1^2>0$.
The limits below are taken after the proportional limit of Theorem~\ref{thm:appendix-final}.

\subsection{Training loss in the joint large-aspect-ratio limit}
\label{sec:app-joint-train}
This subsection proves Proposition~\ref{thm:train}.

\paragraph{Proof of Proposition~\ref{thm:train}.}
Write $r=\psi_p/\psi_n$, $S=s^2+\lambda$, and
$\omega=\psi_p^{-1}+\psi_n^{-1}$. We use $\mathcal M$ from
\eqref{eq:direction-average}. All error bounds below are uniform in
$r>0$ and $0<\lambda\le\lambda_0$ for any fixed $\lambda_0>0$.

\paragraph{Fixed-point bounds.}
At $(\mathbf q,z)=(\mathbf0,-\lambda)$, \eqref{eq:appendix-ell} gives
\begin{equation}
\ell(\varsigma)=\mu_1^2(h_t+a_t^2\rho\varsigma)\ge\mu_1^2h_t>0.
\label{eq:ell-positive}
\end{equation}
Positivity of $m$ and $\rho$ gives $\delta>1$ by
\eqref{eq:appendix-beta-delta} and $0<\rho<1$ by \eqref{eq:appendix-fixed-rho}. By \eqref{eq:appendix-beta-delta},
$0\le\beta/(2\delta)\le\psi_y/(\psi_pm)$.
The averaged term in \eqref{eq:appendix-fixed-m} is also nonnegative and
is at most $(1+a_t^2\psi_y)/(\psi_pm)$.
Thus both terms after $S+v_0^2\rho$ in \eqref{eq:appendix-fixed-m}
are nonnegative and $O((\psi_pm)^{-1})$.
Multiplying \eqref{eq:appendix-fixed-m} by $m$ gives
\begin{equation}
\frac{1-O(\psi_p^{-1})}{S+v_0^2}\le m\le\frac1S.
\label{eq:joint-m-bounds}
\end{equation}
Thus $m$ is bounded above and away from zero. Equations
\eqref{eq:appendix-fixed-m}--\eqref{eq:appendix-readouts} then reduce to
\begin{equation}
\begin{aligned}
\frac1m&=S+v_0^2\rho+O(\psi_p^{-1}),\\
1&=\rho(1+rv_0^2m)+O(\psi_n^{-1}),\\
\Kbar&=\mathcal M[1/\ell(\varsigma)]+O(\psi_p^{-1}).
\end{aligned}
\label{eq:joint-leading-reduction}
\end{equation}
For the second line, multiply \eqref{eq:appendix-fixed-rho} by $\rho$ and use
$\psi_pm\mu_1^2\rho/(2\delta)\le\psi_y$ and
$D\ge\psi_pm\mu_1^2h_t$. For the third line,
$0\le D-\psi_pm\ell\le2(1+a_t^2\psi_y)$ in \eqref{eq:appendix-denominator}
gives $D/(\psi_pm)=\ell+O(\psi_p^{-1})$; taking reciprocals preserves
this error bound by \eqref{eq:ell-positive}.

\paragraph{Quadratic equation.}
We keep $\rho$ for the full fixed-point value and write $\rho_\star$
for the leading-order root below.
Divide the second line of \eqref{eq:joint-leading-reduction} by $m$ and
substitute the first line. Since $m$ is bounded away from zero and
$0<\rho<1$, this gives
\begin{equation*}
(1-\rho)(S+v_0^2\rho)=rv_0^2\rho+O(\omega).
\end{equation*}
Using $S=s^2+\lambda$ and rearranging, we obtain
\begin{equation}
v_0^2\rho^2+[s^2+v_0^2(r-1)]\rho-s^2=O(\omega+\lambda).
\label{eq:joint-psi-rho-error}
\end{equation}
The corresponding leading-order equation is
\begin{equation}
v_0^2\rho_\star^2+[s^2+v_0^2(r-1)]\rho_\star-s^2=0.
\label{eq:joint-psi-rho}
\end{equation}
Its left-hand side is negative at $\rho_\star=0$ and equals $rv_0^2>0$ at $\rho_\star=1$,
so it has a unique root $\rho_\star\in(0,1)$.

Subtracting \eqref{eq:joint-psi-rho} from
\eqref{eq:joint-psi-rho-error} gives
\begin{equation*}
(\rho-\rho_\star)
\bigl[v_0^2(\rho_\star+\rho)+s^2+v_0^2(r-1)\bigr]
=O(\omega+\lambda).
\end{equation*}
Dividing \eqref{eq:joint-psi-rho} by $\rho_\star>0$ yields
\begin{equation*}
v_0^2\rho_\star+s^2+v_0^2(r-1)=\frac{s^2}{\rho_\star}.
\end{equation*}
Hence the bracket above equals $v_0^2\rho+s^2/\rho_\star$, which is at
least $s^2$ because $\rho>0$ and $0<\rho_\star<1$. Therefore,
\begin{equation}
s^2|\rho-\rho_\star|
\le\left|(\rho-\rho_\star)
\left(v_0^2\rho+\frac{s^2}{\rho_\star}\right)\right|
=O(\omega+\lambda).
\label{eq:joint-root-stability}
\end{equation}
Thus $|\rho-\rho_\star|=O(\omega+\lambda)$ uniformly in $r>0$.

\paragraph{Training loss.}
The matrix in \eqref{eq:Ugauss} is bounded below by $s^2I_p$, so
\eqref{eq:resolventderiv} gives $0\preceq\partial_zR\preceq R/S$ at
$z=-\lambda$. Also, $D\ge\psi_pm\mu_1^2h_t$ in
\eqref{eq:appendix-denominator} gives $\Kbar\le1/(\mu_1^2h_t)$.
Applying the resolvent bound to the trace \eqref{eq:Kqz} and taking its limit yields
\begin{equation}
0\le\partial_z\Kbar\le\frac{\Kbar}{S}\le\frac1{S h_t\mu_1^2}.
\label{eq:joint-ridge-bound}
\end{equation}
Thus the ridge term in \eqref{eq:appendix-train-loss} is $O(\lambda)$.
By \eqref{eq:ell-positive} and \eqref{eq:joint-root-stability}, replacing
the full fixed-point value $\rho$ by $\rho_\star$ in
$\mathcal M[1/\ell(\varsigma)]$ changes its value
by $O(\omega+\lambda)$. Substituting the last line of
\eqref{eq:joint-leading-reduction} and \eqref{eq:joint-ridge-bound} into
\eqref{eq:appendix-train-loss} therefore gives
\begin{align}
\lim\E[\Ltrain(\Ahat)]
&=R_{\mathrm{train}}(r)+O(\psi_p^{-1}+\psi_n^{-1}+\lambda),
\label{eq:joint-train-error}\\
R_{\mathrm{train}}(r)
&=\mathcal M\Big[\frac{a_t^2\rho_\star\varsigma}{h_t+a_t^2\rho_\star\varsigma}\Big],
\label{eq:joint-train-leading}
\end{align}
uniformly in $r>0$, where $\rho_\star$ in \eqref{eq:joint-train-leading} denotes
the unique root of \eqref{eq:joint-psi-rho} in $(0,1)$.
This proves Proposition~\ref{thm:train}.

\subsection{The scalar \texorpdfstring{$\rho$}{rho}}
\label{sec:app-rho}
In the Gaussian-equivalent model of Lemma~\ref{lem:Ut}, \eqref{eq:Rgauss} separates the
noise-averaged training features $\Gt$ from the remaining terms:
\begin{equation*}
R(\mathbf q,z)^{-1}=M(\mathbf q,z)+\frac{\Gt\Gt\T}{n},\qquad
M(\mathbf q,z):=(s^2-z)\Id{p}+\Wop\Cone(\mathbf q)\WopT .
\end{equation*}
Solve $\Lpen(\mathbf q,z)X=E_4$ in \eqref{eq:pencil}, where $E_4$ has $\Id{n}$ in the
fourth block and zeros elsewhere, eliminating the third, fifth, sixth, second, and first block rows in turn. This gives
\begin{equation*}
\big[\Lpen(\mathbf q,z)^{-1}\big]^{44}
=\Big(\Id{n}+\frac1n\Gt\T M(\mathbf q,z)^{-1}\Gt\Big)^{-1},
\end{equation*}
whose normalized trace has limit $\rho$ by \eqref{eq:scalars}. At
$(\mathbf q,z)=(\mathbf0,-\lambda)$, where $R(\mathbf0,-\lambda)=Q$ and hence
$Q^{-1}=M+\Gt\Gt\T/n$ with $M:=M(\mathbf0,-\lambda)$, set
$H:=n^{-1}\Gt\T Q\,\Gt\in\R^{n\times n}$. The Woodbury identity gives
$\Id{n}-H=(\Id{n}+n^{-1}\Gt\T M^{-1}\Gt)^{-1}$, so
\begin{equation}
\rho=\lim\frac1n\tr(\Id{n}-H)=1-\lim\frac1n\tr H .
\label{eq:rho-df}
\end{equation}
By Lemma~\ref{lem:Ut}, the term of \eqref{eq:lossquadratic} quadratic in $A$ equals
$\frac{h_t}{\dx p}\big[\frac1n\|A\Gt\|_F^2+\tr(AMA\T)\big]$; the other terms are linear in
$A$ or constant and do not affect the hat matrix. This is ridge regression with design
matrix $\Gt\T/\sqrt n$, whose rows are the training samples, and the generalized
(Tikhonov) penalty $\tr(AMA\T)$, where $M$ is the feature covariance induced by the DSM
noise plus the ridge term; its hat matrix is $H$. If
$\kappa_1,\dots,\kappa_n$ are the eigenvalues of $n^{-1}\Gt\T M^{-1}\Gt$, then
$\tr H=\sum_j\kappa_j/(1+\kappa_j)$ is the effective number of degrees of freedom of this
regression. Thus $1-\rho$ is the effective fraction of the $n$ training samples fitted by
the features, and $\rho$ is the fraction left unfitted.

For a linear activation, fitting is limited by the rank of the features: $v_0=0$ gives
$\Gt=\mu_1L$, and $L$ in \eqref{eq:feature} has rank at most $\dop$, so
$\tr H<\operatorname{rank}\Gt\le\dop$ and $1-\rho\le\lim\dop/n=(1+\psi_y)/\psi_n$ for
every width. For $v_0>0$, the component $v_0\Omega$ gives each training sample its own
direction, since $\Omega\T\Omega/p$ is close to $\Id{n}$ for $p\gg n$; this is what allows
$1-\rho\to1$. Finally, the root $\rho_\star$ of \eqref{eq:joint-psi-rho} depends only on
$r$, $s^2$, and $v_0^2$, so the condition enters $R_{\mathrm{train}}$ only through
$\varsigma$; as $s^2\to0$, $\rho_\star\to(1-r)_+$.

\section{Training loss and the large-width limit at a fixed sample ratio}
\label{sec:app-interpretation}
This section relates the training loss to $G_x$
(Appendix~\ref{sec:app-score-form}) and proves the large-width limits at a fixed sample ratio
(Appendix~\ref{sec:app-fixed-n-pv}).
Throughout, fix $t>0$ and $s^2,v_0^2,\mu_1^2>0$. Unless stated otherwise, the identities
below are equivalences $\simeq$ in the sense of Appendix~\ref{sec:proofs}, obtained from the
high-dimensional equivalents of Appendix~\ref{sec:hermite} as
in~\eqref{eq:red-train}--\eqref{eq:red-Gx-traces}, and an inequality between such quantities
is an inequality between their limits $\lim\E[\,\cdot\,]$.

\subsection{Training loss and \texorpdfstring{$G_x$}{G\_x}}
\label{sec:app-score-form}
This subsection relates the training loss to $G_x$ and bounds the
linear part of $\PV$ in Lemma~\ref{lem:main-decomp} by it. Recall from~\eqref{eq:main-Gx} the
Jacobian of the linear component $x_{\Ahat}^{\mathrm{lin}}=G_xx_t+G_yy$ of the denoiser,
\begin{equation*}
G_x=\nabla_{x_t}x_{\Ahat}^{\mathrm{lin}}=\frac1{a_t}\Big(I+h_t\mu_1\frac{\Ahat}{\sqrt p}\Wa\Big)
=\frac1{a_t}\Big(I-\sqrt{h_t}\,\mu_1V\T Q\Wa\Big),
\end{equation*}
where the last equality substitutes the minimizer~\eqref{eq:minimizer}, with $V$ and $Q$ as in
Lemma~\ref{lem:traceid}.

The following identity is exact at finite dimension.

\begin{lemma}
\label{lem:stein-train}
Let $\act$ be Lipschitz, and write
$x_A=(x_t+h_ts_A(x_t,y))/a_t$. Then
\begin{equation*}
\Lridge(\Ahat)=\Ltrain(\Ahat)+h_t\lambda\Anorm
=\frac{a_t}{\dx n}\sum_{i=1}^n\E_\epsilon
\tr\nabla_{x_t}x_{\Ahat}\big(x_{i,t}(\epsilon),y_i\big).
\end{equation*}
\end{lemma}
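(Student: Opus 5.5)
The plan is to combine the first-order optimality condition of the ridge minimizer with Gaussian integration by parts in $\epsilon$. Write $\varphi_i(\epsilon):=\act(\Wop[x_{i,t}(\epsilon);\kappa_dy_i])$, so that $s_A(x_{i,t}(\epsilon),y_i)=A\varphi_i(\epsilon)/\sqrt p$.

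\emph{Step 1: reduce $\Lridge(\Ahat)$ to a cross term.} By the quadratic expansion~\eqref{eq:lossquadratic}, the first-order condition $(U+\lambda I)\Ahat\T/\sqrt p=-h_t^{-1/2}V$ from~\eqref{eq:minimizer} gives $\frac{h_t}{\dx p}\tr[\Ahat(U+\lambda I)\Ahat\T]=-\frac{\sqrt{h_t}}{\dx\sqrt p}\tr(\Ahat V)$. Substituting this into~\eqref{eq:lossquadratic} yields the exact identity
\[
\Lridge(\Ahat)=1+\frac{\sqrt{h_t}}{\dx\sqrt p}\tr(\Ahat V)
=1+\frac{\sqrt{h_t}}{\dx n}\sum_{i=1}^n\E_\epsilon\Big[\epsilon\T\frac{\Ahat}{\sqrt p}\varphi_i(\epsilon)\Big],
\]
using the definition of $V$ in Lemma~\ref{lem:traceid}. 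This agrees with~\eqref{eq:traintraceid}, since $1-\dx^{-1}\tr(V\T QV)=\Ltrain+h_t\lambda\Anorm$ by~\eqref{eq:exact-quantities}.

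\emph{Step 2: Stein's identity.} For fixed $i$, the map $\epsilon\mapsto s_{\Ahat}(a_tx_i+\sqrt{h_t}\epsilon,y_i)$ is Lipschitz because $\act$ is Lipschitz, so it is weakly differentiable with bounded gradient. Gaussian integration by parts then gives
\[
\E_\epsilon[\epsilon\T s_{\Ahat}(x_{i,t}(\epsilon),y_i)]=\sqrt{h_t}\,\E_\epsilon\tr\nabla_{x_t}s_{\Ahat}(x_{i,t}(\epsilon),y_i).
\]
Hence $\Lridge(\Ahat)=1+\frac{h_t}{\dx n}\sum_i\E_\epsilon\tr\nabla_{x_t}s_{\Ahat}$.

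\emph{Step 3: pass to the denoiser.} Since $\nabla_{x_t}x_{\Ahat}=(I+h_t\nabla_{x_t}s_{\Ahat})/a_t$, we have $a_t\tr\nabla_{x_t}x_{\Ahat}=\dx+h_t\tr\nabla_{x_t}s_{\Ahat}$. Dividing by $\dx$ and averaging over $i$ and $\epsilon$ gives the claimed identity.

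The only delicate step is Step 2: justifying Stein's lemma for merely Lipschitz $\act$, where the derivative exists only almost everywhere. Rademacher's theorem together with the standard Sobolev version of Gaussian integration by parts, for functions with polynomially bounded weak gradient, should suffice. The remaining steps are exact algebra.
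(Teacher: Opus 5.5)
Your proposal is correct and follows essentially the same route as the paper. The paper also pairs the stationarity condition with $\Ahat$: it takes $\tr(\Ahat\T\nabla\Lridge(\Ahat))=0$ and splits the squared residual directly, rather than going through the quadratic form~\eqref{eq:lossquadratic} and the closed-form minimizer, to reach $\Lridge(\Ahat)=1+\frac{\sqrt{h_t}}{\dx n}\sum_i\E_\epsilon[\epsilon\T s_{\Ahat}(x_{i,t}(\epsilon),y_i)]$. It then applies Stein's identity and $a_t\nabla_{x_t}x_{\Ahat}=I+h_t\nabla_{x_t}s_{\Ahat}$, exactly as you do, and it uses Stein's identity for Lipschitz $\act$ without the Rademacher/weak-derivative justification you supply.
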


\begin{proof}
Write $\varphi_i:=\act(\Wop[x_{i,t}(\epsilon);\,\kappa_dy_i])$, so that
$s_A(x_{i,t}(\epsilon),y_i)=A\varphi_i/\sqrt p$, and set $s_i:=\Ahat\varphi_i/\sqrt p$ and
$r_i:=\sqrt{h_t}\,s_i+\epsilon$. Since $\varphi_i$ does not depend on $A$, differentiating
\eqref{eq:loss} gives
\begin{equation*}
\nabla\Lridge(A)
=\frac{2\sqrt{h_t}}{\dx n\sqrt p}\sum_{i=1}^n\E_\epsilon\Big[\Big(\sqrt{h_t}\,\frac{A\varphi_i}{\sqrt p}+\epsilon\Big)\varphi_i\T\Big]
+\frac{2h_t\lambda}{\dx p}A .
\end{equation*}
This vanishes at $A=\Ahat$. Taking $\tr(\Ahat\T\,\cdot\,)$ and using
$\tr(\Ahat\T r_i\varphi_i\T)/\sqrt p=r_i\T s_i$ and $\|\Ahat\|_F^2/(\dx p)=\Anorm$,
\begin{equation*}
0=\tr\big(\Ahat\T\nabla\Lridge(\Ahat)\big)
=\frac{2}{\dx n}\sum_{i=1}^n\E_\epsilon\big[r_i\T\sqrt{h_t}\,s_i\big]+2h_t\lambda\Anorm .
\end{equation*}
Since $\Ltrain(\Ahat)=\frac{1}{\dx n}\sum_{i=1}^n\E_\epsilon\|r_i\|^2$, splitting
$\|r_i\|^2=r_i\T\sqrt{h_t}\,s_i+r_i\T\epsilon$ and using $\E_\epsilon\|\epsilon\|^2=\dx$ gives
\begin{equation*}
\Ltrain(\Ahat)+h_t\lambda\Anorm
=\frac{1}{\dx n}\sum_{i=1}^n\E_\epsilon\big[r_i\T\epsilon\big]
=1+\frac{\sqrt{h_t}}{\dx n}\sum_{i=1}^n\E_\epsilon\big[s_i\T\epsilon\big].
\end{equation*}
Since $x_{i,t}(\epsilon)=a_tx_i+\sqrt{h_t}\epsilon$, one has
$\nabla_\epsilon s_i=\sqrt{h_t}\,\nabla_{x_t}s_{\Ahat}(x_{i,t}(\epsilon),y_i)$, so Stein's identity gives
\begin{equation*}
\E_\epsilon\big[s_i\T\epsilon\big]=\E_\epsilon\tr\nabla_\epsilon s_i
=\sqrt{h_t}\,\E_\epsilon\tr\nabla_{x_t}s_{\Ahat}(x_{i,t}(\epsilon),y_i).
\end{equation*}
Finally, $a_t\nabla_{x_t}x_{\Ahat}=I+h_t\nabla_{x_t}s_{\Ahat}$ and $\tr I=\dx$, so
\begin{equation*}
\Ltrain(\Ahat)+h_t\lambda\Anorm
=1+\frac{h_t}{\dx n}\sum_{i=1}^n\E_\epsilon\tr\nabla_{x_t}s_{\Ahat}(x_{i,t}(\epsilon),y_i)
=\frac{a_t}{\dx n}\sum_{i=1}^n\E_\epsilon\tr\nabla_{x_t}x_{\Ahat}(x_{i,t}(\epsilon),y_i).
\qedhere
\end{equation*}
\end{proof}

\begin{lemma}[Linear part of the prediction variance and training loss]
\label{lem:pv-train-bound}
In the proportional limit, with each quantity standing for its limit $\lim\E[\,\cdot\,]$,
\begin{equation*}
\frac{\lambda_{\min}(\Sigma_t)}{a_t^2}\,\Lridge(\Ahat)^2
\;\le\;\frac1\dx\tr\big(G_x\Sigma_tG_x\T\big)\;\le\;\frac{\Lridge(\Ahat)}{a_t^2},
\qquad
\RA\le\Lridge(\Ahat).
\end{equation*}
\end{lemma}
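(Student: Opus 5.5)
The plan is to reduce all three bounds to elementary trace inequalities for one symmetric matrix $M:=a_tG_x$, after showing that its spectrum lies in $[0,1]$.

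\emph{Step 1: reduce to $M$.} By the reduction~\eqref{eq:red-Gx}, which uses $V\simeq\sqrt{h_t}\mu_1\Wa$ from Lemma~\ref{lem:Vt},
\[
M:=a_tG_x\simeq I-P,\qquad P:=h_t\mu_1^2\,\WaT Q\Wa .
\]
Since $Q$ is symmetric positive definite, $P\succeq0$ and $M$ is symmetric in this equivalent form. Lemma~\ref{lem:stein-train} gives $\Lridge(\Ahat)=\Ltrain(\Ahat)+h_t\lambda\Anorm$. The last display of the proof of Lemma~\ref{lem:main-decomp} then gives
\[
\Lridge(\Ahat)\simeq 1-h_t\mu_1^2E_1=\tfrac1\dx\tr M .
\]
The two remaining quantities become
\[
\tfrac1\dx\tr(G_x\Sigma_tG_x\T)=\tfrac{1}{a_t^2\dx}\tr(M\Sigma_tM),\qquad \RA\simeq\tfrac1\dx\tr(\Sigma M).
\]
So it suffices to compare $\tr(M\Sigma_tM)$ and $\tr(\Sigma M)$ with $\tr M$.

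\emph{Step 2: $0\preceq M\preceq I$.} This is the key step. The upper bound $M\preceq I$ is immediate from $P\succeq0$. For $M\succeq0$ I need $P\preceq I$, and I will use the Gaussian-equivalent form of $U$ from Lemma~\ref{lem:Ut}:
\[
U\simeq \Gt\Gt\T/n+h_t\mu_1^2\Wa\WaT+s^2I\succeq h_t\mu_1^2\Wa\WaT .
\]
Write $W:=\sqrt{h_t}\mu_1\Wa$. Then $Q\preceq(WW\T+\lambda I)^{-1}$, and hence
\[
P\preceq W\T(WW\T+\lambda I)^{-1}W=W\T W(W\T W+\lambda I)^{-1}\prec I .
\]
This is where I expect the main difficulty. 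The ordering holds for the Gaussian-equivalent matrix, whereas the statement only concerns limits of expected traces. I would argue that every quantity involved is a normalized trace of the equivalent model, so the inequalities can be proved there and then passed to $\lim\E$. Alternatively, one can work directly with the exact $U$: the off-diagonal noise term $h_t\mu_1^2\Wa\WaT$ arises as a positive-semidefinite Mehler contribution, and the remaining part is itself a positive-semidefinite second moment. The argument needs one of these two routes to go through.

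\emph{Step 3: trace inequalities.} Use $\Sigma_t=a_t^2\Sigma+h_tI\preceq(a_t^2+h_t)I=I$, since $\Sigma\preceq I$ and $a_t^2+h_t=1$. Combined with $0\preceq M\preceq I$, this gives the upper bound
\[
\tr(M\Sigma_tM)\le\tr M^2\le\tr M ,
\]
which is $\frac1\dx\tr(G_x\Sigma_tG_x\T)\le\Lridge/a_t^2$. For the lower bound, $\Sigma_t\succeq\lambda_{\min}(\Sigma_t)I$ together with Cauchy--Schwarz gives
\[
\tr(M\Sigma_tM)\ge\lambda_{\min}(\Sigma_t)\,\tr M^2\ge\lambda_{\min}(\Sigma_t)\,\frac{(\tr M)^2}{\dx}.
\]
Dividing by $a_t^2\dx$ yields the stated lower bound $\frac{\lambda_{\min}(\Sigma_t)}{a_t^2}\Lridge(\Ahat)^2$. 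Finally,
\[
\tr(\Sigma M)=\tr\bigl(M^{1/2}\Sigma M^{1/2}\bigr)\le\tr M,
\]
which gives $\RA\le\Lridge(\Ahat)$. Taking $\lim\E$ of each side completes the argument.
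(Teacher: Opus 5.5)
Your proposal is correct and is essentially the paper's proof. The paper likewise substitutes $V\simeq\sqrt{h_t}\mu_1\Wa$ and the Gaussian equivalent of $U$ from Lemma~\ref{lem:Ut} to get a symmetric $\tilde G_x$ with $0\preceq a_t\tilde G_x\preceq I$. It proves this ordering exactly for the equivalent matrix and then replaces $G_x$ by $\tilde G_x$ inside the normalized traces, which is your first suggested route. It finishes with the same eigenvalue and trace bounds, using $\|\Sigma_t\|_{\mathrm{op}}\le1$, $\|\Sigma\|_{\mathrm{op}}\le1$ and $\lambda_{\min}(\Sigma_t)$.

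The differences are cosmetic. The paper obtains $h_t\mu_1^2\WaT\tilde Q\Wa\preceq I$ from the Woodbury identity $h_t\mu_1^2\WaT\tilde Q\Wa=I-\bigl(I+h_t\mu_1^2\WaT(\Gt\Gt\T/n+(s^2+\lambda)I)^{-1}\Wa\bigr)^{-1}$, rather than from operator monotonicity of the inverse plus the push-through identity. Also, when you pass the lower bound to $\lim\E$, the squared side needs $\E[(\dx^{-1}\tr M)^2]\ge(\E\,\dx^{-1}\tr M)^2$, which holds by Jensen.
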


\begin{proof}
Substituting $V\simeq\sqrt{h_t}\mu_1\Wa$ (Lemma~\ref{lem:Vt}) into $G_x$ gives
\eqref{eq:red-Gx}, and replacing $U$ by its equivalent~\eqref{eq:Ugauss} turns $Q$ into
\begin{equation*}
\tilde Q:=\big(P+h_t\mu_1^2\Wa\WaT\big)^{-1},\qquad P:=\Gt\Gt\T/n+(s^2+\lambda)\Id{p}\succ0 ,
\end{equation*}
with $\Gt$ from~\eqref{eq:Gtdef}; this is the matrix written $Q$ in
Appendix~\ref{sec:app-rho}. Thus $G_x$ is equivalent to
\begin{equation*}
\tilde G_x:=\frac1{a_t}\Big(I-h_t\mu_1^2\WaT\tilde Q\Wa\Big).
\end{equation*}
By the Woodbury identity,
\begin{equation*}
h_t\mu_1^2\WaT\tilde Q\Wa
=h_t\mu_1^2\WaT\big(P+h_t\mu_1^2\Wa\WaT\big)^{-1}\Wa
=I-\big(I+h_t\mu_1^2\WaT P^{-1}\Wa\big)^{-1},
\end{equation*}
which lies between $0$ and $I$ since $\WaT P^{-1}\Wa\succeq0$. Hence, exactly,
\begin{equation}
0\preceq \tilde G_x\preceq a_t^{-1}I .
\label{eq:Gx-bounds}
\end{equation}
In Lemma~\ref{lem:main-decomp}, the linear part of $\PV$, $\RA$, and $\Ltrain$ involve $G_x$
only through the normalized traces
$\dx^{-1}\tr G_x$, $\dx^{-1}\tr(\Sigma G_x)$, and $\dx^{-1}\tr(G_x\Sigma_tG_x\T)$, which are
equivalent to the corresponding traces of $\tilde G_x$, as in~\eqref{eq:red-Gx-traces}.
Replace $G_x$ by $\tilde G_x$ in the traces. By Lemma~\ref{lem:main-decomp},
$\dx^{-1}\tr\tilde G_x\simeq\Lridge(\Ahat)/a_t$, and by
\eqref{eq:Gx-bounds} the eigenvalues $g_j$ of $\tilde G_x$ lie in $[0,a_t^{-1}]$. Hence
$\dx^{-1}\sum_jg_j^2\le a_t^{-1}\dx^{-1}\sum_jg_j$ and
$\dx^{-1}\sum_jg_j^2\ge(\dx^{-1}\sum_jg_j)^2$. The claim follows from
$\lambda_{\min}(\Sigma_t)\,\tilde G_x\tilde G_x\T\preceq \tilde G_x\Sigma_t\tilde G_x\T$ in trace,
$\|\Sigma_t\|_{\mathrm{op}}\le a_t^2+h_t=1$, and $\|\Sigma\|_{\mathrm{op}}\le1$.
\end{proof}

\subsection{Large width at a fixed sample ratio}
\label{sec:app-fixed-n-pv}
This subsection proves Proposition~\ref{prop:main-fixed-n}, in the following more detailed form.

\begin{proposition}[Large width at a fixed sample ratio]
\label{prop:readout-decay}
Fix $\psi_n>0$, $\psi_y\in(0,1]$, $c\in[0,1]$, and $\lambda>0$, and let $\psi_p\to\infty$.
Then $\Ltrain\to0$ and $\Anorm\to0$, so the nonlinear part of $\ME$ vanishes, and
for the proportional-limit quantities of Theorem~\ref{thm:appendix-final},
\begin{equation}
\RA\longrightarrow0,\qquad
\RE\longrightarrow\upsilon_\Sigma^2,\qquad
\PV\longrightarrow0.
\label{eq:fixed-n-pv-limit}
\end{equation}
\end{proposition}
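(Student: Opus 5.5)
The plan is to read all the limits off the fixed-point system \eqref{eq:appendix-fixed-m}--\eqref{eq:appendix-fixed-rho} and the readouts \eqref{eq:appendix-readouts} as $\psi_p\to\infty$ with $\psi_n$ fixed. I would first reuse the a priori bounds from the proof of Proposition~\ref{thm:train}. These are: $\ell(\varsigma)\ge\mu_1^2h_t$; $0\le\beta/(2\delta)\le\psi_y/(\psi_pm)$; the averaged term in \eqref{eq:appendix-fixed-m} is $O((\psi_pm)^{-1})$; and hence \eqref{eq:joint-m-bounds} holds, so $m$ stays in a compact subinterval of $(0,\infty)$. None of these used $\psi_n$ large. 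Multiplying \eqref{eq:appendix-fixed-rho} by $\rho$ and dropping the nonnegative bracketed terms except $v_0^2$ gives $\rho\le\psi_n/(\psi_pmv_0^2)=O(\psi_p^{-1})$. This is the fixed-$\psi_n$ analogue of $\rho_\star\to0$ as $r\to\infty$.

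With $\rho=O(\psi_p^{-1})$, \eqref{eq:appendix-ell} gives $\ell(\varsigma)=\mu_1^2h_t+O(\psi_p^{-1})$, uniformly in $\varsigma$. As in \eqref{eq:joint-leading-reduction}, $D/(\psi_pm)=\ell+O(\psi_p^{-1})$. Therefore
\[
\Kbar\to\frac1{\mu_1^2h_t},\qquad \Kbar_\Sigma\to\frac{\mathcal M[\varsigma]}{\mu_1^2h_t}=\frac{\upsilon_\Sigma^2}{\mu_1^2h_t}.
\]
Inserting these into \eqref{eq:appendix-trGx} and \eqref{eq:appendix-trSG} gives $\lim\E[a_t\dx^{-1}\tr G_x]\to0$ and $\RA\to0$, the latter via Lemma~\ref{lem:main-decomp}.

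For the weight norm, I would show $\partial_z\Kbar=O(\psi_p^{-1})$ by implicit differentiation at $(\mathbf 0,-\lambda)$. Differentiating \eqref{eq:appendix-fixed-m} in $z$ shows that $\partial_z(1/m)=-1+O(\psi_p^{-1})\,\partial_z(m,\rho)$, so $\partial_zm$ is bounded. Differentiating \eqref{eq:appendix-fixed-rho} then gives $\partial_z\rho=O(\psi_p^{-1})$, because $\rho\sim\psi_n/(\psi_pmv_0^2)$ to leading order. Write $\Kbar=\mathcal M[\psi_pm/D]$ with $\psi_pm/D=1/\ell+O(\psi_p^{-1})$. The leading $z$-dependence enters only through $\ell$, i.e.\ through $a_t^2\mu_1^2\rho\varsigma$, and the correction terms are explicit rational functions of $(m,\rho,\psi_p^{-1})$ whose derivatives are again $O(\psi_p^{-1})$. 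This gives $\lim\E[\Anorm]=\mu_1^2\partial_z\Kbar\to0$. Combined with \eqref{eq:appendix-train-loss}, it also gives $\Ltrain\to1-h_t\mu_1^2\cdot(\mu_1^2h_t)^{-1}-0=0$, and hence $\Lridge(\Ahat)\to0$.

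The remaining conclusions then follow from earlier results. By Lemma~\ref{lem:main-decomp}, the nonlinear parts of $\ME$ and $\PV$ are constant multiples of $\Anorm$, so they vanish. By Lemma~\ref{lem:pv-train-bound}, the linear part $\dx^{-1}\tr(G_x\Sigma_tG_x\T)$ is at most $\Lridge/a_t^2\to0$. So $\PV\to0$, and $\RE=\upsilon_\Sigma^2+\PV-2\RA\to\upsilon_\Sigma^2$.

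The main obstacle is the derivative step. The fixed-point solution $(m,\rho)$ must be differentiable in $z$ with derivatives bounded uniformly as $\psi_p\to\infty$. I would handle this by rescaling $\tilde\rho:=\psi_p\rho$ and writing the system in the variables $(m,\tilde\rho,\epsilon=\psi_p^{-1})$. At $\epsilon=0$ the system is nondegenerate: $1/m=S+O(\epsilon)$ and $\tilde\rho=\psi_n/(mv_0^2)+O(\epsilon)$. The implicit function theorem then yields smooth dependence on $(z,\epsilon)$ near $(-\lambda,0)$, which is exactly the uniform derivative control needed.
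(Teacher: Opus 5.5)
Your proof is correct in outline and shares most of its steps with the paper's: the a priori bounds giving $m$ bounded and $\rho\le\psi_n/(\psi_pmv_0^2)$, the limit $\Kbar\to1/(h_t\mu_1^2)$, and Lemma~\ref{lem:pv-train-bound} plus the $\Anorm$-proportional nonlinear parts for $\PV$. You differ from the paper in how you bound $\partial_z\Kbar$.

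The paper never differentiates the fixed-point system. It shows $|\Kbar(z)-1/(h_t\mu_1^2)|=O(\psi_p^{-1})$ uniformly for $z\in[-\lambda,-\lambda/2]$. It then notes that $U\succeq0$ gives $\partial_zK\ge0$ and $\partial_z^2K\ge0$, so $K(\mathbf 0,\cdot)$ and its limit $\Kbar$ are nondecreasing and convex. A secant then gives $0\le\partial_z\Kbar(-\lambda)\le2[\Kbar(-\lambda/2)-\Kbar(-\lambda)]/\lambda=O(\psi_p^{-1})$. This uses only closeness of function values, so it needs no regularity of $(m,\rho)$ in $z$ and no branch selection.

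Your implicit-function route is heavier but yields smooth dependence on $(z,\psi_p^{-1})$. It is the same rescaling the paper later uses for $\partial_{q_{\mathrm m}}\Kbar$ in Proposition~\ref{prop:fixed-n-me}. Reading $\RA\to0$ directly off $\Kbar_\Sigma\to\upsilon_\Sigma^2/(h_t\mu_1^2)$ in \eqref{eq:appendix-trSG} is also valid. It is more direct than the paper's sandwich $0\le\RA\le\Lridge(\Ahat)$ via \eqref{eq:Gx-bounds}.

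One step of your implicit-function argument is wrong as written, though it can be repaired. At $\mathbf q=\mathbf 0$, $\delta=1+\psi_pm\mu_1^2\rho/(2\psi_y)=1+m\mu_1^2\tilde\rho/(2\psi_y)$ stays bounded. So the term $\mu_1^2/(2\delta)$ in \eqref{eq:appendix-fixed-rho} is of order one and survives at $\epsilon=0$; the bracket there is $v_0^2+\mu_1^2/(2\delta)+O(\epsilon)$, not $v_0^2+O(\epsilon)$. Hence $\tilde\rho=\psi_n/(mv_0^2)+O(\epsilon)$ is false. The correct limiting system is $1/m=s^2-z$ together with $m\tilde\rho\bigl(v_0^2+\mu_1^2/(2\delta)\bigr)=\psi_n$, i.e.\ \eqref{eq:me-lim-delta}--\eqref{eq:me-lim-u} with $u=m\tilde\rho$ and $\tilde q=0$.

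For the same reason, your claim $\partial_z(1/m)=-1+O(\psi_p^{-1})\,\partial_z(m,\rho)$ fails in the original variables. The coefficients $\partial_\rho\bigl(\beta/(2\delta)\bigr)=\mu_1^2/(2\delta^2)$ and $\partial_\rho(v_0^2\rho)=v_0^2$ are of order one. The claim holds only after you write $\partial_z\rho=\epsilon\,\partial_z\tilde\rho$.

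The corrected system is still nondegenerate. Its Jacobian in $(m,\tilde\rho)$ at $\epsilon=0$ is triangular, with diagonal entries $-1/m^2$ and $mv_0^2+m\mu_1^2/(2\delta^2)>0$. So the implicit function theorem applies, and your conclusion $\partial_z\Kbar=O(\psi_p^{-1})$ stands.

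You should also say why the solution of Theorem~\ref{thm:appendix-final} is the implicit-function branch. The a priori bounds keep $(m,\tilde\rho)$ in a compact set, and every limit point as $\epsilon\to0$ solves the limiting system. That system has a unique root, so local uniqueness in the implicit function theorem identifies the two solutions.
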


\begin{proof}
Keep $\mathbf q=\mathbf0$ and let $z\in[-\lambda,-\lambda/2]$, so that
$S:=s^2-z\ge s^2+\lambda/2>0$. The argument of~\eqref{eq:joint-m-bounds} applies with this
$S$, so $m$ is bounded away from zero uniformly in $z$, and
by~\eqref{eq:appendix-fixed-rho}
\begin{equation*}
0<\rho\le\frac{\psi_n}{\psi_pmv_0^2}=O(\psi_p^{-1}).
\end{equation*}
The function $\Kbar$ in~\eqref{eq:appendix-readouts} is
\begin{equation*}
\Kbar=\mathcal M\Big[\frac1X\Big],\qquad
X:=\frac2{\psi_pm}+\ell(\varsigma)+a_t^2\frac\beta\delta(1-\varsigma),
\end{equation*}
where $\ell(\varsigma)=\mu_1^2(h_t+a_t^2\rho\varsigma)$ and, by~\eqref{eq:appendix-beta-delta},
$\beta/\delta=\mu_1^2\rho/\delta\le\mu_1^2\rho$. Since $\varsigma\in[0,1]$, this gives
$0\le X-h_t\mu_1^2\le2/(\psi_pm)+a_t^2\mu_1^2\rho=O(\psi_p^{-1})$, and hence
\begin{equation*}
\Big|\Kbar(z)-\frac1{h_t\mu_1^2}\Big|
\le\mathcal M\Big[\frac{X-h_t\mu_1^2}{(h_t\mu_1^2)^2}\Big]=O(\psi_p^{-1})
\end{equation*}
uniformly in $z\in[-\lambda,-\lambda/2]$. In particular, $h_t\mu_1^2\Kbar\to1$ at $z=-\lambda$.

Next, $U\succeq0$, so for $z<0$, \eqref{eq:resolventderiv} gives
$\partial_zK=\dx^{-1}\tr(\WaT R^2\Wa)\ge0$ and $\partial_z^2K=2\dx^{-1}\tr(\WaT R^3\Wa)\ge0$.
Thus $K(\mathbf0,\cdot)$ is nondecreasing and convex on $(-\infty,0)$, and so is its limit
$\Kbar$. By convexity,
\begin{equation*}
0\le\partial_z\Kbar(-\lambda)\le\frac{\Kbar(-\lambda/2)-\Kbar(-\lambda)}{\lambda/2}
=O(\psi_p^{-1}).
\end{equation*}
By \eqref{eq:appendix-train-loss} and
\eqref{eq:appendix-N}, this gives $\Ltrain\to0$ and $\Anorm\to0$.
Hence $\Lridge(\Ahat)\to0$, and Lemma~\ref{lem:pv-train-bound} sends the linear part
of $\PV$ and $\RA$ to zero, where $\RA\ge0$ by \eqref{eq:Gx-bounds}. The nonlinear parts of
$\PV$ and $\ME$ are multiples of $\Anorm$ by Lemma~\ref{lem:main-decomp} and vanish, so
$\PV\to0$, and
$\RE=\upsilon_\Sigma^2+\PV-2\RA\to\upsilon_\Sigma^2$.
\end{proof}

\begin{proposition}[Conditional-mean error at large width]
\label{prop:fixed-n-me}
In the setting of Proposition~\ref{prop:readout-decay}, let $\delta_\infty>1$ be the root of
$\psi_y(\delta-1)(2v_0^2+\mu_1^2/\delta)=\mu_1^2\psi_n$ and
$\kappa:=\psi_y(\delta_\infty-1)^2/(\psi_n\delta_\infty^2)$. Then
\begin{equation}
\ME\longrightarrow\frac{\kappa\,\upsilon_\Sigma^2+c^2\psi_y/\delta_\infty^2}{1-\kappa}
=\frac{\psi_y\upsilon_\Sigma^2}{\psi_n}+O(\psi_n^{-2})
\quad\text{as }\psi_n\to\infty.
\label{eq:fixed-n-me-limit}
\end{equation}
\end{proposition}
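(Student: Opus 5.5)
By Lemma~\ref{lem:main-decomp}, $\ME$ is the linear part $\|\widehat B-B\|_F^2/\dx$ plus $(h_t^2/a_t^2)v_y^2\Anorm$. Proposition~\ref{prop:readout-decay} already gives $\Anorm\to0$, so only the linear part remains. By Theorem~\ref{thm:appendix-final} it equals $-(h_t^2\mu_1^4/a_t^2)\,\partial_{q_{\mathrm m}}\Kbar$ at $(\mathbf q,z)=(\mathbf0,-\lambda)$. The plan is to find the large-$\psi_p$ asymptotics of the fixed point $(m,\rho)$ of \eqref{eq:appendix-fixed-m}--\eqref{eq:appendix-fixed-rho}, together with its first-order response to $q_{\mathrm m}$, and then pass to the limit in $\partial_{q_{\mathrm m}}\Kbar$.

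\textbf{Step 1 (leading order at $\mathbf q=\mathbf0$).} As in the proof of Proposition~\ref{prop:readout-decay}, $m$ stays bounded above and away from zero and $\rho=O(\psi_p^{-1})$. I would therefore use the rescaled variable $\nu:=\psi_pm\rho$, so that $\delta=1+\mu_1^2\nu/(2\psi_y)$. Now multiply \eqref{eq:appendix-fixed-rho} by $\rho$ and use $D\ge\psi_pm\mu_1^2h_t$. The $\mathcal M$ term then contributes $O(\rho)$, while the term $1$ contributes $\rho\to0$. Hence $\nu(v_0^2+\mu_1^2/(2\delta))\to\psi_n$. Substituting $\nu=2\psi_y(\delta-1)/\mu_1^2$ gives exactly $\psi_y(\delta-1)(2v_0^2+\mu_1^2/\delta)=\mu_1^2\psi_n$. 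The left side is increasing in $\delta$ and vanishes at $\delta=1$, so the root $\delta_\infty>1$ is unique. From \eqref{eq:appendix-fixed-m} we also get $m\to1/(s^2+\lambda)$.

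\textbf{Step 2 (implicit differentiation in $q_{\mathrm m}$).} Write $\Kbar=\mathcal M[1/X]$ with $X=2/(\psi_pm)+\ell(\varsigma)+a_t^2(\beta/\delta)(1-\varsigma)$. Then $\partial_{q_{\mathrm m}}\Kbar=-\mathcal M[\partial_{q_{\mathrm m}}X/X^2]$, and $X\to h_t\mu_1^2$. The terms $2/(\psi_pm)$ and $\ell$ have derivatives of order $\psi_p^{-1}$, provided $\partial m$ stays $O(1)$ and $\partial\rho$ stays $O(\psi_p^{-1})$. The only $O(1)$ contribution is therefore $a_t^2(1-\varsigma)\,\partial_{q_{\mathrm m}}(\beta/\delta)$. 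Writing $\beta/\delta=\beta/(1+\psi_pm\beta/(2\psi_y))$ with $\beta=q_{\mathrm m}+\mu_1^2\rho$, the total derivative picks up three pieces:
\begin{itemize}
\item the direct piece $1/\delta^2$;
\item a feedback piece through $\rho$, because the $\rho$-equation forces $\psi_pm\beta$ to stay at its constrained value;
\item a feedback piece through $m$, which is subleading.
\end{itemize}
I would linearize the rescaled $\rho$-equation $\nu(v_0^2+\mu_1^2/(2\delta))=\psi_n$ around $\delta_\infty$, now with $\beta$ in place of $\mu_1^2\rho$ inside $\delta$. I would also linearize \eqref{eq:appendix-fixed-m}, including the $\mathcal M[\cdot]$ terms of order $1/(\psi_pm)$ whose $q_{\mathrm m}$-derivatives are $O(1)$. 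Solving this $2\times2$ linear system should produce the geometric-series factor $1/(1-\kappa)$, with $\kappa=\psi_y(\delta_\infty-1)^2/(\psi_n\delta_\infty^2)$. The numerator should come from averaging $(1-\varsigma)$ and $\varsigma$ with $\mathcal M$: $\mathcal M[1-\varsigma]=c^2\psi_y$ and $\mathcal M[\varsigma]=\upsilon_\Sigma^2$. Multiplying by $-h_t^2\mu_1^4/a_t^2$ and $1/(h_t\mu_1^2)^2$ should then give $(\kappa\upsilon_\Sigma^2+c^2\psi_y/\delta_\infty^2)/(1-\kappa)$.

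\textbf{Main obstacle.} The hardest part is the order bookkeeping in Step 2. Several terms that are $O(\psi_p^{-1})$ at $\mathbf q=\mathbf0$ have $O(1)$ derivatives in $q_{\mathrm m}$, because $\delta$ depends on $\psi_pm\beta$. Dropping them too early loses the $\kappa\upsilon_\Sigma^2$ term. I would first redo the reduction entirely in the rescaled variables $(m,\nu,\delta)$ and the scaled perturbation $\psi_pq_{\mathrm m}$, verify which terms survive, and only then let $\psi_p\to\infty$. The interchange of the limit with $\partial_{q_{\mathrm m}}$ would be justified by uniform bounds, as in the convexity argument of Proposition~\ref{prop:readout-decay}.

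\textbf{Large-$\psi_n$ expansion.} For the final expansion, the defining equation gives $\delta_\infty-1=\mu_1^2\psi_n/(2v_0^2\psi_y)+O(1)$. Hence $c^2\psi_y/\delta_\infty^2=O(\psi_n^{-2})$ and $\kappa=\psi_y/\psi_n+O(\psi_n^{-2})$, which yields $\psi_y\upsilon_\Sigma^2/\psi_n+O(\psi_n^{-2})$.
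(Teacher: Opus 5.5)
Your reduction to $-(h_t^2\mu_1^4/a_t^2)\,\partial_{q_{\mathrm m}}\Kbar$, your Step~1 (your $\nu$ is the paper's $u=\psi_pm\rho$), and your large-$\psi_n$ expansion all match the paper. Step~2, however, rests on a false order claim, and following it loses the main term.

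It is not true that $\partial_{q_{\mathrm m}}\rho=O(\psi_p^{-1})$. Since $q_{\mathrm m}$ enters $\delta$ through $\psi_pm\,q_{\mathrm m}$, the natural perturbation variable is $\tilde q=\psi_pm\,q_{\mathrm m}$, and $u$ responds to $\tilde q$ at rate $O(1)$. With $\rho=u/(\psi_pm)$ this gives $\mu_1^2\partial_{q_{\mathrm m}}\rho\to\mu_1^2\partial_{\tilde q}u=\kappa/(1-\kappa)$.

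Your own ``feedback piece through $\rho$'' presupposes exactly this. You have $\partial_{q_{\mathrm m}}(\beta/\delta)\to(1+\mu_1^2\partial_{q_{\mathrm m}}\rho)/\delta_\infty^2$, and the factor $1/(1-\kappa)$ appears there only because $\mu_1^2\partial_{q_{\mathrm m}}\rho=\kappa/(1-\kappa)$.

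Consequently $\partial_{q_{\mathrm m}}\ell(\varsigma)=a_t^2\mu_1^2\varsigma\,\partial_{q_{\mathrm m}}\rho\to a_t^2\varsigma\,\kappa/(1-\kappa)$ is not negligible. It is precisely the source of the $\kappa\upsilon_\Sigma^2/(1-\kappa)$ term. The term you retain, $a_t^2(1-\varsigma)\,\partial_{q_{\mathrm m}}(\beta/\delta)$, is weighted by $1-\varsigma$, so after $\mathcal M$ it can only produce multiples of $\mathcal M[1-\varsigma]=c^2\psi_y$. Your bookkeeping would therefore give $c^2\psi_y/(\delta_\infty^2(1-\kappa))=O(\psi_n^{-2})$ and miss the leading $\psi_y\upsilon_\Sigma^2/\psi_n$ entirely.

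Your ``main obstacle'' paragraph correctly senses that some term of order $\psi_p^{-1}$ at $\mathbf q=\mathbf0$ has an $O(1)$ derivative, but it puts that term in the wrong equation. The $O(1/(\psi_pm))$ terms of \eqref{eq:appendix-fixed-m} only determine $\partial_{q_{\mathrm m}}m$. That derivative is $O(1)$ and enters $\partial_{q_{\mathrm m}}X$ only at order $1/\psi_p$, so the $m$-equation is needed only to show $\partial_{q_{\mathrm m}}m=O(1)$. The term that matters is $a_t^2\mu_1^2\rho\varsigma$ inside $\ell$.

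The clean way to organize this, as the paper does, is as follows.
\begin{itemize}
\item Set $\varepsilon=1/(\psi_pm)$ and write $X=h_t\mu_1^2+\varepsilon Y$ with $Y=2+a_t^2\mu_1^2u\varsigma+2a_t^2\psi_y(1-1/\delta)(1-\varsigma)$. The limit is then $a_t^{-2}\mathcal M[\partial_{\tilde q}Y]$.
\item The relevant $2\times2$ system is the definition $2\psi_y(\delta-1)=\tilde q+\mu_1^2u$ together with the limiting $\rho$-equation $u(v_0^2+\mu_1^2/(2\delta))=\psi_n$. It is not a system involving the $m$-equation.
\item Solving it gives $2\psi_y\partial_{\tilde q}\delta=1/(1-\kappa)$ and $\mu_1^2\partial_{\tilde q}u=\kappa/(1-\kappa)$.
\item Both the $\varsigma$ part and the $1-\varsigma$ part of $\partial_{\tilde q}Y$ survive, giving the two terms of the numerator.
\end{itemize}
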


\begin{proof}
Throughout, $\mathbf q=(q_{\mathrm m},0)$, $z=-\lambda$, and derivatives in $q_{\mathrm m}$ are
taken at $q_{\mathrm m}=0$.

\textbf{Limiting equations.}
By Proposition~\ref{prop:readout-decay}, the nonlinear part of $\ME$ vanishes, so by
Theorem~\ref{thm:appendix-final} we need the limit of
$-(h_t^2\mu_1^4/a_t^2)\,\partial_{q_{\mathrm m}}\Kbar$. By~\eqref{eq:appendix-readouts}
and~\eqref{eq:appendix-denominator},
\begin{equation*}
\Kbar=\mathcal M\Big[\frac1{X(\varsigma)}\Big],\qquad
X(\varsigma):=\frac{D(\varsigma)}{\psi_pm}=\frac2{\psi_pm}+\ell(\varsigma)+a_t^2\frac\beta\delta(1-\varsigma).
\end{equation*}
Define
\begin{equation*}
\varepsilon:=\frac1{\psi_pm},\qquad u:=\frac\rho\varepsilon=\psi_pm\rho,\qquad
\tilde q:=\frac{q_{\mathrm m}}\varepsilon=\psi_pm\,q_{\mathrm m}.
\end{equation*}
Then, by~\eqref{eq:appendix-beta-delta} and~\eqref{eq:appendix-ell} at $q_{\mathrm r}=0$,
\begin{equation*}
\beta=\varepsilon(\tilde q+\mu_1^2u),\qquad
\delta=1+\frac{\tilde q+\mu_1^2u}{2\psi_y},\qquad
\ell(\varsigma)=h_t\mu_1^2+\varepsilon a_t^2\mu_1^2u\varsigma.
\end{equation*}
Hence
\begin{equation}
2\psi_y(\delta-1)=\tilde q+\mu_1^2u,
\label{eq:me-lim-delta}
\end{equation}
$\beta/\delta=2\psi_y\varepsilon(1-1/\delta)$, and
\begin{equation*}
X(\varsigma)=h_t\mu_1^2+\varepsilon Y(\varsigma),\qquad
Y(\varsigma):=2+a_t^2\mu_1^2u\varsigma+2a_t^2\psi_y\Big(1-\frac1\delta\Big)(1-\varsigma)\ge0.
\end{equation*}
Multiplying~\eqref{eq:appendix-fixed-rho} by $\rho=\varepsilon u$,
\begin{equation*}
1=\varepsilon u+\frac u{\psi_n}\Big[v_0^2+\frac{\mu_1^2}{2\delta}+T\Big],\qquad
T:=a_t^2\mu_1^2\mathcal M\Big[\frac{\varsigma+(1-\varsigma)/\delta^2}{D(\varsigma)}\Big].
\end{equation*}
Here $u\le\psi_n/v_0^2$, and $T=O(\varepsilon)$ since $D=X/\varepsilon\ge h_t\mu_1^2/\varepsilon$.
By~\eqref{eq:joint-m-bounds}, $\varepsilon\to0$ as $\psi_p\to\infty$, so the limiting equation is
\begin{equation}
u\Big(v_0^2+\frac{\mu_1^2}{2\delta}\Big)=\psi_n.
\label{eq:me-lim-u}
\end{equation}

\textbf{The root $\delta_\infty$.}
At $\tilde q=0$, substituting $\mu_1^2u=2\psi_y(\delta-1)$ from~\eqref{eq:me-lim-delta}
into~\eqref{eq:me-lim-u} gives
\begin{equation*}
f(\delta):=\psi_y(\delta-1)\Big(2v_0^2+\frac{\mu_1^2}\delta\Big)=\mu_1^2\psi_n.
\end{equation*}
Since $f(1)=0$ and $f$ is increasing, this has a unique root $\delta_\infty>1$.

\textbf{The derivative of $\Kbar$.}
In the new variables, every term on the right of~\eqref{eq:appendix-fixed-m} other than $s^2+\lambda$ is
$\varepsilon$ times a smooth function of $(u,\delta,\tilde q,\varepsilon)$. Since
$\partial_{q_{\mathrm m}}\tilde q=1/\varepsilon$, this gives $\partial_{q_{\mathrm m}}m=O(1)$, so
$\partial_{q_{\mathrm m}}\varepsilon=O(\varepsilon)$ and
$\varepsilon\,\partial_{q_{\mathrm m}}Y=\partial_{\tilde q}Y+O(\varepsilon)$. Hence
\begin{equation*}
-\frac{h_t^2\mu_1^4}{a_t^2}\,\partial_{q_{\mathrm m}}\Kbar
=\frac{h_t^2\mu_1^4}{a_t^2}\,
\mathcal M\Big[\frac{(\partial_{q_{\mathrm m}}\varepsilon)Y+\varepsilon\,\partial_{q_{\mathrm m}}Y}{X^2}\Big]
\longrightarrow\frac1{a_t^2}\,\mathcal M[\partial_{\tilde q}Y],
\end{equation*}
where, in the limit, $u$ and $\delta$ in $Y$ are functions of $\tilde q$
through~\eqref{eq:me-lim-delta}--\eqref{eq:me-lim-u}, and all derivatives are evaluated at
$\tilde q=0$, where $\delta=\delta_\infty$.

\textbf{The derivatives of $\delta$ and $u$.}
Differentiating~\eqref{eq:me-lim-delta} and~\eqref{eq:me-lim-u} in $\tilde q$,
\begin{equation*}
2\psi_y\partial_{\tilde q}\delta=1+\mu_1^2\partial_{\tilde q}u,\qquad
\partial_{\tilde q}u\Big(v_0^2+\frac{\mu_1^2}{2\delta}\Big)=\frac{\mu_1^2u}{2\delta^2}\,\partial_{\tilde q}\delta.
\end{equation*}
In the second equation, replace $v_0^2+\mu_1^2/(2\delta)$ by $\psi_n/u$ using~\eqref{eq:me-lim-u},
and then $\mu_1^2u$ by $2\psi_y(\delta_\infty-1)$ using~\eqref{eq:me-lim-delta}:
\begin{equation*}
\mu_1^2\partial_{\tilde q}u=\frac{(\mu_1^2u)^2}{2\psi_n\delta_\infty^2}\,\partial_{\tilde q}\delta
=\frac{2\psi_y^2(\delta_\infty-1)^2}{\psi_n\delta_\infty^2}\,\partial_{\tilde q}\delta
=2\psi_y\kappa\,\partial_{\tilde q}\delta.
\end{equation*}
With the first equation, this gives
\begin{equation*}
2\psi_y\partial_{\tilde q}\delta=\frac1{1-\kappa},\qquad
\mu_1^2\partial_{\tilde q}u=\frac\kappa{1-\kappa}.
\end{equation*}

\textbf{The limit.}
Differentiating $Y$ and using these derivatives,
\begin{equation*}
\partial_{\tilde q}Y
=a_t^2\varsigma\,\mu_1^2\partial_{\tilde q}u
+a_t^2(1-\varsigma)\,\frac{2\psi_y\partial_{\tilde q}\delta}{\delta_\infty^2}
=\frac{a_t^2}{1-\kappa}\Big(\kappa\varsigma+\frac{1-\varsigma}{\delta_\infty^2}\Big).
\end{equation*}
Since $\mathcal M[\varsigma]=\upsilon_\Sigma^2$ and $\mathcal M[1-\varsigma]=c^2\psi_y$
by~\eqref{eq:appendix-spectral-average},
\begin{equation*}
\frac1{a_t^2}\,\mathcal M[\partial_{\tilde q}Y]
=\frac{\kappa\,\upsilon_\Sigma^2+c^2\psi_y/\delta_\infty^2}{1-\kappa},
\end{equation*}
which is the first relation in~\eqref{eq:fixed-n-me-limit}.

\textbf{Large $\psi_n$.}
Expanding $f(\delta_\infty)=\mu_1^2\psi_n$ gives
$\delta_\infty=\mu_1^2\psi_n/(2\psi_yv_0^2)+O(1)$, so $1/\delta_\infty=O(\psi_n^{-1})$ and
$\kappa=(\psi_y/\psi_n)(1-1/\delta_\infty)^2=\psi_y/\psi_n+O(\psi_n^{-2})$. Hence
\begin{equation*}
\frac{\kappa\,\upsilon_\Sigma^2+c^2\psi_y/\delta_\infty^2}{1-\kappa}
=\Big(\frac{\psi_y\upsilon_\Sigma^2}{\psi_n}+O(\psi_n^{-2})\Big)\big(1+O(\psi_n^{-1})\big)
=\frac{\psi_y\upsilon_\Sigma^2}{\psi_n}+O(\psi_n^{-2}).
\end{equation*}
\end{proof}

\newpage

\section{Experimental details}
\label{sec:app-exp-details}

\subsection{Theory curves and finite simulations}
\label{sec:app-finite-rfnn-details}

The theory curves in Figure~\ref{fig:width-dependent-errors}
use $\act(u)=\tanh(u)$ in the normalized model of
Section~\ref{sec:rf-score-learning}, the activation of the generation experiment in
Section~\ref{sec:exp-rfnn}.

The dots in Figure~\ref{fig:width-dependent-errors} are finite nonlinear RFNN
simulations of the same model, with $n=32\,d_x$ and the activation, diffusion time and
ridge parameter of the caption, and $d_x=256$ throughout. In the top row $d_y=256$ and
$B=cI$. In the bottom row $c=1$ and $d_y=204$, $240$ and $252$, so that
$\psi_y=51/64$, $60/64$ and $63/64$ as in the generation experiment of
Section~\ref{sec:exp-rfnn}, and $\mathrm{SNR}(\psi_y,1)=d_y/(d_x-d_y)=3.92$, $15$ and $63$;
$U_\star$ selects the first $d_y$ coordinates, and the theory curves use the same $\psi_y$. Each fit uses
$512$ training-noise draws per clean pair, and each test estimate uses $128$ conditions
with $256$ independent draws per condition. Each dot is the mean over five independent realizations of the data,
features and noise. Across $\ME$, $\RE$, $\PV$, and the test loss, all widths and SNRs, the variance across
realizations is at most $8.4\times10^{-7}$, attained by $\ME$ at a value of $0.083$,
and the relative standard deviation has median $0.4\%$; for the training loss the variance is
at most $2.4\times10^{-6}$. The min--max bars are smaller than the dots. The
median relative deviation from Theorem~\ref{thm:main} is below $3\%$ for each of $\ME$,
$\RE$, $\PV$ and $\ME+\RE$ in both rows, where the simulations also reproduce the
non-monotone residual error of super-resolution. The largest deviations are at the widest
models, where $\PV$ is smallest: at $\psi_p=256$ the simulated $\PV$ exceeds
Theorem~\ref{thm:main} by up to $17\%$. These are finite-size corrections: at SNR $16$ and
$\psi_p=128$ in the top row, the simulated $\ME$ exceeds Theorem~\ref{thm:main} by $6.0\%$
and $3.2\%$ at $d_x=128$ and $256$, and $\PV$ by $9.6\%$ and $5.1\%$ (five realizations each), while $\RE$
stays within $0.5\%$; the excess halves when $d_x$ doubles.

\subsection{Random-feature sampler}
\label{sec:app-rfnn-details}

The data of Section~\ref{sec:exp-rfnn} have $\dx=64$ and $n=2048$, so $\psi_n=32$ as in
Figure~\ref{fig:width-dependent-errors}. Gaussian denoising uses $\dy=64$, $B=cI$ and
$\mathrm{SNR}(1,c)\in\{4,16,64\}$; super-resolution uses $c=1$ and $\dy=51$, $60$ and $63$,
so that $\mathrm{SNR}(\psi_y,1)=3.92$, $15$ and $63$. The marginal distribution of $x$,
and hence of $x_t$ at every time, is $\N(0,\Id{\dx})$ throughout, so only the joint
distribution with $y$ changes along the sweep. The sweep starts at $\psi_p=4$; at
$\psi_p=2$ the sampler diverges.
The second-layer weights $A$ of Section~\ref{sec:exp-rfnn} at each diffusion time minimize the
ridge-regularized denoising objective~\eqref{eq:loss} with $\lambda=10^{-5}$ over $64$
Monte Carlo noise draws per training pair, the setting closest to the noise expectation the theory
takes~\citep{george2026}. Sampling is the ordinary ancestral chain: $200$ steps of the
$1000$-step linear schedule the CelebA models also use, run from the end of the schedule,
with no clipping of the $x_0$ estimate. The initial state
is drawn from $\mathcal N(0,I)$, which is the exact marginal distribution of
$x_t$ at every time, without conditioning on $y$. Each point in Figure~\ref{fig:rfnn-width} is one realization of the
data, the features and the noise. The activation is $\tanh$ with the
$1/\sqrt2$ input normalization of Section~\ref{sec:rf-score-learning}.
We evaluate $512$ independent test conditions and all $2048$ training
conditions, with $128$ generations per condition. Generated variance is the
unbiased sample variance across these generations, averaged over conditions;
no across-realization uncertainty bars are shown.

\subsection{CelebA training and evaluation}
\label{sec:app-celeba-details}

The images are $32\times32$ grayscale center crops of CelebA. The denoiser is a
conditional $\epsilon$-prediction U-Net of base width $W$ with channel multipliers
$1,2,4$, two residual blocks per level in the encoder and three in the decoder, and attention on the two coarsest levels, as in
the reference architecture~\citep{bonnaire2025}; the condition enters as a second input
channel, nearest-upsampled to $32\times32$ in super-resolution, and the parameter count
grows as $W^2$, while the architecture does not depend on the condition. Pixel intensities
are in $[0,1]$, the unit of $\sigma$ in denoising, and the noise of each training condition
is drawn once and kept fixed. Unless stated otherwise, every model is trained for $200$k steps on $n=32768$
images, with three seeds per setting.
Training uses Adam with learning rate $3\times10^{-4}$ and $\beta=(0.9,0.999)$, batch
$512$ (microbatch $128$), no weight decay, no EMA and no gradient clipping, and the linear
$1000$-step schedule with $\beta$ from $10^{-4}$ to $0.02$; sampling is the full
$1000$-step ancestral chain. The three seeds per setting change the training block, the
initialization, the batch order and the sampling noise. Memorization is measured on
$4096$ generations for training conditions with the $d_1/d_2<1/3$ criterion, and the variance
on $128$ test conditions with $16$ draws each.

\paragraph{Conditional error.}
For a test condition $y$ with original image $x$, let $\hat x^{(1)},\dots,\hat x^{(K)}$ be the
samples generated for $y$, with mean $\bar x$ and per-pixel sample variance $\hat v$, and let
$\hat\mu_y$ and $\hat\Sigma_y$ be the mean and covariance of the generated distribution. The
conditional mean $\mu_y=\E[x\mid y]$ is unknown for CelebA, so the conditional error compares
$\bar x$ with $x$ instead:
\begin{equation*}
\E\|\bar x-x\|^2=\|\hat\mu_y-\mu_y\|^2+\tr\Sigma_y+\frac1K\tr\hat\Sigma_y ,
\end{equation*}
where $\Sigma_y=\Cov(x\mid y)$. The first term is the conditional-mean error, the counterpart of
$\ME$; the second depends only on the data, not on the model; and the third is the variance left
in a mean of $K$ samples. We use $K=64$ draws for each of the $128$ test conditions and report
$\dx^{-1}\|\bar x-x\|^2$ averaged over conditions.

Figures~\ref{fig:celeba-error-appendix} and~\ref{fig:celeba-width-n8192} compare this
error with three others, at $n=32768$ and $n=8192$. The per-sample error, the average of
$\dx^{-1}\|\hat x^{(k)}-x\|^2$ over the $K$ samples, contains the whole variance
$\tr\hat\Sigma_y$ instead of $\tr\hat\Sigma_y/K$, so it decreases with width as the samples for a
test condition collapse. In super-resolution, the pooled error is the per-sample error between the pooled sample and
the pooled original image, which measures only the directions that the condition observes; in
denoising, where the condition covers every pixel, it would coincide with the per-sample error
and is not shown.
Removing the variance entirely gives the debiased error
\begin{equation*}
\frac1K\sum_{k=1}^K\frac{\|\hat x^{(k)}-x\|^2}{\dx}-\hat v
=\frac{\|\bar x-x\|^2}{\dx}-\frac{\hat v}{K},
\end{equation*}
whose expectation is $\dx^{-1}(\|\hat\mu_y-\mu_y\|^2+\tr\Sigma_y)$. The conditional error and the
debiased error nearly coincide, and for the conditions that are memorized both peak near the
onset of memorization.

\begin{figure}[H]
  \centering
  \includegraphics[width=0.85\textwidth]{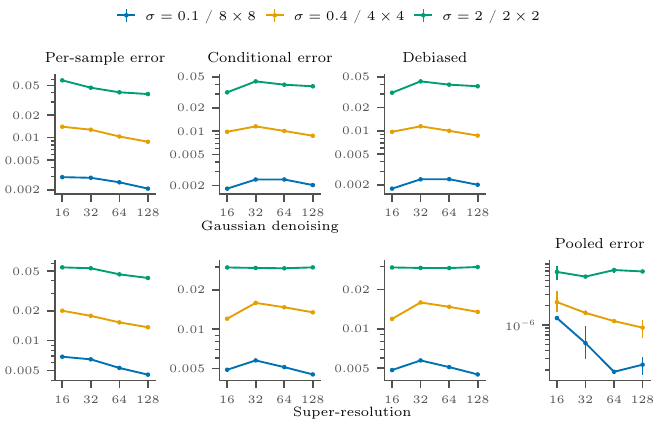}
  \caption{Error metrics at test conditions on CelebA at $n=32768$, as in
  Figure~\ref{fig:celeba-width}: the per-sample error, the conditional error of
  Figure~\ref{fig:celeba-width} (the error of the mean of $K=64$ samples), the debiased error, and
  the pooled error (super-resolution only). Markers are the mean and bars the min--max range over
  three seeds.}
  \label{fig:celeba-error-appendix}
\end{figure}

\section{Additional results}
\label{sec:app-additional-results}

\subsection{Smaller training set}
\label{sec:app-data-sweep}

Figure~\ref{fig:celeba-width-n8192} repeats Figure~\ref{fig:celeba-width} at $n=8192$, a
quarter of the training set; apart from $n$, the setting and the metrics are those of
Appendix~\ref{sec:app-celeba-details}. With fewer training images every transition moves
to a smaller width, and the ordering by condition strength is unchanged. In Gaussian
denoising the memorization fraction at $W=16$ is $0.99$, $0.69$ and $0.18$ for
$\sigma=0.1$, $0.4$ and $2$, against $0.52$, $0.00$ and $0.00$ at $n=32768$, and every
condition memorizes from $W=32$ on. In super-resolution the $8\times8$ and $4\times4$
conditions memorize from $W=32$ on ($0.96$ and $0.45$ at $W=16$), and the $2\times2$
condition reaches $0.75$ at $W=64$ and $1.00$ at $W=128$. The within-condition variance
falls with width in every row, by one to two orders of magnitude from $W=16$ to $W=128$,
except for $\sigma=2$, where it levels off from $W=64$ on, and for the $2\times2$ condition,
where it falls about threefold. For the $2\times2$ condition, whose memorization completes only at $W=128$,
the conditional error is still rising at the largest width considered, consistent with a peak
near the onset of memorization.

\begin{figure}[H]
  \centering
  \includegraphics[width=\textwidth]{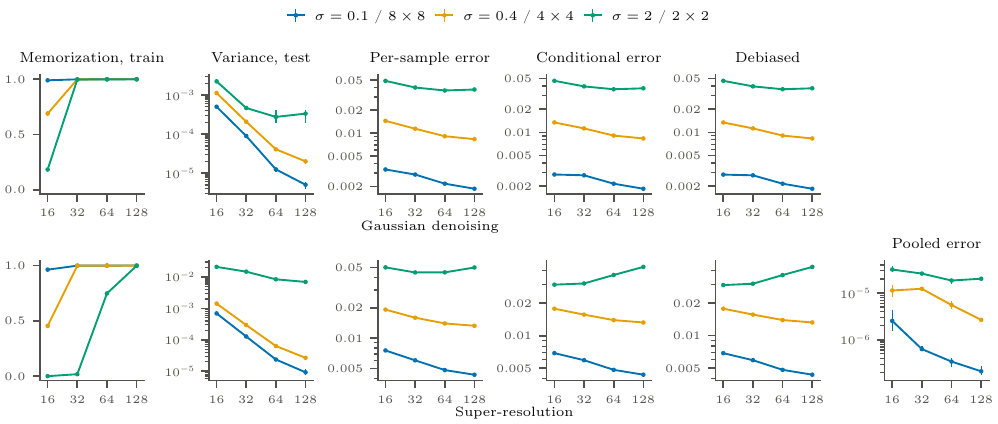}
  \caption{Figure~\ref{fig:celeba-width} at $n=8192$, with the error metrics of
  Figure~\ref{fig:celeba-error-appendix} in the last four columns: full reverse-process
  generation with U-Nets on $32\times32$ CelebA against the width $W\in\{16,32,64,128\}$, every
  model trained for $200$k steps. Markers are the mean and bars the min--max range over three
  seeds.}
  \label{fig:celeba-width-n8192}
\end{figure}

\subsection{Test and training losses over width and condition strength}
\label{sec:app-heatmaps}

Figures~\ref{fig:heatmap-psin8}--\ref{fig:heatmap-psin256} evaluate
Theorem~\ref{thm:main} over the full range $10^{-1}\le\mathrm{SNR}\le10^{3}$ of both
specializations, at $t=0.01$, $\act(u)=\tanh(u)$, $\lambda=10^{-5}$ and
$\psi_n\in\{8,32,64,256\}$. In each figure the top row is Gaussian denoising, $\psi_y=1$,
against $\mathrm{SNR}(1,c)$, and the bottom row is super-resolution, $c=1$, against
$\mathrm{SNR}(\psi_y,1)$. The columns are those of Figure~\ref{fig:width-dependent-errors}; the
first three share a logarithmic color scale, the white dashed line is $\psi_p=\psi_n$, and dotted
lines are level sets. In every figure the training loss falls with width around
$\psi_p=\psi_n$, and its level sets move to smaller widths as the SNR grows.

\begin{figure}[H]
  \centering
  \includegraphics[width=\textwidth]{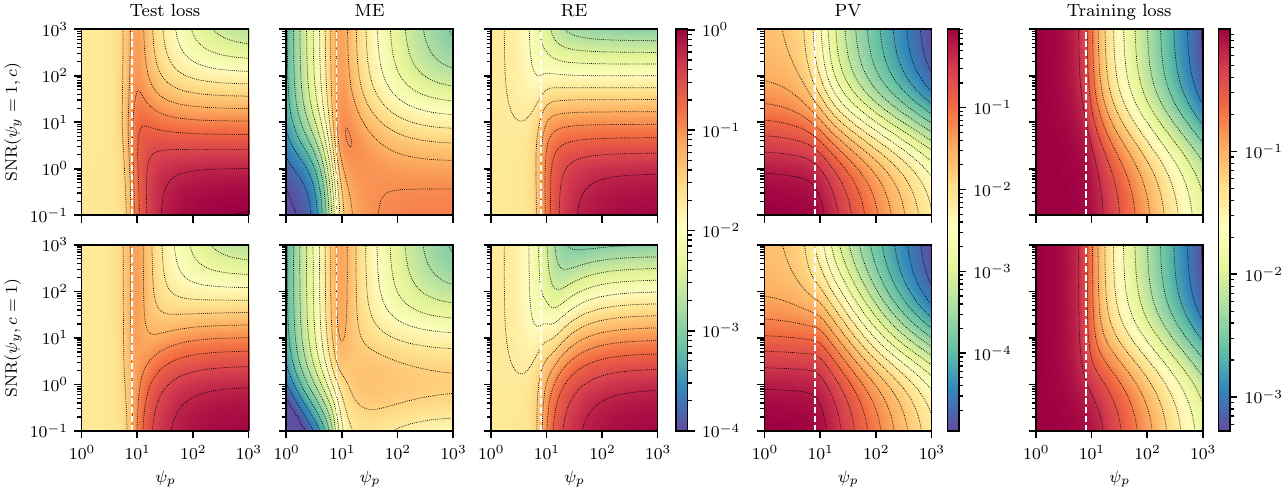}
  \caption{Test loss $(h_t/a_t^2)\Ltest=\ME+\RE$, $\ME$, $\RE$, $\PV$ and $\Ltrain$ of Theorem~\ref{thm:main} at
  $\psi_n=8$; settings as in Appendix~\ref{sec:app-heatmaps}.}
  \label{fig:heatmap-psin8}
\end{figure}

\begin{figure}[H]
  \centering
  \includegraphics[width=\textwidth]{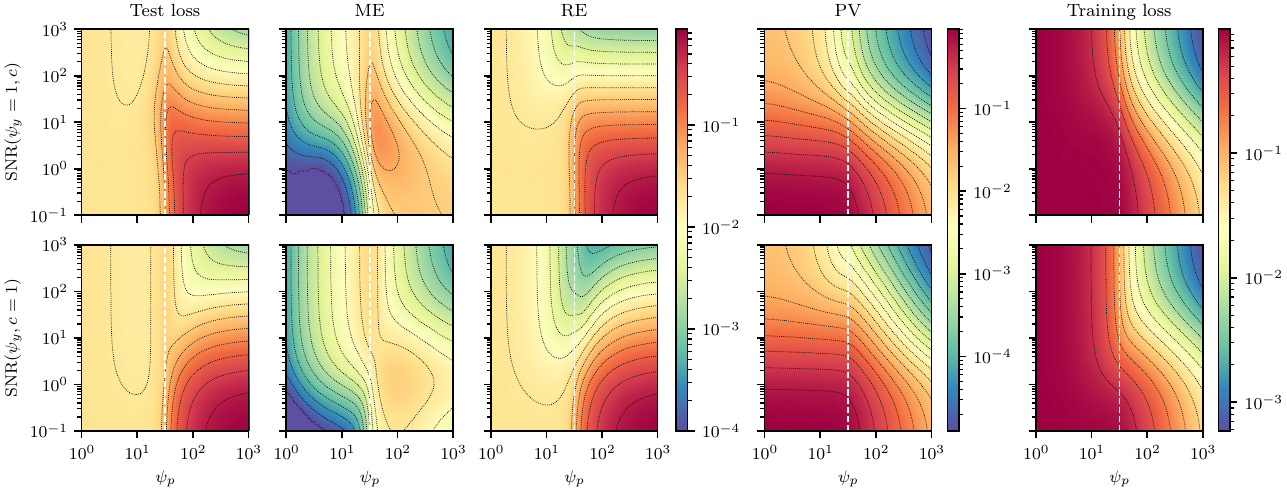}
  \caption{Test loss $(h_t/a_t^2)\Ltest=\ME+\RE$, $\ME$, $\RE$, $\PV$ and $\Ltrain$ of Theorem~\ref{thm:main} at
  $\psi_n=32$, as in Figures~\ref{fig:width-dependent-errors} and~\ref{fig:rfnn-width};
  settings as in Appendix~\ref{sec:app-heatmaps}.}
  \label{fig:heatmap-psin32}
\end{figure}

\begin{figure}[H]
  \centering
  \includegraphics[width=\textwidth]{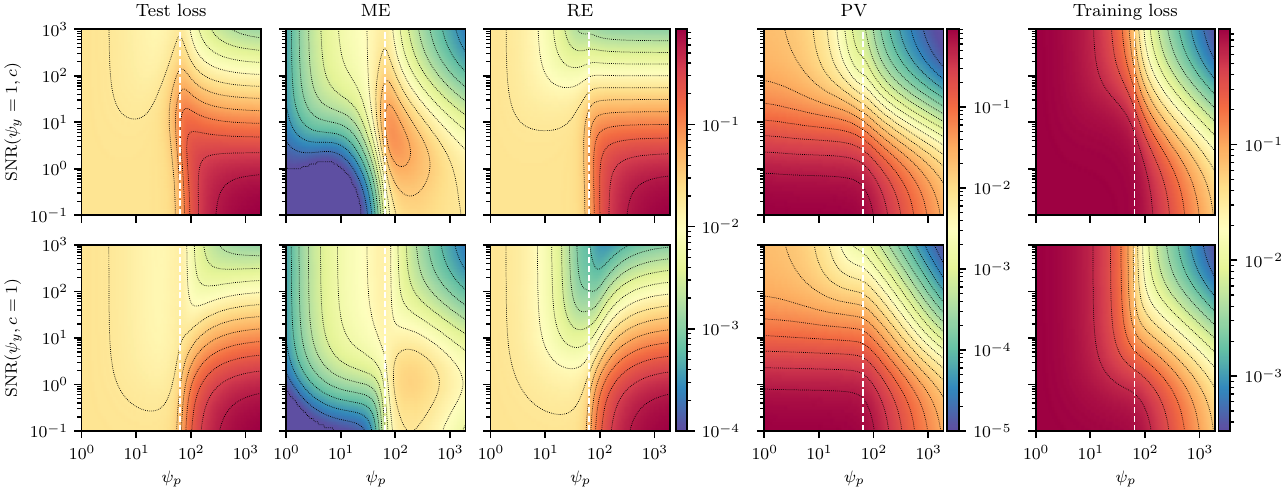}
  \caption{Test loss $(h_t/a_t^2)\Ltest=\ME+\RE$, $\ME$, $\RE$, $\PV$ and $\Ltrain$ of Theorem~\ref{thm:main} at
  $\psi_n=64$; settings as in Appendix~\ref{sec:app-heatmaps}.}
  \label{fig:heatmap-psin64}
\end{figure}

\begin{figure}[H]
  \centering
  \includegraphics[width=\textwidth]{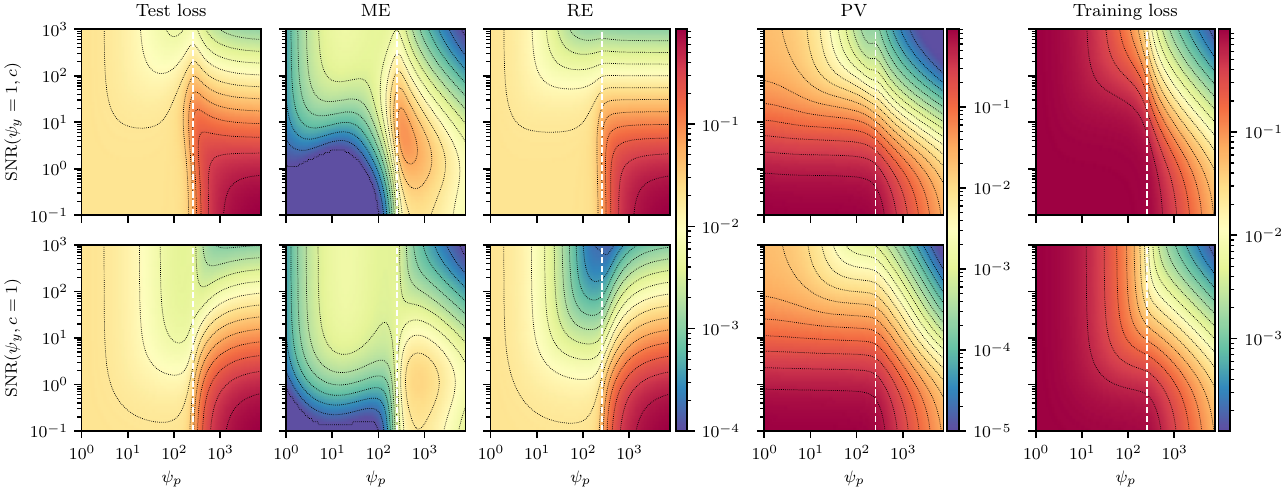}
  \caption{Test loss $(h_t/a_t^2)\Ltest=\ME+\RE$, $\ME$, $\RE$, $\PV$ and $\Ltrain$ of Theorem~\ref{thm:main} at
  $\psi_n=256$; settings as in Appendix~\ref{sec:app-heatmaps}.}
  \label{fig:heatmap-psin256}
\end{figure}

Figures~\ref{fig:heatmap-t0001}--\ref{fig:heatmap-t1} repeat
Figure~\ref{fig:heatmap-psin32} at $t\in\{0.001,0.1,1\}$. A smaller $t$ sharpens the peak
near $\psi_p=\psi_n$, and a larger $t$ smooths it; at $t=1$ the errors no longer peak there.

\begin{figure}[H]
  \centering
  \includegraphics[width=\textwidth]{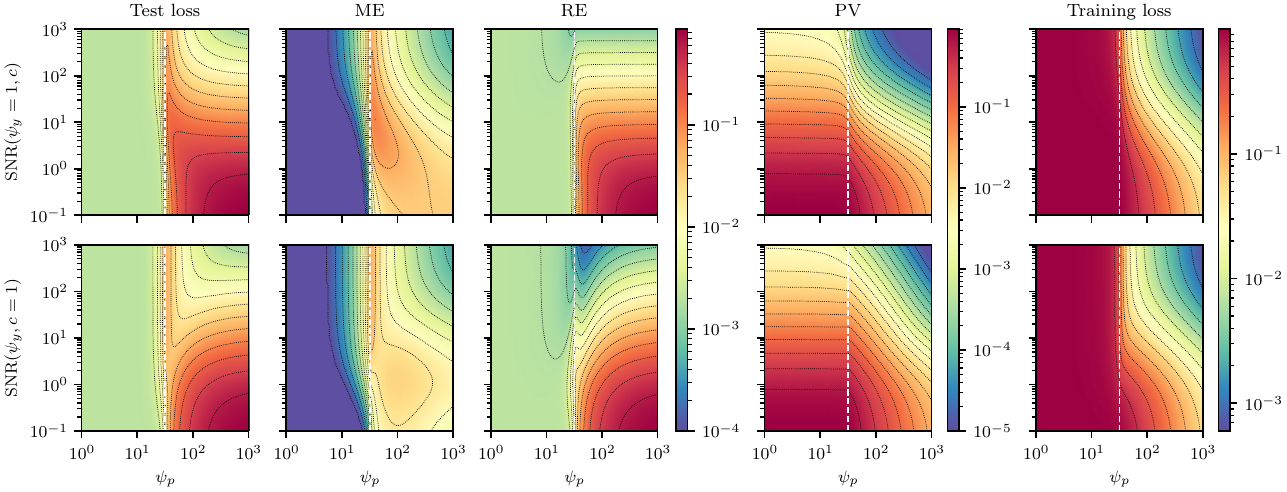}
  \caption{As Figure~\ref{fig:heatmap-psin32} ($\psi_n=32$) at $t=0.001$.}
  \label{fig:heatmap-t0001}
\end{figure}

\begin{figure}[H]
  \centering
  \includegraphics[width=\textwidth]{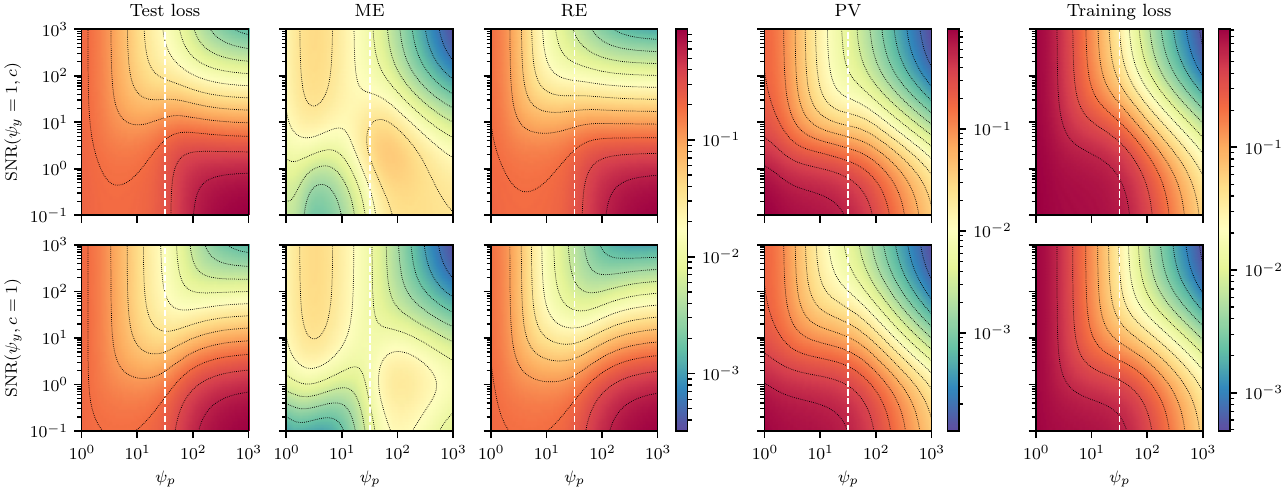}
  \caption{As Figure~\ref{fig:heatmap-psin32} ($\psi_n=32$) at $t=0.1$.}
  \label{fig:heatmap-t01}
\end{figure}

\begin{figure}[H]
  \centering
  \includegraphics[width=\textwidth]{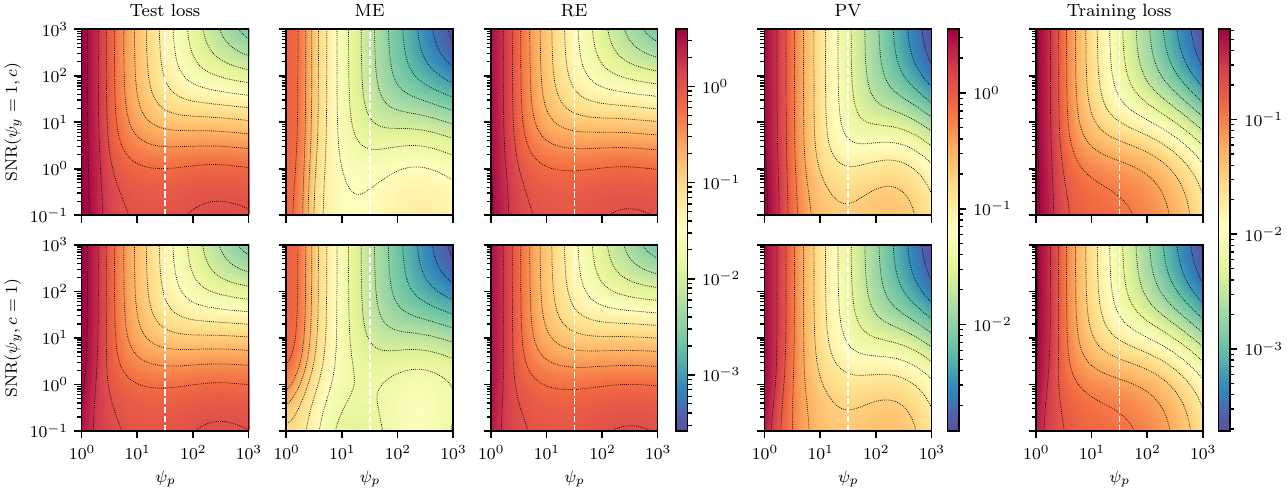}
  \caption{As Figure~\ref{fig:heatmap-psin32} ($\psi_n=32$) at $t=1$.}
  \label{fig:heatmap-t1}
\end{figure}

Figure~\ref{fig:heatmap-condition} varies $c$ and $\psi_y$ jointly at $\psi_n=256$ and
$\psi_p/\psi_n\in\{2,8,32,64\}$, with the other settings as above. Its axes are the SNRs of
the two specializations, $\mathrm{SNR}(1,c)=c^2/(1-c^2)$ and $\mathrm{SNR}(\psi_y,1)=\psi_y/(1-\psi_y)$.
At every width the training loss decreases in both $c$ and $\psi_y$, but it becomes small only
when both are large, that is, when $\mathrm{SNR}(\psi_y,c)$ is large, as in
Proposition~\ref{thm:train}. It agrees with $R_{\mathrm{train}}(\psi_p/\psi_n)$
in~\eqref{eq:Rtrain} to within $2\times10^{-3}$.

\begin{figure}[H]
  \centering
  \includegraphics[width=\textwidth]{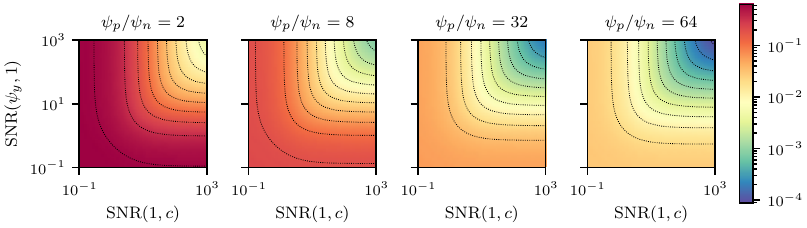}
  \caption{Training loss $\Ltrain$ of Theorem~\ref{thm:main} over $\mathrm{SNR}(1,c)$ and
  $\mathrm{SNR}(\psi_y,1)$ for $\psi_p/\psi_n\in\{2,8,32,64\}$, at $\psi_n=256$, $t=0.01$,
  $\act(u)=\tanh(u)$ and $\lambda=10^{-5}$. The panels share a logarithmic color scale, and
  dotted lines are level sets.}
  \label{fig:heatmap-condition}
\end{figure}

\subsection{Additional CelebA samples}
\label{sec:app-celeba-samples}

Figures~\ref{fig:celeba-condition-appendix}--\ref{fig:celeba-denoise-width-appendix} show
samples at $n=32768$, the setting of Figure~\ref{fig:celeba-width}, for the same six training
and six test conditions in every figure. Figures~\ref{fig:celeba-condition-appendix}
and~\ref{fig:celeba-denoise-appendix} vary the condition at $W=32$, where $91\%$, $19\%$ and
$0\%$ of the samples for training conditions are memorized for the $8\times8$, $4\times4$ and
$2\times2$ conditions, and $97\%$, $45\%$ and $3\%$ for $\sigma=0.1$, $0.4$ and $2$.
Figures~\ref{fig:celeba-qualitative-appendix} and~\ref{fig:celeba-denoise-width-appendix} vary
the width $W\in\{16,32,64\}$ with the strongest condition, the $8\times8$ condition and
$\sigma=0.1$, where these fractions are $12\%$, $91\%$ and $99\%$, and $52\%$, $97\%$ and
$99\%$. A memorized sample for a training condition is its source image, which is then also
the nearest training image. Figure~\ref{fig:celeba-teaser} uses $n=8192$ and $W=16$.

\clearpage
\begin{figure}[H]
  \centering
  \includegraphics[width=0.9\textwidth]{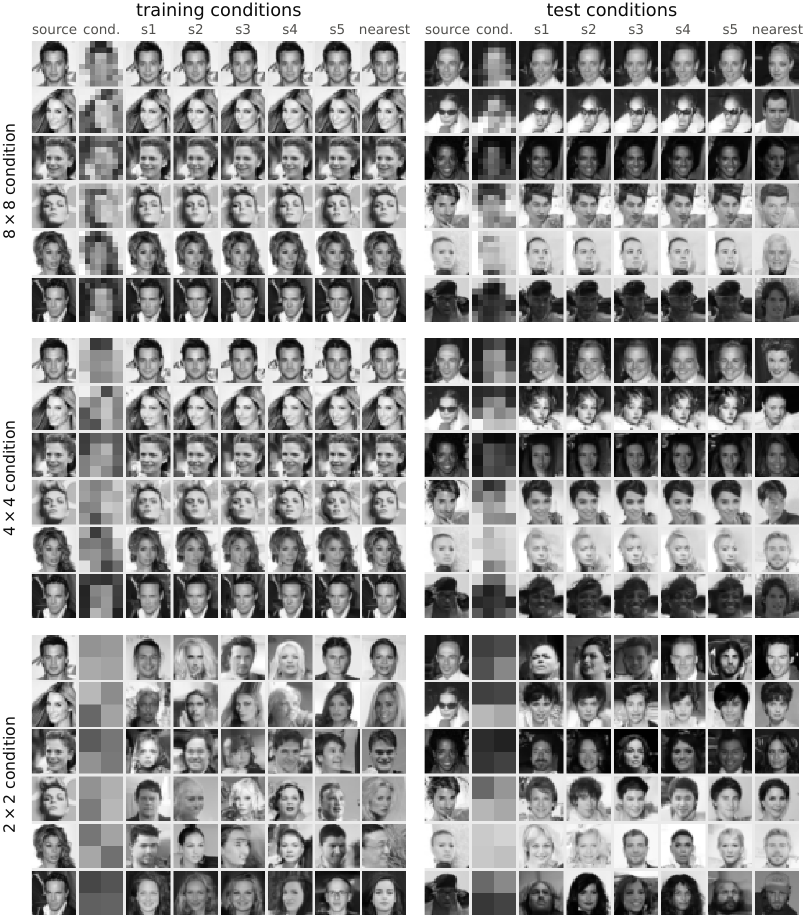}
  \caption{Samples at $n=32768$ and $W=32$ ($200$k steps) for super-resolution with the
  $8\times8$, $4\times4$ and $2\times2$ conditions. Each row is one condition: the source image,
  the condition made from it (nearest-upsampled), five samples generated from it, and the
  training image nearest to any of them. \textbf{Left:} training conditions; \textbf{right:}
  test conditions.}
  \label{fig:celeba-condition-appendix}
\end{figure}

\begin{figure}[H]
  \centering
  \includegraphics[width=0.9\textwidth]{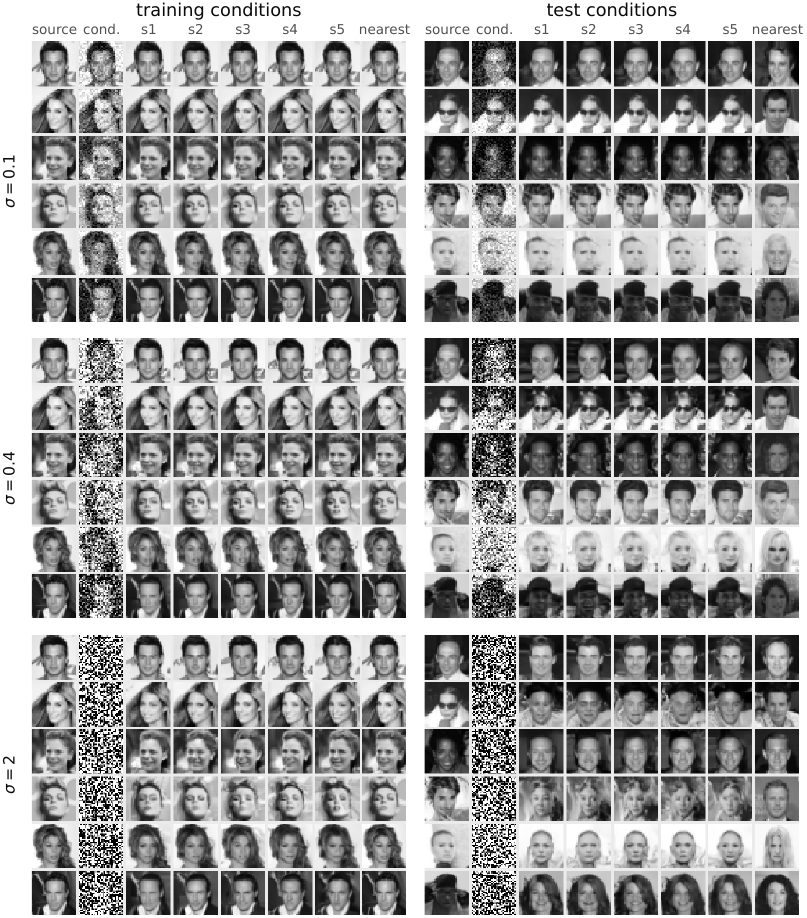}
  \caption{Samples at $n=32768$ and $W=32$ ($200$k steps) for Gaussian denoising with
  $\sigma=0.1$, $0.4$ and $2$; the noisy condition is clipped to $[0,1]$ for display. Rows and
  columns as in Figure~\ref{fig:celeba-condition-appendix}.}
  \label{fig:celeba-denoise-appendix}
\end{figure}

\begin{figure}[H]
  \centering
  \includegraphics[width=0.9\textwidth]{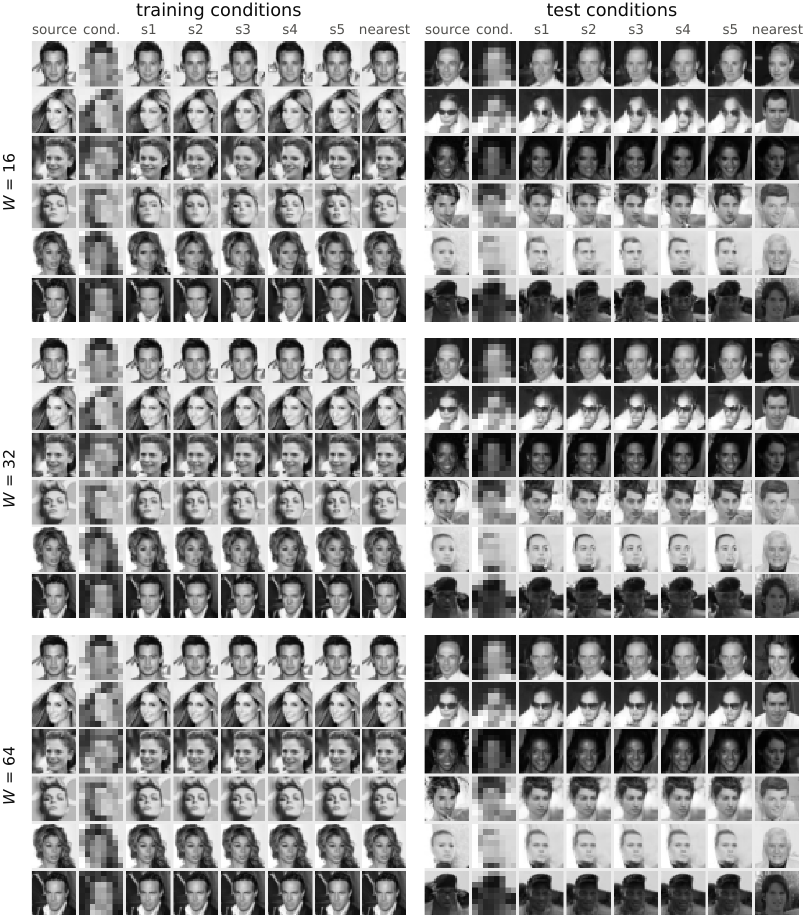}
  \caption{Samples at $n=32768$ for $W=16$, $32$ and $64$ ($200$k steps) for super-resolution
  with the $8\times8$ condition. Rows and columns as in
  Figure~\ref{fig:celeba-condition-appendix}.}
  \label{fig:celeba-qualitative-appendix}
\end{figure}

\begin{figure}[H]
  \centering
  \includegraphics[width=0.9\textwidth]{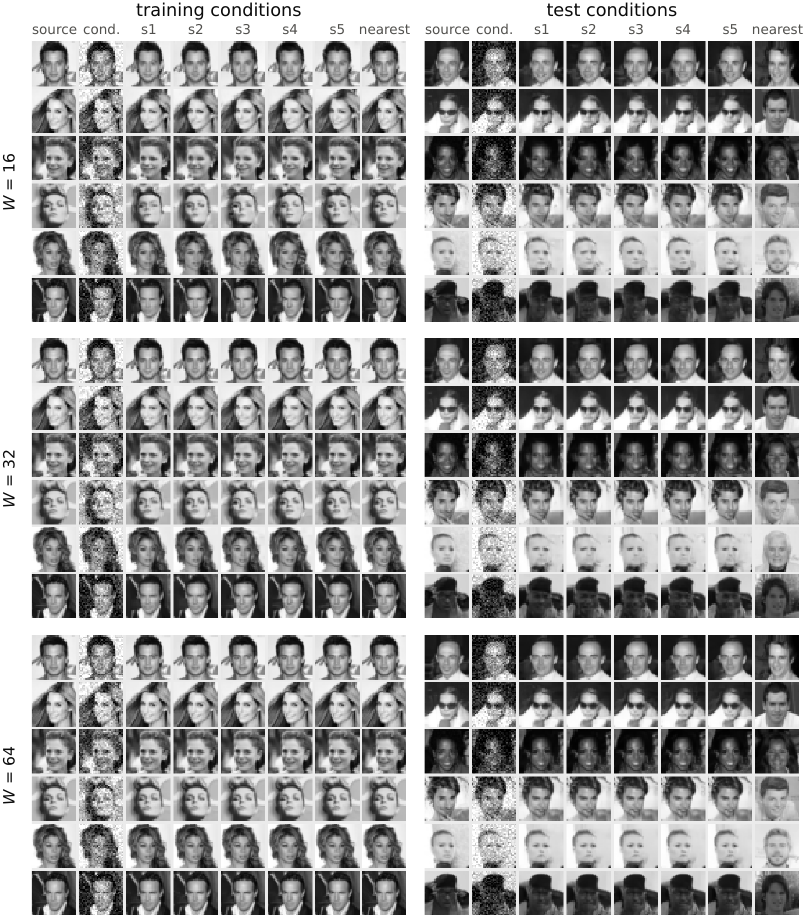}
  \caption{Samples at $n=32768$ for $W=16$, $32$ and $64$ ($200$k steps) for Gaussian
  denoising with $\sigma=0.1$. Rows and columns as in
  Figure~\ref{fig:celeba-condition-appendix}.}
  \label{fig:celeba-denoise-width-appendix}
\end{figure}

\section{AI use statement}
\label{sec:ai-use}
In this work, we used generative AI tools to assist in the
writing of proofs, design and provide feedback on the experiments, implement methods, assist with
translation, clean and reformat datasets, and interpret results. We have not used generative AI
tools to formulate mathematical claims, to generate synthetic data sets, to help develop theoretical models or conceptual
frameworks, to provide critical ingredients for proving mathematical claims, or to propose or
refine hypotheses, and qualitative and
thematic data analysis is not applicable to this work. Additionally, we used generative AI tools
to create and modify scientific figures, to create and edit software code, and to edit the paper
to improve readability. We have reviewed all AI-assisted work:
the authors checked all AI-assisted derivations, proofs, and code. We take responsibility for the final content
of this work, including text, claims, or artifacts produced with the aid of generative AI.

\end{document}